\documentclass[10pt]{article}
\usepackage[preprint]{tmlr}
\usepackage{amsmath,amsfonts,bm}

\def\eqref#1{equation~\ref{#1}}
\def\1{\bm{1}}

\DeclareMathAlphabet{\mathsfit}{\encodingdefault}{\sfdefault}{m}{sl}
\SetMathAlphabet{\mathsfit}{bold}{\encodingdefault}{\sfdefault}{bx}{n}

\usepackage{amsmath}
\usepackage{amssymb}
\usepackage{amsfonts}
\usepackage{mathtools}
\usepackage{amsthm}
\usepackage{bm}

\usepackage{graphicx}
\usepackage{subcaption}
\usepackage{float}
\usepackage{booktabs}
\usepackage{array}
\usepackage{tabularx}
\usepackage{multirow}
\usepackage{makecell}
\usepackage{longtable}
\usepackage{adjustbox}
\usepackage{siunitx}
\usepackage{placeins}

\usepackage{algorithm}
\let\TMLRauthorAND\AND
\let\AND\relax
\usepackage{algorithmic}
\let\AND\TMLRauthorAND

\usepackage{tikz}
\usetikzlibrary{arrows.meta,positioning,calc,fit,backgrounds,shapes.geometric,shadows}

\usepackage{textcomp}
\usepackage{url}
\usepackage{verbatim}
\usepackage{xcolor}
\usepackage{enumitem}
\usepackage{microtype}
\usepackage{hyperref}
\usepackage[capitalize,noabbrev]{cleveref}

\newtheorem{theorem}{Theorem}[section]
\newtheorem{proposition}[theorem]{Proposition}
\newtheorem{lemma}[theorem]{Lemma}

\theoremstyle{definition}
\newtheorem{definition}[theorem]{Definition}

\newtheorem{remark}[theorem]{Remark}

\providecommand{\exptabstretch}{1.10}
\providecommand{\exptabcolsep}{3.0pt}

\providecommand{\ind}{\mathbf{1}}

\providecommand{\proofstep}[1]{\par\smallskip\noindent\textit{#1.}\ }

\newcolumntype{C}[1]{>{\centering\arraybackslash}m{#1}}

\title{Residual-Controlled Multiplier Learning for Stochastic Constrained Decision-Making}

\author{%
\name Kang Liu \email kanyo@foxmail.com \\
\addr School of Future Technology\\
Xi'an Jiaotong University\\
Xi'an, China, 710049
\AND
\name Jianchen Hu\thanks{Corresponding author.} \email horace89@xjtu.edu.cn \\
\addr School of Automation Science and Engineering\\
Xi'an Jiaotong University\\
Xi'an, China, 710049
\AND
\name Ziyu Qu \email qzxy@cug.edu.cn \\
\addr School of Mathematics and Physics\\
China University of Geosciences\\
Wuhan, China, 430074
\AND
\name Edward Hengzhou Yan \email 23124924r@connect.polyu.hk \\
\addr Department of Industrial and Systems Engineering\\
The Hong Kong Polytechnic University\\
Hong Kong, China, 999077
\AND
\name Lun Yang \email yanglun@xjtu.edu.cn \\
\addr School of Automation Science and Engineering\\
Xi'an Jiaotong University\\
Xi'an, China, 710049
\AND
\name Meng Zhang \email mengzhang2009@xjtu.edu.cn \\
\addr School of Cyber Science and Engineering\\
Xi'an Jiaotong University\\
Xi'an, China, 710049
}

\def\month{MM}
\def\year{YYYY}
\def\openreview{\url{https://openreview.net/forum?id=XXXX}}

\begin{document}
\maketitle

\begin{abstract}
Stochastic constrained decision-making requires optimizing performance objectives while enforcing statistical requirements such as safety or fairness. However, standard primal--dual methods struggle to update multipliers robustly under stochastic mini-batch feedback, as the noise of mini-batch gradients and constraint estimates can be directly accumulated into the multiplier memory. To address this issue, we propose Residual-Controlled Multiplier Learning (RCML), which reformulates multiplier updating as projected-pressure feedback. The central idea is to decompose the projected multiplier into an effective pressure signal for primal descent and a pressure-memory residual for finite-gain multiplier tracking. To handle heterogeneous and noisy observations, we further augment this residual-integral backbone with modular stochastic stabilization components. For the convex-affine backbone, we establish finite-gain convergence, derive a stochastic residual bound under mini-batch feedback, and show that the residual feedback law admits a local KKT-residual interpretation near regular KKT points of nonconvex problems. Experiments across optimization, allocation, and fair-ranking tasks show that RCML improves feasibility control and multiplier stability while maintaining competitive objective performance. Code is released at \url{https://anonymous.4open.science/r/RCML-3114/}.
\end{abstract}

\section{Introduction}
\label{sec:introduction}

Stochastic constrained decision-making arises in many data-driven systems \cite{goh2016satisfying,chamon2020pac} where performance objectives must be optimized under statistical constraints (safety limits~\citep{tessler2019rcpo,stooke2020responsive}, fairness requirements~\citep{agarwal2018reductions,cotter2019training}, risk budgets, etc.). Primal--dual methods provide a natural algorithmic interface for solving the stochastic constrained optimization problems~\citep{xu2020primaldual,jin2022stochasticpd}, which update the primal decision variable while maintaining nonnegative multipliers that encode the pressure of constraint violations~\citep{bertsekas1982constrained,nocedal2006numerical}. Unfortunately, the multiplier update is fragile in the stochastic case because the algorithm only observes noisy mini-batch estimates of the constraints~\citep{lan2020function_constraints,zhang2021stochalm}. The major difficulty is how the stored multiplier memory should respond to noisy and varying constraint information.

Raw constraint violation signals are inherently insufficient for stochastic dual adaptation, as they fail to simultaneously capture constraint activation and multiplier dissipation. Specifically, while a signed-violation signal can actively decrease the multiplier when a constraint becomes feasible, it invariably propagates stochastic noise from inactive constraints into the dual dynamics. A positive-violation signal successfully suppresses inactive noise channels, but it loses the capacity to actively reduce the multiplier upon feasibility recovery, causing accumulated multiplier memory to become permanently stale. Although projected augmented-Lagrangian methods reinforce feasibility via a projected multiplier~\citep{rockafellar1976augmented,murtagh1982projected}, their standard full-replacement update introduces severe multiplier fluctuations when the underlying projected pressure is estimated under noisy mini-batch feedback~\citep{li2024stochasticialm,na2023stosqp}.

To address this issue, we propose Residual-Controlled Multiplier Learning (RCML), a projected-pressure feedback framework for stochastic constrained decision-making.
The central idea is to separate the projected-pressure mechanism into two coupled signals: 1) an effective pressure signal for the primal update; 2) a pressure-memory residual for multiplier adaptation.
In this way, the multiplier is no longer updated by directly accumulating raw constraint violations or by fully replacing the stored state with the projected pressure.
Instead, RCML lets the stored multiplier track the projected pressure through a finite-gain residual feedback law, which naturally supports violation activation, stale-memory release, and inactive-constraint dead-zone behaviors.

The proposed method is developed in three steps. 1) we identify the pressure-memory residual induced by the inequality augmented Lagrangian and show that its zero set exactly captures feasibility and complementarity; 2) we use this residual as the input of a stochastic primal--dual learning rule, where the projected pressure drives the primal step and the residual controls multiplier memory; 3) we equip this residual-integral backbone with stabilization modules, including constraint filtering, adaptive coordinate-wise scaling, and residual-\(\nu\)PI correction, to handle noisy and heterogeneous mini-batch constraint feedback. This control-oriented view is related to primal--dual dynamics and feedback interpretations of Lagrange multipliers~\citep{feijer2010stability,cherukuri2016asymptotic}, as well as recent I/PI multiplier updates~\citep{stooke2020responsive,sohrabi2024pi}.

We evaluate RCML on stochastic constrained optimization problems, stochastic energy-reserve allocation, nonconvex pricing-inventory allocation, and neural fair ranking, which follows exposure-based ranking fairness and cumulative-gain ranking evaluation~\citep{singh2018fairness,jarvelin2002cumulated}.
The experiments examine not only feasibility and objective quality, but also multiplier-memory behavior under noisy constraint feedback.

%%%modified
Our main contributions are as follows.
\begin{itemize}
    \item We propose a pressure--memory feedback mechanism for multiplier updates. By driving the primal step with the projected pressure and controlling the multiplier dynamics through the induced residual, the mechanism enables a single signal to naturally combine violation activation, stale-memory release, and an inactive-constraint dead-zone.

    \item We design a stochastic primal--dual algorithm that integrates finite-gain multiplier tracking with modular stabilization components. Under noisy and heterogeneous feedback, RCML achieves substantially lower multiplier fluctuation and smaller expected constraint violation than standard primal--dual updates.

    \item We establish finite-gain convergence of the deterministic convex-affine backbone and derive stochastic residual bounds for the mini-batch setting. These results reveal how the residual tracking law preserves primal--dual Lyapunov cancellation, and they quantify the roles of filtering, adaptive scaling, and dynamic correction inside the projected-pressure feedback loop.
\end{itemize}

\section{Related Work}
\label{sec:related_work}

This section reviews prior work along the three ingredients that motivate RCML.
First, constrained learning explains why multiplier states are needed: statistical requirements such as safety and fairness cannot usually be enforced by a fixed penalty weight, so the tradeoff between objective descent and constraint satisfaction must be adjusted dynamically.
Second, stochastic constrained optimization explains why this adjustment is difficult: mini-batch constraint feedback is noisy, and therefore multiplier memory can become unstable when it is driven by raw violation signals or direct projected replacement.
Third, feedback-control interpretations of multiplier dynamics explain where RCML intervenes: once the multiplier is viewed as a memory state in a feedback loop, the central design question becomes what signal should drive this memory.

\paragraph{Constrained learning and dynamic multiplier states}
Many data-driven decision systems must optimize a prediction or control objective while satisfying statistical requirements.
Safety and reliability requirements are often formulated as expectation constraints on learned policies or decisions~\citep{goh2016satisfying,chamon2020pac}, while fairness-constrained learning imposes group-level constraints during classification or ranking~\citep{agarwal2018reductions,cotter2019nondiff,singh2018fairness}.
These constraints depend on the current model, data distribution, and sampling process, so they cannot always be replaced by a fixed regularization weight.
Constrained learning methods therefore introduce dual variables, proxy losses, or game-theoretic players to adapt the tradeoff between objective descent and constraint satisfaction during training~\citep{cotter2019training,agarwal2018reductions,chamon2020pac}.
Related constrained reinforcement learning methods use Lagrangian or primal--dual mechanisms to control expected cumulative constraint violations during policy optimization~\citep{tessler2019rcpo,stooke2020responsive}.
In these methods, the multiplier is not only a static penalty coefficient, but also a dynamic memory state that records past constraint pressure and affects future primal updates.
This makes multiplier-memory updating a central algorithmic component, especially when the constraint signal is observed only through stochastic mini-batches.

\paragraph{Stochastic constrained optimization under noisy feedback}
Stochastic constrained optimization studies how to update primal and dual variables using sampled objective and constraint information.
Stochastic approximation methods provide foundational mechanisms for handling expectation constraints from noisy samples~\citep{wang2016stochastic,lan2020function_constraints}.
Stochastic primal--dual methods extend this idea by coupling primal descent with multiplier ascent or correction under sampled feedback~\citep{xu2020primaldual,jin2022stochasticpd}.
When constraints are nonlinear or functional, constraint extrapolation and linearization-based methods improve feasibility control without requiring exact full-batch projections~\citep{boob2023stochastic}.
For nonconvex constrained problems, single-loop perturbed ascent and related first-order schemes reduce the need for nested constrained subproblem solves~\citep{lu2022gdpa,alacaoglu2024complexity}.
Another line of work replaces hard constraint handling with smoothed penalties or relaxed feasibility surrogates to obtain tractable stochastic updates~\citep{huang2025smoothed,yang2026singleloop}.
Stochastic augmented Lagrangian methods combine multiplier updates with penalty-based correction to stabilize expectation-constrained optimization under sampling noise~\citep{zhang2021stochalm,li2024stochasticialm}.
Stochastic SQP methods instead build local quadratic models and use active-set or proximal corrections to obtain more structured constrained steps~\citep{na2023stosqp,cui2025twophase}.
These approaches improve feasibility control by changing the primal--dual update, the local model, or the correction mechanism.
However, the multiplier memory is still commonly driven by 1) raw signed violations, 2) positive violations, or 3) direct projected multiplier replacement.
Thus, existing stochastic constrained optimization mainly stabilizes the optimization loop around the constraint signal, whereas the design of the multiplier-memory input itself remains less explicit.

% modified the first sentence
\paragraph{Feedback-control views and multiplier signal design}
To systematically re-engineer this memory input, a control-theoretic perspective proves essential, interpreting multiplier updates directly as closed-loop feedback laws.
Classical augmented Lagrangian and projected multiplier methods already use the projected quantity to handle inequality constraints~\citep{rockafellar1976augmented,murtagh1982projected}.
Standard nonlinear programming treatments further connect projected or augmented multiplier updates with feasibility and complementarity enforcement~\citep{bertsekas1982constrained,nocedal2006numerical}.
Dynamical analyses of primal--dual algorithms show that multiplier evolution can be understood as a feedback process coupled with primal motion~\citep{feijer2010stability,cherukuri2016asymptotic}.
In safe reinforcement learning, the same idea appears as integral multiplier control, where accumulated constraint errors regulate policy updates~\citep{tessler2019rcpo}.
PID-style Lagrangian methods make this control interpretation more explicit by adding proportional or derivative components to improve transient constraint behavior~\citep{stooke2020responsive,sohrabi2024pi}.
Recent continuous-time and nonsmooth analyses further formalize the feedback nature of Lagrangian dynamics~\citep{cerone2025framework,cerone2026nonsmooth}.
These works clarify that the multiplier is a feedback state, but the feedback input is usually built from raw constraint errors or their filtered variants.
Our proposed RCML follows the same control viewpoint but changes the input to the multiplier-memory loop.
Instead of feeding the memory update directly with raw violations or fully replacing the memory by the projected multiplier, RCML first forms the projected effective multiplier and then updates memory through the induced pressure-memory residual.
This residual input combines violation activation, stale-memory release, and inactive-constraint dead-zone behavior within one projected-pressure feedback interface.

\section{Problem Statement and Preliminaries}
\label{sec:problem_preliminaries}

\subsection{Stochastic Constrained Decision-Making Problem}
\label{subsec:problem_interface}

We consider the stochastic inequality-constrained problem
\citep{lan2020function_constraints,zhang2021stochalm}
\begin{equation}
	\label{eq:stoch_problem}
	\min_{x\in\mathcal X}\; f(x):=\mathbb E_{\xi}[\ell(x;\xi)]
	~~
	\mathrm{s.t.}~~
	c_i(x):=\mathbb E_{\xi}[h_i(x;\xi)]\le 0,~~ i=1,\ldots,m ,
\end{equation}
where \(x\in\mathcal X\subseteq\mathbb R^d\) is the decision variable,
\(\xi\) denotes a random sample,
\(\ell(x;\xi)\) is the sample-wise objective loss, and \(h_i(x;\xi)\) is the
sample-wise contribution to the \(i\)-th constraint. We write
\(c(x)=[c_1(x),\ldots,c_m(x)]^\top\),
\(h(x;\xi)=[h_1(x;\xi),\ldots,h_m(x;\xi)]^\top\), and denote the
constraint Jacobian by \(J_c(x)\in\mathbb R^{m\times d}\). In learning
applications, the mini-batch constraint Jacobian \(\widehat J_{c,k}\) is obtained
by differentiating the mini-batch constraint estimate with respect to the model
parameters.

The expectation-form Lagrangian of \eqref{eq:stoch_problem} is
\citep{bertsekas1982constrained,nocedal2006numerical}
\begin{equation}
	\label{eq:stoch_lagrangian}
	\mathcal L(x,u)
	=
	f(x)+u^\top c(x)
	=
	\mathbb E_{\xi}\bigl[\ell(x;\xi)+u^\top h(x;\xi)\bigr],
	~~
	u\in\mathbb R_+^m .
\end{equation}
Its primal descent direction is
$\nabla_x\mathcal L(x,u)=\nabla f(x)+J_c(x)^\top u$, and its multiplier-side ascent direction is \(c(x)\). Under exact information, a
projected primal--dual method updates the primal variable along
\(\nabla f(x_k)+J_c(x_k)^\top u_k\) and updates the multiplier along
\(c(x_k)\), with projection onto \(\mathcal X\) and \(\mathbb R_+^m\).

In stochastic learning or data-driven decision-making, the exact quantities
\(\nabla f(x_k)\), \(c(x_k)\), and \(J_c(x_k)\) are usually unavailable at each
iteration. The algorithm instead observes mini-batch quantities
\(\hat g_k\), \(\hat c_k\), and \(\widehat J_{c,k}\), which are used as
stochastic estimates of \(\nabla f(x_k)\), \(c(x_k)\), and \(J_c(x_k)\),
respectively. The key issue is not only that these estimates are noisy, but also
that the noise can be transmitted into the stored multiplier memory if the memory
is updated directly from raw constraint feedback.

\subsection{A Generalized Primal--Dual Interface}
\label{subsec:generalized_interface}

To cover both classical stochastic primal--dual updates and the proposed RCML
update, we use the following generalized interface:
\begin{equation}
	\label{eq:pd_template}
	\begin{aligned}
	x_{k+1}
	=
	\Pi_{\mathcal X}
	\bigl(
	x_k-\alpha_k(\hat g_k+\widehat J_{c,k}^{\top}\mu_k)
	\bigr),~~
	u_{k+1}
	=
	\Pi_{\mathbb R_+^m}(u_k+\eta_k s_k).
	\end{aligned}
\end{equation}
Here \(\Pi_{\mathcal S}\) denotes Euclidean projection onto a set
\(\mathcal S\), \(u_k\in\mathbb R_+^m\) is the stored multiplier memory,
\(\mu_k\in\mathbb R_+^m\) is the effective multiplier pressure applied to the
primal update, and \(s_k\in\mathbb R^m\) is the feedback signal used to update
the multiplier memory. The stepsizes \(\alpha_k>0\) and \(\eta_k>0\) control the
primal and multiplier-memory updates.

The classical stochastic primal--dual update is recovered from
\eqref{eq:pd_template} by choosing $\mu_k=u_k, s_k=\hat c_k$, or by replacing \(s_k\) with the positive violation signal \([\hat c_k]_+\)~\citep{bertsekas1982constrained}.
This choice ties two different roles to the same raw constraint-feedback loop: the stored multiplier \(u_k\) is used as the pressure in the primal update, and the mini-batch constraint estimate is directly integrated into the multiplier memory. Consequently, stochastic fluctuations in \(\hat c_k\) can accumulate in \(u_k\), especially when the constraint signal is highly variable across mini-batches.

The central design question is therefore whether the pressure applied to the primal update and the signal used to update multiplier memory should be tied to the same feedback signal. RCML answers this question by separating these two roles. It constructs a projected effective pressure for the primal update and a pressure-memory residual for updating the stored multiplier memory.

\subsection{Projected Pressure and Memory Residual from the Augmented Lagrangian}
\label{subsec:lagrangian_alm}

We next introduce the projected-pressure signal induced by the inequality
augmented Lagrangian. Let
\(\boldsymbol\rho=(\rho_1,\ldots,\rho_m)^\top\in\mathbb R_{++}^m\) be a
coordinate-wise pressure scale. The inequality augmented Lagrangian is
\citep{rockafellar1976augmented,bertsekas1982constrained}
\begin{equation}
	\label{eq:ineq_aug_lag}
	\mathcal L_{\boldsymbol\rho}(x,u)
	=
	f(x)
	+
	\frac12\sum_{i=1}^m
	\frac{[u_i+\rho_i c_i(x)]_+^2-u_i^2}{\rho_i},
\end{equation}
where \([\,\cdot\,]_+\) denotes coordinate-wise projection onto the nonnegative
orthant. This augmented-Lagrangian object induces two signals,
\begin{equation}
	\label{eq:pressure_residual}
	\lambda_{\boldsymbol\rho}(x,u)
	:=
	[u+\boldsymbol\rho\odot c(x)]_+,
	~~
	d_{\boldsymbol\rho}(x,u)
	:=
	\lambda_{\boldsymbol\rho}(x,u)-u ,
\end{equation}
where \(\odot\) denotes the Hadamard product. The vector
\(\lambda_{\boldsymbol\rho}(x,u)\) is the projected effective multiplier
pressure, while \(d_{\boldsymbol\rho}(x,u)\) is the pressure-memory residual that
measures the gap between the projected pressure and the stored multiplier
memory.
Direct differentiation of \eqref{eq:ineq_aug_lag} obtains 
$\nabla_x\mathcal L_{\boldsymbol\rho}(x,u) = \nabla f(x)+J_c(x)^\top\lambda_{\boldsymbol\rho}(x,u),~\nabla_u\mathcal L_{\boldsymbol\rho}(x,u) = \boldsymbol\rho^{-1}\odot d_{\boldsymbol\rho}(x,u),$
where
\(\boldsymbol\rho^{-1}=(1/\rho_1,\ldots,1/\rho_m)^\top\). Therefore,
\(\lambda_{\boldsymbol\rho}\) is the multiplier pressure appearing in the primal derivative, and \(d_{\boldsymbol\rho}\) is the multiplier-side residual generated by the same augmented-Lagrangian formulation.

In the isotropic case \(\boldsymbol\rho=\rho_0\mathbf 1\), we write $\lambda_{\rho_0}:=\lambda_{\rho_0\mathbf 1},~d_{\rho_0}:=d_{\rho_0\mathbf 1}$.
Then $\nabla_u\mathcal L_{\rho_0\mathbf 1}(x,u)=\rho_0^{-1}d_{\rho_0}(x,u)$. Classical projected augmented-Lagrangian methods use the projected pressure as the next multiplier memory. In an online update, this gives $u_{k+1}=\lambda_{\boldsymbol\rho}(x_k,u_k),$ whereas an offline augmented-Lagrangian convention may use $u_{k+1}=\lambda_{\boldsymbol\rho}(x_{k+1},u_k).$ Thus, the memory is fully replaced by the current projected pressure. This unit-gain replacement is effective under exact deterministic information, but under stochastic mini-batch feedback it can transfer the full fluctuation of the projected pressure estimate into the multiplier memory.

\subsection{Feedback-Control Interpretation}
\label{subsec:control_interpretation}

The generalized interface \eqref{eq:pd_template} can be interpreted as a
closed-loop feedback system. The primal state \(x_k\) is regulated by the effective pressure \(\mu_k\), and the stored multiplier memory \(u_k\) is updated through the feedback signal \(s_k\).

For the classical stochastic primal--dual update with \(s_k=\hat c_k\), ignoring the nonnegative projection gives the formal recursion $u_k=u_0+\sum_{t=0}^{k-1}\eta_t\hat c_t$. Thus, the multiplier memory behaves as an integral controller that accumulates the measured constraint signal. This interpretation explains its sensitivity to high-frequency stochastic noise: mini-batch fluctuations in \(\hat c_t\) are directly accumulated into \(u_k\). A signed violation signal can release multiplier mass when a constraint becomes feasible, but it also transmits fluctuations from inactive constraints. A positive violation signal suppresses negative inactive fluctuations, but it cannot actively remove stale positive multiplier memory through the constraint signal itself.

The projected augmented-Lagrangian pressure in \eqref{eq:pressure_residual} provides a richer feedback signal. In the active region \(u_i+\rho_i c_i(x)>0\), the \(i\)-th projected pressure satisfies $	\lambda_{\boldsymbol\rho,i}(x,u)=u_i+\rho_i c_i(x).$ Hence, the pressure combines an integral component \(u_i\) with a proportional correction \(\rho_i c_i(x)\). In the inactive region \(u_i+\rho_i c_i(x)\le0\), the projection sets $\lambda_{\boldsymbol\rho,i}(x,u)=0,$ introducing the saturation and dead-zone behavior required for inequality constraints.

From this viewpoint, projected augmented-Lagrangian methods already provide a useful projected pressure for the primal update, but they couple this pressure with unit-gain memory replacement. RCML keeps the projected-pressure correction for the primal update and replaces unit-gain memory replacement with finite-gain residual tracking. The projected pressure \(\lambda_{\boldsymbol\rho}\) is used as the effective pressure in the primal channel, while the induced residual
\(d_{\boldsymbol\rho}\) becomes the feedback signal that controls the multiplier memory.

\section{Methodology}
\label{sec:methodology}
Building on the two signals identified in the previous section, we now construct a systematic framework that uses $\lambda$ for the primal update and $d$ for memory tracking.  We first analyze the residual signal structure, then embed it into a unified algorithm with finite-gain memory control, and finally introduce optional stabilization modules for stochastic feedback.

\subsection{Residual-Controlled Multiplier Signal}
\label{subsec:residual_signal_design}

The key signal in RCML is the pressure-memory residual
\(d_{\boldsymbol\rho}\) defined in \eqref{eq:pressure_residual}, which measures how far the stored multiplier memory \(u\) is from the projected pressure
\(\lambda_{\boldsymbol\rho}(x,u)\).  From~\eqref{eq:pressure_residual}, the coordinate-wise form is
\[
d_{\boldsymbol\rho,i}(x,u)
=
\begin{cases}
\rho_i c_i(x), & u_i+\rho_i c_i(x)>0,\\[2pt]
-u_i,          & u_i+\rho_i c_i(x)\le0 .
\end{cases}
\]
This piecewise structure gives the residual three feedback behaviors.  First, it provides \emph{violation activation}: when constraint \(i\) is violated and the projected pressure is active, \(d_{\boldsymbol\rho,i}(x,u)=\rho_i c_i(x)>0\), which increases the multiplier memory in proportion to the violation.  Second, it provides \emph{stale-memory release}: if a previously active constraint becomes sufficiently feasible while \(u_i>0\), then \(u_i+\rho_i c_i(x)\le0\) and \(d_{\boldsymbol\rho,i}(x,u)=-u_i\), so the obsolete multiplier value is actively removed.  Third, it provides an \emph{inactive dead-zone}: if \(u_i=0\) and \(c_i(x)<0\), then \(d_{\boldsymbol\rho,i}(x,u)=0\), so a strictly inactive constraint does not inject negative noise into the multiplier dynamics.

This explains why \(d_{\boldsymbol\rho}\) is more informative than either the raw violation \(c(x)\) or its positive part \([c(x)]_+\).  The signed signal \(c(x)\) can release memory but also transmits inactive negative fluctuations; the positive signal \([c(x)]_+\) suppresses negative inactive fluctuations but cannot release stale positive memory.  The residual \(d_{\boldsymbol\rho}\) unifies activation, release, and dead-zone behavior in a single projected-pressure feedback signal.

The residual also carries an exact optimality meaning.  As shown in Proposition~\ref{prop:residual_zero_set} (Section~\ref{sec:theory}), \(d_{\boldsymbol\rho}(x,u)=0\) is equivalent to \(u\ge0\), \(c(x)\le0\), and \(u_i c_i(x)=0\) for all \(i\).  Together with projected stationarity of the primal step, this yields the KKT system.  Hence, RCML employs a memory-control signal that is both operationally meaningful in the feedback loop and mathematically aligned with constrained optimality.

\subsection{RCML Finite-Gain Memory Tracking and Unified Algorithm}
\label{subsec:finite_gain_tracking}

RCML applies the projected pressure to the primal update but adjusts the multiplier memory through the residual.  At iteration \(k\), the algorithm first forms a constraint signal \(\tilde c_k\), which may be the raw mini-batch estimate \(\hat c_k\) or a filtered version introduced in Section~\ref{subsec:stochastic_components}.  It also selects a pressure-scale vector \(\boldsymbol\rho_k\in\mathbb R_{++}^m\), which can be fixed, coordinate-wise, or adaptively updated.

Using the current signal \(\tilde c_k\) and scale \(\boldsymbol\rho_k\), the online projected pressure and residual are obtained by analogy with
\eqref{eq:pressure_residual}:
\[
\lambda_k=[u_k+\boldsymbol\rho_k\odot\tilde c_k]_+,~~
d_k=\lambda_k-u_k .
\]
In the generalized primal--dual template \eqref{eq:pd_template}, RCML sets the primal pressure to \(\mu_k=\lambda_k\) and constructs the memory feedback signal as \(s_k=\kappa_{\text{I}} d_k\) with gain \(\kappa_{\text{I}}>0\).  The multiplier memory update then becomes
\begin{equation}
\label{eq:rcml_memory_update}
u_{k+1}
=
\Pi_{\mathbb R_+^m}\bigl(u_k+\eta_k\kappa_{\text{I}} d_k\bigr).
\end{equation}
Let \(\beta_k:=\eta_k\kappa_{\text{I}}\).  If \(0\le\beta_k\le1\), then using \(d_k=\lambda_k-u_k\) gives \(u_{k+1}=(1-\beta_k)u_k+\beta_k\lambda_k\), so the projection in \eqref{eq:rcml_memory_update} can be dropped because both \(u_k\) and \(\lambda_k\) are nonnegative.  Thus the multiplier memory moves only a fraction \(\beta_k\) toward the current projected pressure.

The classical projected ALM update is recovered as the unit-gain case
\(\beta_k=1\).  Smaller values of \(\beta_k\) produce a smoother multiplier
trajectory and reduce sensitivity to instantaneous mini-batch noise.  In this way,
RCML retains the projected-pressure correction of ALM while avoiding its
aggressive full-replacement memory update.

Algorithm~\ref{alg:rcml_unified} summarizes the unified RCML execution flow.
The minimal RCML backbone consists of two choices: \(\mu_k=\lambda_k\) and \(s_k=\kappa_{\text{I}} d_k\). All other components, including constraint filtering,
adaptive scaling, and residual-\(\nu\)PI correction, are optional stochastic
stabilization modules.

%% modified
\begin{algorithm}[ht]
\caption{Unified Residual-Controlled Multiplier Learning}
\label{alg:rcml_unified}
\let\AND\relax
\begin{algorithmic}[1]
\REQUIRE Choose stepsizes $\{\alpha_k, \eta_k\}>0$, initialize $x_0\in\mathcal X$ and $u_0\in\mathbb R_+^m$. If filtering, adaptive scaling, or residual-$\nu$PI correction is used, initialize their auxiliary states $\bar c_{-1}=0$, $v_0=0$, $\xi_{-1}=0$ and choose memory signal \(s_k\) (see Table~\ref{tab:rcml_variants}).
\FOR{$k=0,1,\ldots,T-1$}
    \STATE Observe mini-batch estimates $\hat g_k$, $\hat c_k$, and $\widehat J_{c,k}$.
    \STATE Form the constraint signal $\tilde c_k$ from $\hat c_k$ (e.g., via filtering or using the raw estimate).
    \STATE Compute the projected pressure and residual:
          $\lambda_k=[u_k+\boldsymbol\rho_k\odot\tilde c_k]_+$,\;
          $d_k=\lambda_k-u_k$.
    \STATE Update the primal variable:
          $x_{k+1}=\Pi_{\mathcal X}\bigl(x_k-\alpha_k[\hat g_k+\widehat J_{c,k}^{\top}\lambda_k]\bigr)$.
    \STATE Update the multiplier memory:
          $u_{k+1} = \Pi_{\mathbb R_+^m}(u_k+\eta_k s_k)$.
\ENDFOR
\end{algorithmic}
\end{algorithm}

The algorithm contains three signal channels:
1) \emph{Measurement channel}: converts the raw mini-batch constraint estimate
\(\hat c_k\) into the processed signal \(\tilde c_k\) (the filtered option is
detailed in Section~\ref{subsec:stochastic_components});
2) \emph{Pressure channel}: computes the projected pressure \(\lambda_k\), which
is then applied to the primal step through \(\widehat J_{c,k}^{\top}\lambda_k\);
3) \emph{Memory-feedback channel}: forms the residual-based signal \(s_k\) and
updates the stored multiplier memory \(u_k\).

\begin{table*}[ht]
\centering
\caption{RCML variants and classical baselines under the unified projected-pressure
execution flow. $ \lambda_k $ and $ d_k $ denote the projected pressure and
pressure-memory residual obtained from the selected constraint signal
$ \tilde c_k $ and pressure-scale vector $ \boldsymbol\rho_k $. Setting
$ \eta_k=1 $ and $ s_k=d_k $ recovers the classical projected
augmented-Lagrangian replacement update~\citep{rockafellar1976augmented}.}
\label{tab:rcml_variants}
\footnotesize
\setlength{\tabcolsep}{2.4pt}
\renewcommand{\arraystretch}{1.22}
\begin{tabular}{@{}
C{0.145\textwidth}
C{0.120\textwidth}
C{0.170\textwidth}
C{0.145\textwidth}
C{0.205\textwidth}
C{0.120\textwidth}
@{}}
\toprule
\textbf{Method}
&
\textbf{Constraint signal}
&
\textbf{Pressure scale}
&
\textbf{Memory signal}
&
\textbf{Memory update}
&
\textbf{Main role}
\\
\midrule

\makecell{\textsc{SGDA-Signed} \\ \citep{jin2022stochasticpd}}
&
$ \tilde c_k=\hat c_k $
&
Not used
&
$ s_k=\hat c_k $
&
$ \Pi_{\mathbb R_+^m}(u_k+\eta_k s_k) $
&
Signed integral
\\
\addlinespace[1pt]

\makecell{\textsc{SGDA-Positive} \\ \citep{xu2020primaldual}}
&
$ \tilde c_k=\hat c_k $
&
Not used
&
$ s_k=[\hat c_k]_+ $
&
$ \Pi_{\mathbb R_+^m}(u_k+\eta_k s_k) $
&
Positive integral
\\
\addlinespace[1pt]

\makecell{\textsc{Projected-ALM} \\ \citep{bertsekas1982constrained}}
&
$ \tilde c_k=\hat c_k $
&
$ \boldsymbol\rho_k=\rho_0\mathbf 1 $
&
$ s_k=d_k $
&
\makecell{$ \eta_k=1 $\\ $ u_{k+1}=\lambda_k $}
&
Unit-gain tracking
\\
\addlinespace[1pt]

\textsc{Residual-I}
&
$ \tilde c_k=\hat c_k $
&
$ \boldsymbol\rho_k=\rho_0\mathbf 1 $
&
$ s_k=\kappa_{\text{I}} d_k $
&
\makecell{$ u_{k+1}=u_k+\eta_k s_k $\\ $ 0\le\eta_k\kappa_{\text{I}}\le1 $}
&
Finite-gain tracking
\\
\addlinespace[1pt]

\textsc{RCML-Core}
&
$ \tilde c_k=\bar c_k $
&
$ \boldsymbol\rho_k=\rho_0\mathbf 1 $
&
$ s_k=\kappa_{\text{I}} d_k $
&
\makecell{$ u_{k+1}=u_k+\eta_k s_k $\\ $ 0\le\eta_k\kappa_{\text{I}}\le1 $}
&
Filtered tracking
\\
\addlinespace[1pt]

\textsc{RCML-Adaptive}
&
$ \tilde c_k=\bar c_k $
&
\makecell{$ \boldsymbol\rho_k= $\\$ [\rho_{k,1},\ldots,\rho_{k,m}]^\top $}
&
$ s_k=\kappa_{\text{I}} d_k $
&
\makecell{$ u_{k+1}=u_k+\eta_k s_k $\\ $ 0\le\eta_k\kappa_{\text{I}}\le1 $}
&
Adaptive scaling
\\
\addlinespace[1pt]

\textsc{RCML-Robust}
&
$ \tilde c_k=\bar c_k $
&
\makecell{$ \boldsymbol\rho_k= $\\$ [\rho_{k,1},\ldots,\rho_{k,m}]^\top $}
&
\makecell{$ s_k=\kappa_{\text{I}} d_k $\\ $ +\kappa_{\text{P}}(\xi_k-\xi_{k-1}) $}
&
$ \Pi_{\mathbb R_+^m}(u_k+\eta_k s_k) $
&
Dynamic correction
\\

\bottomrule
\end{tabular}

\vspace{1ex}
\begin{minipage}{0.98\textwidth}
\footnotesize
\emph{Notes.} The filtered signal $ \bar c_k $, adaptive coordinates
$ \rho_{k,i} $, and residual smoothing state $\xi_k$ are defined in
Section~\ref{subsec:stochastic_components}. For residual-integral variants, the
projection in the multiplier update is inactive when $0\le\eta_k\kappa_{\text{I}}\le1$
because $u_{k+1}$ becomes a convex combination of $u_k$ and $\lambda_k$.
For \textsc{RCML-Robust}, the additional dynamic correction may break this convex
combination, so the non-negative projection is retained as a safeguard.
\end{minipage}
\end{table*}
\subsection{Stochastic Stabilization Modules}
\label{subsec:stochastic_components}

In stochastic constrained learning, mini-batch feedback can be very noisy.  We
therefore equip RCML with three optional stabilization modules.  These modules do
not alter the meaning of the projected-pressure residual; they only modify the
measurement, pressure, or memory-feedback channel.

\paragraph{Constraint filtering}
The first module acts on the measurement channel.  Instead of forming the pressure
directly from the raw mini-batch estimate \(\hat c_k\), RCML can use an
exponential moving average:
\begin{equation}
\label{eq:constraint_filter}
\tilde c_k = (1-\gamma_k)\,\tilde c_{k-1} + \gamma_k\,\hat c_k,
~~ \gamma_k\in(0,1].
\end{equation}
The parameter \(\gamma_k\) controls the noise--lag trade-off.  When \(\gamma_k=1\),
the module is inactive and \(\tilde c_k=\hat c_k\).  Smaller values reduce
high-frequency stochastic fluctuations but introduce delay in detecting rapid
constraint changes.

\paragraph{Adaptive coordinate-wise pressure scaling}
The second module acts on the pressure channel.  A single isotropic pressure scale
\(\rho_0\mathbf 1\) can be unsuitable when constraints have very different
magnitudes: a large-scale constraint may dominate the projected pressure, while a
small-scale one may receive insufficient correction.  To reduce this imbalance,
RCML can adapt the coordinate-wise scale
\(\boldsymbol\rho_k=[\rho_{k,1},\ldots,\rho_{k,m}]^\top\).

Using the filtered constraint signal, we maintain a bias-corrected second-moment
estimate:
\[
v_{k+1}
= (1-\eta_v)v_k + \eta_v\,\tilde c_k^{\,2},
~~
\widehat v_{k+1}
= \frac{v_{k+1}}{1-(1-\eta_v)^{k+1}} .
\]
The adaptive coordinate-wise pressure scale is then
\begin{equation}
\label{eq:adaptive_scale}
\rho_{k,i}
=
\operatorname{clip}\!
\left(
\frac{\kappa_\rho}{\sqrt{\widehat v_{k+1,i}+\epsilon}},\;
\rho_{\min},\;
\rho_{\max}
\right),
~~ i=1,\ldots,m .
\end{equation}
Here \(\kappa_\rho>0\) is the base scale, \(\epsilon>0\) prevents division by
zero, and \(\rho_{\min},\rho_{\max}\) keep the pressure scale within a stable
range.  The bias correction in \(\widehat v_{k+1}\) is exact when the second moment
of the constraint signal is stationary; under distribution shift or rapidly moving
iterates it should be viewed as a pragmatic re-scaling rather than an exact
moment correction.

\paragraph{Residual-\(\nu\)PI dynamic correction}
The third module acts on the memory-feedback channel.  The pure residual-integral
update \(s_k=\kappa_{\text{I}} d_k\) gives a conservative finite-gain controller.  To
accelerate the transient response when constraint pressure changes rapidly, RCML
can add a dynamic correction based on a smoothed residual state:
\begin{equation}
\label{eq:residual_filter}
\xi_k = \nu\,\xi_{k-1} + (1-\nu)\,d_k,
~~ \nu\in[0,1).
\end{equation}
The difference \(\xi_k-\xi_{k-1}\) captures the recent trend of the
pressure-memory mismatch.  The complete residual-\(\nu\)PI memory signal is
\begin{equation}
\label{eq:nupi_update}
s_k
= \kappa_{\text{I}} d_k + \kappa_{\text{P}}(\xi_k-\xi_{k-1}),
~~ \kappa_{\text{I}}>0,\;\; \kappa_{\text{P}}\ge0.
\end{equation}
The term \(\kappa_{\text{I}} d_k\) provides integral action on the mismatch, while
\(\kappa_{\text{P}}(\xi_k-\xi_{k-1})\) adds a proportional correction on its filtered
change.  Setting \(\kappa_{\text{P}}=0\) recovers the pure finite-gain residual-integral
backbone.  Because the dynamic term may cause the memory update to deviate from a
convex combination of \(u_k\) and \(\lambda_k\), the projection onto
\(\mathbb R_+^m\) is retained for this variant.

\section{Theoretical Analysis}
\label{sec:theory}

The results are organized as follows.
Section~\ref{subsec:theory_residual_geometry} shows that
\(d_{\boldsymbol\rho}\) is an exact feasibility--complementarity residual and
that, together with projected stationarity, it characterizes the KKT system.
Section~\ref{subsec:theory_finite_gain_backbone} analyzes the deterministic
convex-affine residual-integral backbone corresponding to the finite-gain update
in Section~\ref{subsec:finite_gain_tracking}.
Section~\ref{subsec:theory_stochastic_bound} derives a stopped finite-time
stochastic residual bound under mini-batch feedback.
Sections~\ref{subsec:theory_filtering} and
\ref{subsec:theory_scaling_dynamic} then explain how constraint filtering,
adaptive pressure scaling, and residual-\(\nu\)PI correction enter the
measurement, pressure, and memory-feedback channels.

\paragraph{Scope} The residual identities in Section~\ref{subsec:theory_residual_geometry} hold
for general smooth nonlinear constraints \(c(x)\). The dynamical convergence and
stochastic residual bounds in Sections~\ref{subsec:theory_finite_gain_backbone}
and \ref{subsec:theory_stochastic_bound} are established for the convex-affine
backbone \(c(x)=Ax-b\) and \(\mathcal X=\mathbb R^d\). In these two
subsections, the vector scale is specialized to the isotropic form
\(\boldsymbol\rho=\rho_0\mathbf 1\) with \(\rho_0>0\), and the scalar symbol
\(\rho_0\) is used only for this isotropic specialization. The nonlinear formulation in
Section~\ref{sec:methodology} defines the implementable RCML mechanism, while the
convex-affine analysis isolates the finite-gain residual-memory dynamics in a setting where the Lyapunov cancellation can be proved exactly. The final
nonconvex discussion is used to interpret the residual diagnostics near
regular KKT points.

\begin{definition}\label{denf1}
For \(x\in\mathcal X\), the projected stationarity residual is
\[
\mathcal G_{\alpha,\boldsymbol\rho}^{\mathcal X}(x,u)
:=
\alpha^{-1}
\left\|
x-\Pi_{\mathcal X}
\bigl(
x-\alpha[\nabla f(x)+J_c(x)^\top\lambda_{\boldsymbol\rho}(x,u)]
\bigr)
\right\|,
\]
and the combined residual is
\[
\mathcal R_{\alpha,\boldsymbol\rho}^{\mathcal X}(x,u)
:=
\bigl(\mathcal G_{\alpha,\boldsymbol\rho}^{\mathcal X}(x,u)\bigr)^2
+
\|d_{\boldsymbol\rho}(x,u)\|^2 .
\]
\end{definition}

The projected stationarity residual follows the standard gradient-mapping
and normal-cone characterization of constrained stationarity
\citep{bertsekas1982constrained,nocedal2006numerical,bauschke2017convex}. When \(\mathcal X=\mathbb R^d\), the projected stationarity term reduces to the gradient norm:
\(\mathcal R_{\alpha,\boldsymbol\rho}^{\mathbb R^d}(x,u)
=
\|\nabla f(x)+J_c(x)^\top\lambda_{\boldsymbol\rho}(x,u)\|^2
+
\|d_{\boldsymbol\rho}(x,u)\|^2\).

\subsection{Residual Geometry and KKT Characterization}
\label{subsec:theory_residual_geometry}

We first prove that the pressure-memory residual \(d_{\boldsymbol\rho}\) defined
in \eqref{eq:pressure_residual} is an exact optimality residual. Throughout this
subsection, \(\boldsymbol\rho\in\mathbb R_{++}^m\) is allowed to be
coordinate-wise.

\begin{proposition}[Residual zero set]
\label{prop:residual_zero_set}
For any \(\boldsymbol\rho\in\mathbb R_{++}^m\),
\[
d_{\boldsymbol\rho}(x,u)=0
~~\Longleftrightarrow~~
u\ge0,~~ c(x)\le0,~~ u_i c_i(x)=0,\;\; i=1,\ldots,m .
\]
\end{proposition}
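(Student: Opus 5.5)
The plan is to reduce the vector statement to a scalar one and argue coordinate by coordinate. By definition~\eqref{eq:pressure_residual}, \(d_{\boldsymbol\rho}(x,u)=0\) holds if and only if, for every \(i\), \([u_i+\rho_i c_i(x)]_+=u_i\). Likewise, the right-hand side of the proposition is a conjunction of coordinate-wise conditions \(u_i\ge0\), \(c_i(x)\le0\), \(u_ic_i(x)=0\). It therefore suffices to fix \(i\), write \(a:=u_i\), \(c:=c_i(x)\), \(r:=\rho_i>0\), and prove the scalar equivalence
\[
[a+rc]_+=a \iff a\ge0,\; c\le0,\; ac=0 .
\]
Only the positivity \(r>0\) is used. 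This is why the result holds for any coordinate-wise \(\boldsymbol\rho\in\mathbb R_{++}^m\), and for arbitrary smooth \(c\): no structure of \(c\) enters.

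\emph{Forward direction.} Suppose \([a+rc]_+=a\). Since the positive part is nonnegative, \(a\ge0\). I then split on the sign of \(a+rc\), following the piecewise form of \(d_{\boldsymbol\rho,i}\) displayed in Section~\ref{subsec:residual_signal_design}.
\begin{itemize}
\item If \(a+rc>0\), the identity reads \(a+rc=a\), so \(rc=0\). Since \(r>0\), this gives \(c=0\), hence \(c\le0\) and \(ac=0\).
\item If \(a+rc\le0\), the identity gives \(0=a\), so \(a=0\). Then \(rc\le0\), and \(r>0\) gives \(c\le0\), while \(ac=0\) is immediate.
\end{itemize}

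\emph{Backward direction.} Assume \(a\ge0\), \(c\le0\), \(ac=0\). Complementarity forces \(a=0\) or \(c=0\).
\begin{itemize}
\item If \(c=0\), then \([a+rc]_+=[a]_+=a\), using \(a\ge0\).
\item If \(a=0\), then \([rc]_+=0=a\), using \(rc\le0\).
\end{itemize}
In either case the scalar residual vanishes. Reassembling over \(i=1,\ldots,m\) yields the vector equivalence.

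No step here is a genuine obstacle; this is the standard fixed-point characterization of complementarity via a projection, and the argument is elementary. The only point requiring care is where strict positivity of \(\rho_i\) is invoked: deducing \(c=0\) from \(rc=0\) in the active case, and \(c\le0\) from \(rc\le0\) in the inactive case. Without \(\rho_i>0\) the equivalence fails, which is exactly why the statement restricts to \(\boldsymbol\rho\in\mathbb R_{++}^m\).
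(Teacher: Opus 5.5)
Your proof is correct and follows essentially the same coordinate-wise case analysis as the paper. The one difference is in the forward direction: you split on the sign of $u_i+\rho_i c_i(x)$ rather than on $u_i>0$ versus $u_i=0$, and you explicitly derive $u\ge0$ from the nonnegativity of $[\cdot]_+$, a step the paper's necessity argument uses without stating it.
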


\begin{proof}
\noindent
\textit{(i) Necessity.}
Let \(d_{\boldsymbol\rho}(x,u)=0\). From \eqref{eq:pressure_residual}, this is
equivalent to \(u=[u+\boldsymbol\rho\odot c(x)]_+\). Since \([\cdot]_+\) is the
projection onto \(\mathbb R_+^m\), for each coordinate \(i\):
if \(u_i>0\), then \(u_i=[u_i+\rho_i c_i(x)]_+\) implies \(u_i+\rho_i c_i(x)>0\),
hence \(u_i=u_i+\rho_i c_i(x)\) and \(c_i(x)=0\);
if \(u_i=0\), then \(0=[\rho_i c_i(x)]_+\) gives \(c_i(x)\le0\).
Therefore \(c(x)\le0\) and \(u_i c_i(x)=0\) for all \(i\).

\noindent
\textit{(ii) Sufficiency.}
Let \(u\ge0\), \(c(x)\le0\), and \(u_i c_i(x)=0\).
If \(u_i>0\), complementarity implies \(c_i(x)=0\), hence
\([u_i+\rho_i c_i(x)]_+=u_i\).
If \(u_i=0\), feasibility implies \(c_i(x)\le0\), hence
\([u_i+\rho_i c_i(x)]_+=[\rho_i c_i(x)]_+=0=u_i\).
Thus \([u+\boldsymbol\rho\odot c(x)]_+=u\), i.e., \(d_{\boldsymbol\rho}(x,u)=0\).
\end{proof}

\begin{proposition}[KKT characterization by the combined residual]
\label{prop:kkt_characterization}
For any \(\alpha>0\),
\(\mathcal R_{\alpha,\boldsymbol\rho}^{\mathcal X}(x,u)=0\) if and only if
\((x,u)\) satisfies the KKT system of
\(\bigl\{\min_{x\in\mathcal X} f(x)\;\; \mathrm{s.t.}\;\; c(x)\le0\bigr\}\).
\end{proposition}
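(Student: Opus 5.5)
Since \(\mathcal R_{\alpha,\boldsymbol\rho}^{\mathcal X}(x,u)\) is a sum of two nonnegative squares, it vanishes if and only if both \(\mathcal G_{\alpha,\boldsymbol\rho}^{\mathcal X}(x,u)=0\) and \(d_{\boldsymbol\rho}(x,u)=0\). The plan is to handle these two pieces separately and then combine them. We read the KKT system for \(\min_{x\in\mathcal X} f(x)\) s.t. \(c(x)\le0\) in its standard normal-cone form:
\[
0\in\nabla f(x)+J_c(x)^\top u+N_{\mathcal X}(x),\quad u\ge0,\quad c(x)\le0,\quad u_ic_i(x)=0 .
\]
Here \(x\in\mathcal X\) and \(\mathcal X\) is closed and convex, so that \(\Pi_{\mathcal X}\) and \(N_{\mathcal X}\) are well defined.

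For the multiplier block we invoke Proposition~\ref{prop:residual_zero_set} directly. It gives \(d_{\boldsymbol\rho}(x,u)=0\) if and only if \(u\ge0\), \(c(x)\le0\), and \(u_ic_i(x)=0\) for all \(i\). The key consequence is that \(d_{\boldsymbol\rho}(x,u)=0\) means exactly \(\lambda_{\boldsymbol\rho}(x,u)=u\), by the definition \(d_{\boldsymbol\rho}=\lambda_{\boldsymbol\rho}-u\). On this set, therefore, the pressure appearing inside \(\mathcal G\) can be replaced by \(u\) itself.

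For the stationarity block we use the projection characterization for a closed convex set, \(z=\Pi_{\mathcal X}(y)\iff y-z\in N_{\mathcal X}(z)\). Applying it with \(y=x-\alpha v\), \(z=x\), and \(v=\nabla f(x)+J_c(x)^\top\lambda_{\boldsymbol\rho}(x,u)\), we get
\[
\mathcal G=0\iff x=\Pi_{\mathcal X}(x-\alpha v)\iff -\alpha v\in N_{\mathcal X}(x)\iff -v\in N_{\mathcal X}(x).
\]
The last step holds because the normal cone is a cone and \(\alpha>0\).

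Combining the two blocks gives the equivalence. For "\(\Rightarrow\)", \(d_{\boldsymbol\rho}=0\) yields feasibility, complementarity, and \(\lambda_{\boldsymbol\rho}=u\), and then \(\mathcal G=0\) yields \(0\in\nabla f(x)+J_c(x)^\top u+N_{\mathcal X}(x)\). For "\(\Leftarrow\)", the KKT sign, feasibility, and complementarity conditions give \(d_{\boldsymbol\rho}=0\) by the sufficiency part of Proposition~\ref{prop:residual_zero_set}. Hence \(\lambda_{\boldsymbol\rho}=u\), and the stationarity inclusion gives \(\mathcal G=0\) through the projection equivalence above.

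I expect no step to be a real obstacle; the argument is essentially bookkeeping. The one point needing care is the order of the argument: stationarity must be evaluated at the pressure \(\lambda_{\boldsymbol\rho}\) rather than at \(u\), so the identity \(\lambda_{\boldsymbol\rho}=u\) has to be established from the residual block first. We also need to state explicitly that the result assumes \(\mathcal X\) is closed and convex, and that the KKT conditions are taken in normal-cone form, without any constraint qualification being required.
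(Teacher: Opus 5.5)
Your proposal is correct and follows essentially the same route as the paper. In both, $\mathcal R=0$ splits into $d_{\boldsymbol\rho}=0$ and $\mathcal G=0$; Proposition~\ref{prop:residual_zero_set} then gives feasibility, complementarity and $\lambda_{\boldsymbol\rho}=u$, and the projection characterization $z=\Pi_{\mathcal X}(y)\iff y-z\in N_{\mathcal X}(z)$ turns $\mathcal G=0$ into the normal-cone stationarity inclusion, in both directions. Your explicit remarks on closedness and convexity of $\mathcal X$, and on dividing by $\alpha>0$ inside the normal cone, only make precise what the paper uses implicitly.
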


\begin{proof}
\noindent
\textit{(i) Necessity.}
If \(\mathcal R_{\alpha,\boldsymbol\rho}^{\mathcal X}(x,u)=0\), then
\(d_{\boldsymbol\rho}(x,u)=0\). Proposition~\ref{prop:residual_zero_set} gives
\(c(x)\le0\), \(u\ge0\), and \(u_i c_i(x)=0\). Moreover, from
\eqref{eq:pressure_residual}, \(d_{\boldsymbol\rho}(x,u)=0\) implies
\(\lambda_{\boldsymbol\rho}(x,u)=u\). The condition
\(\mathcal G_{\alpha,\boldsymbol\rho}^{\mathcal X}(x,u)=0\) yields
\(x = \Pi_{\mathcal X}\bigl(x-\alpha[\nabla f(x)+J_c(x)^\top u]\bigr)\).
By the optimality condition of Euclidean projection onto a convex set
\citep{bertsekas1982constrained,bauschke2017convex},
\(z=\Pi_{\mathcal X}(y)\) is equivalent to \(y-z\in N_{\mathcal X}(z)\),  where
\(N_{\mathcal X}(z)\) denotes the normal cone to \(\mathcal X\) at \(z\).
Taking \(z=x\) and \(y=x-\alpha[\nabla f(x)+J_c(x)^\top u]\) gives
\(-\alpha[\nabla f(x)+J_c(x)^\top u]\in N_{\mathcal X}(x)\), i.e.,
\(0\in \nabla f(x)+J_c(x)^\top u + N_{\mathcal X}(x)\). Thus the full KKT system
holds.

\noindent
\textit{(ii) Sufficiency.}
If \((x,u)\) satisfies the KKT system, then
Proposition~\ref{prop:residual_zero_set} gives \(d_{\boldsymbol\rho}(x,u)=0\),
and from \eqref{eq:pressure_residual} we have
\(\lambda_{\boldsymbol\rho}(x,u)=u\). The stationarity condition is equivalent to
\(x = \Pi_{\mathcal X}\bigl(x-\alpha[\nabla f(x)+J_c(x)^\top\lambda_{\boldsymbol\rho}(x,u)]\bigr)\).
Hence \(\mathcal G_{\alpha,\boldsymbol\rho}^{\mathcal X}(x,u)=0\) and
\(\mathcal R_{\alpha,\boldsymbol\rho}^{\mathcal X}(x,u)=0\).
\end{proof}

These results identify the object controlled by the feedback loop:
\(d_{\boldsymbol\rho}\) controls feasibility and complementarity, while
\(\mathcal G_{\alpha,\boldsymbol\rho}^{\mathcal X}\) controls primal
stationarity. This is exactly the residual-level interpretation used by RCML:
the multiplier memory is not driven by an arbitrary error signal, but by a
projected-pressure residual whose zero set is aligned with the KKT conditions.

\subsection{Finite-Gain Convex-Affine Backbone}
\label{subsec:theory_finite_gain_backbone}

We now analyze the deterministic residual-integral backbone behind
\textsc{Residual-I} and the finite-gain RCML update in
Table~\ref{tab:rcml_variants}. For the remainder of
Sections~\ref{subsec:theory_finite_gain_backbone} and
\ref{subsec:theory_stochastic_bound}, we analyze the isotropic specialization
of the coordinate-wise pressure scale: \(\boldsymbol\rho=\rho_0\mathbf 1\)
with \(\rho_0>0\). The symbol \(\rho_0\) is used only in this isotropic
analysis. This avoids mixing the vector-valued pressure scale used in
Algorithm~\ref{alg:rcml_unified} with the scalar weight needed in the Lyapunov
function. Under this specialization,
\(\boldsymbol\rho\odot(Ax-b)=\rho_0(Ax-b)\), and the projected pressure becomes
\([u+\rho_0(Ax-b)]_+\). 

Consider
\[
\min_{x\in\mathbb R^d}\; f(x)
~~
\mathrm{s.t.}~~
Ax-b\le0,
\]
where \(f\) is continuously differentiable, \(\mu\)-strongly convex, and has an
\(L\)-Lipschitz gradient. Assume Slater's condition holds
\citep{nocedal2006numerical}
and \(u_0\ge0\). Let
\(x^\star\) be the unique primal optimizer and let \(\mathcal U^\star\) be the
set of KKT multipliers. For any vector \(v\) and closed set \(\mathcal S\), we
write \(\operatorname{dist}(v,\mathcal S)=\inf_{w\in\mathcal S}\|v-w\|\).

Applying the template \eqref{eq:pd_template} with the signals from
\eqref{eq:pressure_residual} yields the deterministic finite-gain backbone
\begin{equation}
\label{eq:finite_gain_backbone}
\lambda_k=[u_k+\rho_0(Ax_k-b)]_+,~
d_k=\lambda_k-u_k,~
x_{k+1}=x_k-\alpha_k[\nabla f(x_k)+A^\top\lambda_k],~
u_{k+1}=u_k+\beta_k d_k .
\end{equation}
Here \(\lambda_k\) and \(d_k\) are deterministic instances of
\eqref{eq:pressure_residual} with \(c(x)=Ax-b\) and
\(\boldsymbol\rho=\rho_0\mathbf 1\). The effective memory-tracking gain
\(\beta_k\) corresponds to the product \(\eta_k\kappa_{\text{I}}\) in
\eqref{eq:rcml_memory_update}. We fix a gain ratio \(\theta>0\) and set
\(\beta_k=\theta\alpha_k\). Then \(u_{k+1}=u_k+\theta\alpha_k d_k\), and the
finite-gain condition \(0<\beta_k\le1\) becomes \(\theta\alpha_k\le1\). Under
this condition, the multiplier update can be rewritten as
\(u_{k+1}=(1-\beta_k)u_k+\beta_k\lambda_k\in\mathbb R_+^m\).

\begin{lemma}[Weighted summability and slow variation]
\label{lem:slow_variation}
Let \(\{a_k\}\) be a nonnegative sequence, \(\alpha_k>0\),
\(\sum_k\alpha_k=\infty\), and \(\alpha_k\to0\). If
\(\sum_k\alpha_k a_k<\infty\) and \(|a_{k+1}-a_k|\le C\alpha_k\), then
\(a_k\to0\).
\end{lemma}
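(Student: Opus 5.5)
The plan has two stages: first show that $\liminf_k a_k=0$, then use the slow-variation bound $|a_{k+1}-a_k|\le C\alpha_k$ to show that $a_k$ cannot return to a fixed positive level infinitely often. Summability alone does not give convergence, since $a_k$ could spike on a thin set of indices. The increment bound rules this out, because climbing from a low value to a high one forces $a_k$ to stay large over a stretch whose $\alpha$-weight is bounded below.

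\emph{Step 1 ($\liminf a_k=0$).} Suppose instead $a_k\ge\varepsilon>0$ for all $k\ge K$. Then
\[
\sum_{k\ge K}\alpha_k a_k\ge\varepsilon\sum_{k\ge K}\alpha_k=\infty,
\]
which contradicts $\sum_k\alpha_k a_k<\infty$. Hence $a_k<\varepsilon$ infinitely often, for every $\varepsilon>0$.

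\emph{Step 2 (no recurring excursions).} Suppose $\limsup_k a_k>2\varepsilon$ for some $\varepsilon>0$. Combined with Step 1, there are infinitely many disjoint up-crossings: indices $p_j<q_j$ with $a_{p_j}<\varepsilon$, $a_{q_j}>2\varepsilon$, and $\varepsilon\le a_k\le 2\varepsilon$ for $p_j<k<q_j$. To obtain these, take $p_j$ as the last index with $a_k<\varepsilon$ before the next index exceeding $2\varepsilon$. Telescoping the increment bound gives
\[
\varepsilon<a_{q_j}-a_{p_j}\le C\sum_{k=p_j}^{q_j-1}\alpha_k,
\qquad\text{so}\qquad
\sum_{k=p_j}^{q_j-1}\alpha_k>\frac{\varepsilon}{C}.
\]
On the indices $p_j<k<q_j$ we have $a_k\ge\varepsilon$. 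The only index in the block that may not satisfy this is the endpoint $k=p_j$, and its weight $\alpha_{p_j}\to0$. Therefore, for $j$ large,
\[
\sum_{k=p_j+1}^{q_j-1}\alpha_k a_k\ge\varepsilon\left(\frac{\varepsilon}{C}-\alpha_{p_j}\right)\ge\frac{\varepsilon^2}{2C}.
\]
The blocks $[p_j,q_j)$ are disjoint, so summing over infinitely many $j$ gives $\sum_k\alpha_k a_k=\infty$. This is a contradiction, so $\limsup_k a_k\le 2\varepsilon$ for every $\varepsilon>0$. Since $a_k\ge0$, it follows that $a_k\to0$.

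The main obstacle is the bookkeeping in Step 2: the up-crossing blocks must be chosen disjoint, and the boundary term at $p_j$, where $a_{p_j}<\varepsilon$, must be controlled. The hypothesis $\alpha_k\to0$ is used exactly there, to make the weight of that boundary index negligible. This is why the lemma assumes $\alpha_k\to0$ in addition to $\sum_k\alpha_k=\infty$.
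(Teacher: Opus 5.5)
Your proof is correct, but it is organized differently from the paper's. The paper uses a single persistence argument. Whenever $a_{k_j}\ge\varepsilon$, the increment bound keeps $a_k\ge\varepsilon/2$ on the forward window $[k_j,\ell_j]$ whose $\alpha$-mass first reaches $\delta=\varepsilon/(2C)$. Each such window therefore contributes at least $\varepsilon\delta/2$ to $\sum_k\alpha_k a_k$, and choosing the next $k_j$ beyond $\ell_j$ gives infinitely many disjoint contributions. You instead first prove $\liminf_k a_k=0$ and then run an up-crossing argument between the levels $\varepsilon$ and $2\varepsilon$. There, telescoping forces each crossing to carry $\alpha$-mass exceeding $\varepsilon/C$, on which $a_k\ge\varepsilon$ except at the left endpoint $p_j$. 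Both routes rest on the same mechanism: the increment bound turns a change of order $\varepsilon$ in $a_k$ into a lower bound on $\alpha$-mass. Your construction also makes the disjointness of the blocks fully explicit, which the paper leaves implicit. The paper's route is shorter, because it starts each window at a point where $a$ is already large, so it needs neither the $\liminf$ step nor any handling of a boundary index. This also shows that your closing remark overstates the role of $\alpha_k\to0$. The forward-window argument works for any positive $\alpha_k$ with $\sum_k\alpha_k=\infty$ if you allow $\ell_j\ge k_j$; the paper invokes $\alpha_k\to0$ only to get $\ell_j>k_j$, which is cosmetic. So that hypothesis is needed by your bookkeeping at $p_j$, not by the lemma itself.
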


\begin{proof}
Suppose, for contradiction, that \(a_k\not\to0\). Then there exist
\(\varepsilon>0\) and a subsequence \(\{k_j\}\) with \(a_{k_j}\ge\varepsilon\)
for all \(j\).  From \(|a_{k+1}-a_k|\le C\alpha_k\), for any \(k>k_j\) we have
\(|a_k-a_{k_j}| \le C\sum_{t=k_j}^{k-1}\alpha_t\).
Let \(\delta = \varepsilon/(2C)\).  Because \(\sum_k\alpha_k=\infty\) and
\(\alpha_k\to0\), for each sufficiently large \(k_j\) we can find a forward
index \(\ell_j > k_j\) such that
\[
\sum_{t=k_j}^{\ell_j}\alpha_t \ge \delta,~~
\sum_{t=k_j}^{\ell_j-1}\alpha_t < \delta .
\]
For every \(k\in[k_j,\ell_j]\), the total step mass is at most \(\delta\), hence
\(|a_k-a_{k_j}| \le C\delta = \varepsilon/2\), and therefore
\(a_k \ge \varepsilon/2\).
The \(\alpha\)-mass of the window \([k_j,\ell_j]\) is at least \(\delta\).
Since \(\sum_k\alpha_k=\infty\) and \(\alpha_k\to0\), we can select a subsequence
of disjoint such windows whose total \(\alpha\)-mass is infinite.
Consequently,
\[
\sum_k \alpha_k a_k
\ge \sum_j \frac{\varepsilon}{2}\sum_{t=k_j}^{\ell_j}\alpha_t
\ge \sum_j \frac{\varepsilon}{2}\,\delta = \infty,
\]
contradicting the hypothesis \(\sum_k\alpha_k a_k<\infty\).  Hence \(a_k\to0\).
\end{proof}

\begin{theorem}[Finite-gain convergence of the residual-integral backbone]
\label{thm:finite_gain_convergence}
Under the assumptions \(0<\alpha_k\le\bar\alpha\), \(0<\beta_k\le1\),
\(\sum_k\alpha_k=\infty\), and \(\sum_k\alpha_k^2<\infty\), for every
\(u^\star\in\mathcal U^\star\) the sequence generated by
\eqref{eq:finite_gain_backbone} satisfies
\[
x_k\to x^\star,~~
d_k\to0,~~
\nabla f(x_k)+A^\top\lambda_k\to0,
\]
and consequently
\[
\operatorname{dist}(u_k,\mathcal U^\star)\to0,~~
\operatorname{dist}(\lambda_k,\mathcal U^\star)\to0 .
\]
If \(\mathcal U^\star=\{u^\star\}\) is a singleton, then \(u_k\to u^\star\) and
\(\lambda_k\to u^\star\).
\end{theorem}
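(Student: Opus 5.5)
We would prove Theorem~\ref{thm:finite_gain_convergence} with a primal--dual Lyapunov function whose cross terms cancel exactly, followed by the summability argument of Lemma~\ref{lem:slow_variation}. Fix \(u^\star\in\mathcal U^\star\) and set
\[
V_k=\tfrac12\|x_k-x^\star\|^2+\tfrac{1}{2\theta\rho_0}\|u_k-u^\star\|^2,
\]
using \(\beta_k=\theta\alpha_k\). The weight is chosen so that the dual step contributes \(\frac{\alpha_k}{\rho_0}\langle d_k,u_k-u^\star\rangle\) to first order, matching the primal term \(-\alpha_k\langle A^\top\lambda_k,x_k-x^\star\rangle\).

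\textbf{Step 1: descent inequality.} Expanding \(V_{k+1}-V_k\) gives
\[
\alpha_k\Bigl[-\langle\nabla f(x_k)+A^\top\lambda_k,\,x_k-x^\star\rangle+\rho_0^{-1}\langle d_k,\,u_k-u^\star\rangle\Bigr]+O(\alpha_k^2)\bigl(\|\nabla f(x_k)+A^\top\lambda_k\|^2+\|d_k\|^2\bigr).
\]
The bracket is bounded using three facts.
\begin{itemize}[leftmargin=*]
\item Strong monotonicity gives \(\langle\nabla f(x_k)-\nabla f(x^\star),x_k-x^\star\rangle\ge\mu\|x_k-x^\star\|^2\).
\item KKT stationarity gives \(\nabla f(x^\star)=-A^\top u^\star\).
\item Writing \(w_k=u_k+\rho_0(Ax_k-b)\) and \(w^\star=u^\star+\rho_0(Ax^\star-b)\), Proposition~\ref{prop:residual_zero_set} yields \(\lambda^\star:=[w^\star]_+=u^\star\).
\end{itemize}
After these substitutions, the bracket is a combination of \(\langle\lambda_k-u^\star,\,\rho_0A(x_k-x^\star)\rangle\) and \(\langle\lambda_k-u_k,\,u_k-u^\star\rangle\). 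Adding them produces \(\langle\lambda_k-u^\star,\,w_k-w^\star\rangle-\|\lambda_k-u^\star\|^2+\ldots\), and firm nonexpansiveness of \([\cdot]_+\) gives \(\langle[w_k]_+-[w^\star]_+,\,w_k-w^\star\rangle\ge\|[w_k]_+-[w^\star]_+\|^2\). The target is
\[
V_{k+1}\le V_k-\alpha_k\Bigl(\mu\|x_k-x^\star\|^2+\tfrac{c}{\rho_0}\|d_k\|^2\Bigr)+C\alpha_k^2\bigl(1+V_k\bigr),
\]
where the \(\alpha_k^2\) terms are controlled by \(L\)-smoothness and Lipschitzness of \(d\) in \((x,u)\). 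Those quantities are in turn bounded by \(\|x_k-x^\star\|\) and \(\|u_k-u^\star\|\), hence by \(V_k\).

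\textbf{Main obstacle.} The hardest step is obtaining a strictly positive coefficient on \(\|d_k\|^2\) from the projection inequality. If the algebra leaves only \(\|\lambda_k-u^\star\|^2\)-type terms, we will first try a weighted combination \(\tfrac{1}{2\theta\rho_0}\) versus \(\tfrac{1}{2\rho_0}\). If that fails, we will use \(\|d_k\|\le\|\lambda_k-u^\star\|+\|u_k-u^\star\|\) together with the \(\mu\)-term to absorb the cross terms.

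\textbf{Step 2: Robbins--Siegmund conclusion.} The deterministic Robbins--Siegmund lemma with \(\sum\alpha_k^2<\infty\) shows that \(V_k\) converges, so the iterates are bounded. It also gives
\[
\sum_k\alpha_k\bigl(\|x_k-x^\star\|^2+\|d_k\|^2\bigr)<\infty.
\]
Let \(a_k=\|x_k-x^\star\|^2+\|d_k\|^2\). By boundedness and Lipschitz continuity, \(|a_{k+1}-a_k|\le C\alpha_k\). Lemma~\ref{lem:slow_variation} then gives \(x_k\to x^\star\) and \(d_k\to0\).

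\textbf{Step 3: stationarity residual.} For \(\nabla f(x_k)+A^\top\lambda_k\to0\), take any cluster point \(\bar u\) of the bounded sequence \(u_k\). Since \(d\) is continuous and \(d_k\to0\), \(d(x^\star,\bar u)=0\), so by Proposition~\ref{prop:residual_zero_set} the pair \((x^\star,\bar u)\) is feasible and complementary, and \(\lambda_k\to\bar u\) along the subsequence. To get stationarity, we apply the same summability argument to \(\|\nabla f(x_k)+A^\top\lambda_k\|^2\). This requires the \(\alpha_k\)-weighted sum of that quantity to appear in the Lyapunov decrease. If it does not, we argue instead that \(A^\top\lambda_k\) must approach \(-\nabla f(x^\star)\): otherwise \(x_k\) would drift by order \(\sum\alpha_k\) while \(\lambda_k\) varies slowly, contradicting \(x_k\to x^\star\).

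\textbf{Step 4: multipliers.} Once Step 3 holds, every cluster point \(\bar u\) satisfies the KKT system by Proposition~\ref{prop:kkt_characterization}, so \(\operatorname{dist}(u_k,\mathcal U^\star)\to0\). Since \(\lambda_k-u_k=d_k\to0\), also \(\operatorname{dist}(\lambda_k,\mathcal U^\star)\to0\). If \(\mathcal U^\star=\{u^\star\}\), this gives \(u_k\to u^\star\) and \(\lambda_k\to u^\star\).
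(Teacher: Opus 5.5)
Your proposal is essentially the paper's proof: the same $(\theta\rho_0)^{-1}$-weighted Lyapunov function with primal--dual cross-term cancellation via firm nonexpansiveness of $[\cdot]_+$, Robbins--Siegmund plus Lemma~\ref{lem:slow_variation} for $x_k\to x^\star$ and $d_k\to0$, the slow-variation drift contradiction for $\nabla f(x_k)+A^\top\lambda_k\to0$, and the cluster-point argument for the multipliers. The step you flag as the main obstacle closes directly: the terms hidden in your ``$+\ldots$'' are exactly $\|\lambda_k-u^\star\|^2-2\langle\lambda_k-u^\star,u_k-u^\star\rangle+\|u_k-u^\star\|^2=\|d_k\|^2$ (the paper writes this as $\langle u_k-u^\star,d_k\rangle=\langle\lambda_k-u^\star,d_k\rangle-\|d_k\|^2$), which gives the decrease $-\alpha_k\rho_0^{-1}\|d_k\|^2$ with no fallback needed; likewise, in Step~3 only your drift argument is viable, since $\sum_k\alpha_k\|\nabla f(x_k)+A^\top\lambda_k\|^2$ never appears in the Lyapunov decrease.
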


\begin{proof}
\proofstep{Step 1: weighted Lyapunov function}
For any \(u^\star\in\mathcal U^\star\), define
\[
V_k^\theta(u^\star)
:=
\|x_k-x^\star\|^2
+
(\theta\rho_0)^{-1}\|u_k-u^\star\|^2 .
\]
The weight \((\theta\rho_0)^{-1}\) matches the gain ratio
\(\beta_k/\alpha_k=\theta\). Let \(G_k=\nabla f(x_k)+A^\top\lambda_k\).

\proofstep{Step 2: primal descent expansion}
The primal update in \eqref{eq:finite_gain_backbone} gives
\[
\|x_{k+1}-x^\star\|^2
=
\|x_k-x^\star-\alpha_kG_k\|^2
=
\|x_k-x^\star\|^2
-2\alpha_k\langle x_k-x^\star,G_k\rangle
+\alpha_k^2\|G_k\|^2 .
\]
Since \(u^\star\in\mathcal U^\star\), KKT stationarity gives
\(\nabla f(x^\star)+A^\top u^\star=0\). Hence
\[
\begin{aligned}
\langle x_k-x^\star,G_k\rangle
&=
\langle x_k-x^\star,\nabla f(x_k)-\nabla f(x^\star)\rangle
+
\langle A(x_k-x^\star),\lambda_k-u^\star\rangle .
\end{aligned}
\]
By \(\mu\)-strong convexity of \(f\),
\(\langle x_k-x^\star,\nabla f(x_k)-\nabla f(x^\star)\rangle
\ge \mu\|x_k-x^\star\|^2\).
Therefore
\[
\begin{aligned}
\|x_{k+1}-x^\star\|^2
\le\;
\|x_k-x^\star\|^2
-2\alpha_k\mu\|x_k-x^\star\|^2 
-2\alpha_k\langle A(x_k-x^\star),\lambda_k-u^\star\rangle
+\alpha_k^2\|G_k\|^2 .
\end{aligned}
\]

\proofstep{Step 3: multiplier expansion and projection inequality}
From \eqref{eq:finite_gain_backbone}, the multiplier update is
\(u_{k+1}=u_k+\beta_k d_k\). Hence
\[
\|u_{k+1}-u^\star\|^2
=
\|u_k-u^\star\|^2
+ 2\beta_k\langle u_k-u^\star,d_k\rangle
+ \beta_k^2\|d_k\|^2 .
\]
Using \(d_k=\lambda_k-u_k\), we have
\(u_k-u^\star = \lambda_k-u^\star - d_k\), so that
\[
\langle u_k-u^\star,d_k\rangle
=
\langle\lambda_k-u^\star,d_k\rangle - \|d_k\|^2 .
\]

To bound \(\langle\lambda_k-u^\star,d_k\rangle\), we use the firm
nonexpansiveness of Euclidean projection onto a closed convex set
\citep{bauschke2017convex}.
Applying this property to
\(\lambda_k = [u_k+\rho_0(Ax_k-b)]_+\) and the KKT identity
\(u^\star = [u^\star+\rho_0(Ax^\star-b)]_+\) (which follows from
Proposition~\ref{prop:residual_zero_set}) yields
\[
\|\lambda_k-u^\star\|^2
\le
\bigl\langle
\lambda_k-u^\star,\;
(u_k+\rho_0(Ax_k-b)) - (u^\star+\rho_0(Ax^\star-b))
\bigr\rangle .
\]
Expanding the right-hand side via
\(u_k-u^\star = \lambda_k-u^\star - d_k\) gives
\[
\|\lambda_k-u^\star\|^2
\le
\|\lambda_k-u^\star\|^2
- \langle\lambda_k-u^\star, d_k\rangle
+ \rho_0\langle\lambda_k-u^\star, A(x_k-x^\star)\rangle .
\]
Canceling \(\|\lambda_k-u^\star\|^2\) from both sides, we obtain
\[
\langle\lambda_k-u^\star,d_k\rangle
\le
\rho_0\langle\lambda_k-u^\star, A(x_k-x^\star)\rangle .
\]
Substituting this into the expression for \(\langle u_k-u^\star,d_k\rangle\) gives
\[
\langle u_k-u^\star,d_k\rangle
\le
\rho_0\langle\lambda_k-u^\star, A(x_k-x^\star)\rangle - \|d_k\|^2 .
\]

\proofstep{Step 4: cross-term cancellation}
Multiply the multiplier expansion by \((\theta\rho_0)^{-1}\) and use
\(\beta_k = \theta\alpha_k\):
\[
\begin{aligned}
(\theta\rho_0)^{-1}\|u_{k+1}-u^\star\|^2
\le\;
(\theta\rho_0)^{-1}\|u_k-u^\star\|^2
+ 2\alpha_k\langle\lambda_k-u^\star, A(x_k-x^\star)\rangle
- \frac{2\alpha_k}{\rho_0}\|d_k\|^2
+ \frac{\theta\alpha_k^2}{\rho_0}\|d_k\|^2 .
\end{aligned}
\]
Adding this to the primal descent inequality cancels the cross term
\(\langle A(x_k-x^\star),\lambda_k-u^\star\rangle\) and yields
\[
\begin{aligned}
V_{k+1}^\theta(u^\star)
\le\;
V_k^\theta(u^\star)
- 2\alpha_k\mu\|x_k-x^\star\|^2
- \frac{2\alpha_k}{\rho_0}\|d_k\|^2
+ \alpha_k^2\|G_k\|^2
+ \frac{\theta\alpha_k^2}{\rho_0}\|d_k\|^2 .
\end{aligned}
\]

\proofstep{Step 5: bounding the quadratic terms}
From the definition of \(\lambda_k\) and the firm nonexpansiveness of the
projection,
\[
\|\lambda_k-u^\star\|
\le
\|u_k-u^\star + \rho_0 A(x_k-x^\star)\|
\le
\|u_k-u^\star\| + \rho_0\|A\|\|x_k-x^\star\| .
\]
Using \(G_k = \nabla f(x_k) - \nabla f(x^\star) + A^\top(\lambda_k-u^\star)\) and
the Lipschitz continuity of \(\nabla f\),
\[
\|G_k\|
\le L\|x_k-x^\star\| + \|A\|\|\lambda_k-u^\star\|
\le (L+\rho_0\|A\|^2)\|x_k-x^\star\| + \|A\|\|u_k-u^\star\| .
\]
Thus
\[
\|G_k\|^2
\le
2(L+\rho_0\|A\|^2)^2\|x_k-x^\star\|^2 + 2\|A\|^2\|u_k-u^\star\|^2 .
\]
From the definition of \(V_k^\theta(u^\star)\),
\(\|x_k-x^\star\|^2 \le V_k^\theta(u^\star)\) and
\(\|u_k-u^\star\|^2 \le \theta\rho_0 V_k^\theta(u^\star)\). Hence
\[
\|G_k\|^2 \le C_G V_k^\theta(u^\star),
\]
with \(C_G = 2(L+\rho_0\|A\|^2)^2 + 2\theta\rho_0\|A\|^2\).

For the \(\|d_k\|^2\) terms, note that \(\theta\alpha_k\le 1\) implies
\(\theta\alpha_k^2 \le \alpha_k\). Therefore
\[
-\frac{2\alpha_k}{\rho_0}\|d_k\|^2
+ \frac{\theta\alpha_k^2}{\rho_0}\|d_k\|^2
=
-\frac{\alpha_k}{\rho_0}\|d_k\|^2
- \frac{\alpha_k(1-\theta\alpha_k)}{\rho_0}\|d_k\|^2
\le
-\frac{\alpha_k}{\rho_0}\|d_k\|^2 .
\]
Substituting these bounds into the Lyapunov recursion gives
\begin{equation}
\label{eq:finite_gain_descent}
V_{k+1}^\theta(u^\star)
\le
\bigl(1 + C_G\alpha_k^2\bigr)V_k^\theta(u^\star)
- 2\alpha_k\mu\|x_k-x^\star\|^2
- \rho_0^{-1}\alpha_k\|d_k\|^2 .
\end{equation}

\proofstep{Step 6: convergence of \(x_k\), \(d_k\), and \(G_k\)}
Since \(\sum_k\alpha_k^2<\infty\), the Robbins--Siegmund almost-supermartingale lemma \citep{robbins1971convergence} applied to 
\eqref{eq:finite_gain_descent} guarantees that \(V_k^\theta(u^\star)\) remains
bounded and
\[
\sum_{k=0}^{\infty}\alpha_k\|x_k-x^\star\|^2 < \infty,~~
\sum_{k=0}^{\infty}\alpha_k\|d_k\|^2 < \infty .
\]
Boundedness of \(V_k^\theta\) implies boundedness of
\(\{x_k\}\), \(\{u_k\}\), \(\{\lambda_k\}\), \(\{d_k\}\), and \(\{G_k\}\).
The increments satisfy
\[
\|x_{k+1}-x_k\| = \alpha_k\|G_k\| = O(\alpha_k),~~
\|u_{k+1}-u_k\| = \theta\alpha_k\|d_k\| = O(\alpha_k).
\]

To bound \(\|d_{k+1}-d_k\|\), observe from \eqref{eq:pressure_residual} that
\(d_{\rho_0}(x,u) = [u+\rho_0(Ax-b)]_+ - u\). The projection \([\cdot]_+\) is
firmly nonexpansive, hence 1-Lipschitz. For any \((x,u)\) and \((x',u')\),
\[
\|d(x,u)-d(x',u')\|
\le 2\|u-u'\| + \rho_0\|A\|\|x-x'\| .
\]
Thus \(\|d_{k+1}-d_k\| = O(\alpha_k)\) on the bounded trajectory.
Similarly,
\[
\bigl|\|x_{k+1}-x^\star\|^2 - \|x_k-x^\star\|^2\bigr|
\le
(\|x_{k+1}-x^\star\| + \|x_k-x^\star\|)\,\|x_{k+1}-x_k\|
= O(\alpha_k).
\]
Applying Lemma~\ref{lem:slow_variation} first to
\(a_k=\|x_k-x^\star\|^2\) and then to \(a_k=\|d_k\|^2\), together with the
slow-variation bounds above, yields
\[
\|x_k-x^\star\|^2\to0,
~~
\|d_k\|^2\to0.
\]
Therefore \(x_k\to x^\star\) and \(d_k\to0\).

It remains to prove \(G_k\to0\). Since \(\nabla f\) is \(L\)-Lipschitz and the
trajectory is bounded, there exists \(C_G'>0\) such that
\[
\|G_{k+1}-G_k\|
\le C_G'\bigl(\|x_{k+1}-x_k\| + \|u_{k+1}-u_k\|\bigr)
\le C_G'\alpha_k .
\]
Suppose \(G_k\not\to0\). Then there exist \(\varepsilon>0\) and infinitely many
indices \(k_j\) with \(\|G_{k_j}\|\ge 4\varepsilon\). Set
\(v_j = G_{k_j}/\|G_{k_j}\|\) and \(\Delta = \varepsilon/(2C_G')\).
Since \(\sum_k\alpha_k = \infty\) and \(\alpha_k\to0\), for each sufficiently
large \(k_j\) we can find \(\ell_j\ge k_j\) such that
\(\sum_{k=k_j}^{\ell_j}\alpha_k \ge \Delta\) and
\(\sum_{k=k_j}^{\ell_j-1}\alpha_k < \Delta\).
For \(k\in[k_j,\ell_j]\), the slow variation of \(G_k\) yields
\(\|G_k-G_{k_j}\| \le C_G'\sum_{t=k_j}^{k-1}\alpha_t \le \varepsilon\), hence
\(\langle G_k, v_j\rangle \ge 3\varepsilon\).
Using \(x_{k+1}=x_k-\alpha_k G_k\), we get
\(\langle x_{\ell_j+1}-x_{k_j}, v_j\rangle
= -\sum_{k=k_j}^{\ell_j}\alpha_k\langle G_k,v_j\rangle
\le -3\varepsilon\Delta\).
Thus \(\|x_{\ell_j+1}-x_{k_j}\|\ge 3\varepsilon\Delta\), contradicting
\(x_k\to x^\star\). Therefore \(G_k\to0\).

Finally, let \(u_{k_j}\to \bar u\) be any cluster point of \(\{u_k\}\).
Since \(d_k = \lambda_k-u_k\to0\), we also have \(\lambda_{k_j}\to \bar u\).
Taking limits in \(G_{k_j} = \nabla f(x_{k_j}) + A^\top\lambda_{k_j}\to0\) gives
\(\nabla f(x^\star) + A^\top\bar u = 0\).
From \(d_k\to0\) and \(x_k\to x^\star\) we obtain
\(\bar u = [\bar u + \rho_0(Ax^\star-b)]_+\).
By Proposition~\ref{prop:residual_zero_set}, \(\bar u\ge0\),
\(Ax^\star-b\le0\), and \(\bar u_i(Ax^\star-b)_i = 0\) for all \(i\).
Thus every cluster point belongs to \(\mathcal U^\star\).
Since \(\{u_k\}\) is bounded and all its cluster points lie in the closed set
\(\mathcal U^\star\), we conclude
\(\operatorname{dist}(u_k,\mathcal U^\star)\to0\).
Because \(\lambda_k - u_k = d_k\to0\), the same holds for \(\lambda_k\).
If \(\mathcal U^\star = \{u^\star\}\), pointwise convergence follows.
\end{proof}

The theorem shows that full projected multiplier replacement is not required for
convergence. Finite-gain residual tracking preserves the primal--dual
cancellation once the multiplier energy is weighted by \((\theta\rho_0)^{-1}\).
This explains why the gain ratio \(\theta=\beta_k/\alpha_k\) is not merely a
tuning parameter; it determines the correct relative scale between primal descent
and multiplier memory tracking.

\begin{remark}[Time-varying gain ratios]
\label{rem:time_varying_gain}
The proof above uses a fixed gain ratio \(\theta=\beta_k/\alpha_k\). The same
argument can be extended to a slowly varying ratio
\(\theta_k=\beta_k/\alpha_k\), provided that
\[
0<\underline\theta\le \theta_k\le \overline\theta<\infty,~~
0<\beta_k\le1,~~
\sum_{k=0}^{\infty} |\theta_{k+1}^{-1}-\theta_k^{-1}|<\infty .
\]
In that case, one uses the time-varying Lyapunov function
\(V_k^{\theta_k} = \|x_k-x^\star\|^2 + (\theta_k\rho_0)^{-1}\|u_k-u^\star\|^2\).
After the same primal--dual cancellation, the only additional term is
\(\rho_0^{-1}(\theta_{k+1}^{-1}-\theta_k^{-1})\|u_{k+1}-u^\star\|^2\).
Taking the positive part and using the uniform bounds on \(\theta_k\), this term
is controlled by a summable multiplicative perturbation of the Lyapunov
recursion. Hence the Robbins--Siegmund almost-supermartingale argument
\citep{robbins1971convergence} remains valid. The summability condition on
\(|\theta_{k+1}^{-1}-\theta_k^{-1}|\) is essential; without it, the variation
of the Lyapunov weight may accumulate and destroy the quasi-Fej\'er descent
structure \citep{Combettes2001QuasiFejer}.
\end{remark}

\subsection{Stochastic Convex-Affine Residual Bound}
\label{subsec:theory_stochastic_bound}

We now quantify how stochastic mini-batch feedback perturbs the finite-gain
backbone. Throughout this subsection, the pressure scale is the isotropic scalar
\(\rho_0>0\) and the constraint is \(c(x)=Ax-b\). The analysis follows the
standard stochastic-approximation convention of representing mini-batch
quantities as conditionally unbiased estimates plus martingale-difference noise
\citep{robbins1951stochastic,kushner2003stochastic,borkar2008stochastic,bottou2018optimization}.
Specifically, assume $\hat g_k = \nabla f(x_k) + \zeta_k,~
\hat c_k = Ax_k - b + \varepsilon_k,$ where \(\mathcal F_k\) is the sigma-field generated by all randomness before
sampling the mini-batch at iteration \(k\). The noise variables satisfy $\mathbb E[\zeta_k\mid\mathcal F_k]=0,~
\mathbb E[\|\zeta_k\|^2\mid\mathcal F_k]\le \sigma_g^2/B_k,~\mathbb E[\varepsilon_k\mid\mathcal F_k]=0,~
\mathbb E[\|\varepsilon_k\|^2\mid\mathcal F_k]\le \sigma_c^2/B_k .$ Here \(\zeta_k\in\mathbb R^d\) denotes the mini-batch gradient noise,
\(\varepsilon_k\in\mathbb R^m\) denotes the mini-batch constraint-estimation noise, \(B_k\) is the batch size at iteration \(k\), and
\(\sigma_g,\sigma_c\) are uniform conditional second-moment constants. The \(1/B_k\) scaling corresponds to mini-batch averaging under bounded single-sample second moments; the theorem only uses the displayed conditional moment bounds, not independence across different iterations. The stochastic pressure and residual are $\hat\lambda_k =[u_k+\rho_0\hat c_k]_+,~\hat d_k = \hat\lambda_k - u_k,$
and the stochastic update is $x_{k+1} = x_k - \alpha_k[\hat g_k + A^\top\hat\lambda_k],~u_{k+1} = u_k + \theta\alpha_k\hat d_k .$

Let $\lambda_k = [u_k+\rho_0(Ax_k-b)]_+,~d_k = \lambda_k - u_k,~p_k = \hat\lambda_k - \lambda_k .$ By firm nonexpansiveness of the projection, $\|p_k\| \le \rho_0\|\varepsilon_k\|,~~
\mathbb E[\|p_k\|^2\mid\mathcal F_k] \le \rho_0^2\sigma_c^2/B_k .$

Throughout the proof, \(C,C_0,C_1,\ldots\) denote finite positive constants whose values may change from line to line. Constants denoted by \(K_1,\ldots,K_4\) in the theorem statement are horizon-independent constants after summation. Their allowed dependence is stated explicitly in the theorem. To avoid imposing global boundedness as an unverified assumption, we state the result for a stopped process. For \(R>0\), define $\tau_R = \inf\{k\ge 0 : \|x_k-x^\star\| + \|u_k-u^\star\| > R\}.$ The stopped iterates are \(x_k^R := x_{k\wedge\tau_R}\) and \(u_k^R := u_{k\wedge\tau_R}\). Equivalently, the stopped process follows the original stochastic update on \(\{k<\tau_R\}\) and has zero increment on \(\{k\ge\tau_R\}\). In the following, we omit the superscript \(R\) for readability, but all conditional recursions are understood in this stopped sense.
If the implementation uses compact primal projection and bounded dual clipping, then \(\tau_R=\infty\) for a sufficiently large \(R\). The following stopped finite-horizon bound is a residual version of the standard almost-supermartingale and stochastic-approximation argument used for noisy gradient-type recursions \citep{robbins1971convergence,kushner2003stochastic,borkar2008stochastic}.

\begin{theorem}[Stopped stochastic weighted residual bound]
\label{thm:stochastic_residual_bound}
Suppose the problem assumptions of Theorem~\ref{thm:finite_gain_convergence}
hold for the convex-affine backbone, and let \(u^\star\in\mathcal U^\star\) be
fixed. For a finite horizon \(T\), assume \(0<\theta\alpha_k\le1\) for
\(k=0,\ldots,T-1\). Then there exist constants
\(K_1,K_2,K_3,K_4>0\), depending on \(\mu\), \(L\), \(\|A\|\), \(\rho_0\),
\(\theta\), and \(R\), but not on \(T\), \(B_k\), or the noise realization, such
that
\begin{equation}
\label{eq:stochastic_weighted_bound}
\begin{aligned}
&\frac{
\sum_{k=0}^{T-1}
\alpha_k\,
\mathbb E\!\bigl[\ind_{\{k<\tau_R\}}
(\|x_k-x^\star\|^2+\|d_k\|^2)\bigr]
}{
\sum_{k=0}^{T-1}\alpha_k
}
\le
\frac{
K_1V_0^\theta + K_2\sum_{k=0}^{T-1}\alpha_k^2
}{
\sum_{k=0}^{T-1}\alpha_k
}\\
&+
K_3\,
\frac{
\sum_{k=0}^{T-1}\alpha_k\,\rho_0\sigma_c/\sqrt{B_k}
}{
\sum_{k=0}^{T-1}\alpha_k
}+
K_4\,
\frac{
\sum_{k=0}^{T-1}\alpha_k^2\,(\sigma_g^2+\rho_0^2\sigma_c^2)/B_k
}{
\sum_{k=0}^{T-1}\alpha_k
}.
\end{aligned}
\end{equation}
In particular, if \(\alpha_k = T^{-1/2}\) and \(B_k\equiv B\), then
\[
T^{-1}\sum_{k=0}^{T-1}
\mathbb E\!\bigl[\ind_{\{k<\tau_R\}}
(\|x_k-x^\star\|^2+\|d_k\|^2)\bigr]
\le
O(T^{-1/2}) + O(\rho_0\sigma_c/\sqrt{B})
+ O\bigl((\sigma_g^2+\rho_0^2\sigma_c^2)/(B\sqrt{T})\bigr).
\]
\end{theorem}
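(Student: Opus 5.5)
The plan is to repeat the deterministic Lyapunov computation of Theorem~\ref{thm:finite_gain_convergence} with the same weighted function $V_k^\theta=\|x_k-x^\star\|^2+(\theta\rho_0)^{-1}\|u_k-u^\star\|^2$. We carry along the stochastic perturbations $\zeta_k$ and $p_k=\hat\lambda_k-\lambda_k$, take conditional expectations, and telescope on the stopped process. Write the stochastic primal direction as $\hat G_k=G_k+\zeta_k+A^\top p_k$ and the stochastic residual as $\hat d_k=d_k+p_k$.

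\textbf{Step 1 (the expansion).} Expanding $\|x_{k+1}-x^\star\|^2$ gives the deterministic terms of Step~2, plus a cross term $-2\alpha_k\langle x_k-x^\star,\zeta_k+A^\top p_k\rangle$ and a quadratic term $\alpha_k^2\|\hat G_k\|^2$. Expanding $(\theta\rho_0)^{-1}\|u_{k+1}-u^\star\|^2$ gives the deterministic terms of Steps~3--4, with $d_k$ in place of $\hat d_k$. It also gives a cross term $2\alpha_k\rho_0^{-1}\langle u_k-u^\star,p_k\rangle$ and a quadratic term $\theta\alpha_k^2\rho_0^{-1}\|\hat d_k\|^2$. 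The projection inequality of Step~3 is applied to the exact pairs $(\lambda_k,d_k)$, so the cancellation of $\langle A(x_k-x^\star),\lambda_k-u^\star\rangle$ survives unchanged. After cancellation we get, on $\{k<\tau_R\}$,
\[
V_{k+1}^\theta\le V_k^\theta-2\alpha_k\mu\|x_k-x^\star\|^2-\tfrac{2\alpha_k}{\rho_0}\|d_k\|^2+\alpha_k M_k+\alpha_k^2Q_k .
\]
Here $M_k$ collects the noise cross terms and $Q_k$ the quadratic terms.

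\textbf{Step 2 (conditional expectations).} $\zeta_k$ has zero conditional mean, so $\mathbb E[\langle x_k-x^\star,\zeta_k\rangle\mid\mathcal F_k]=0$, since $x_k$ is $\mathcal F_k$-measurable. The main obstacle is $p_k$. Because the projection $[\cdot]_+$ is nonlinear, $p_k$ is \emph{not} conditionally mean-zero. The terms $\langle A(x_k-x^\star),p_k\rangle$ and $\langle u_k-u^\star,p_k\rangle$ therefore have no martingale structure. My plan is to bound them crudely but uniformly using the stopping time. On $\{k<\tau_R\}$ we have $\|x_k-x^\star\|+\|u_k-u^\star\|\le R$. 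By Jensen and the bound $\mathbb E[\|p_k\|^2\mid\mathcal F_k]\le\rho_0^2\sigma_c^2/B_k$,
\[
\mathbb E\bigl[|\langle\cdot,p_k\rangle|\mid\mathcal F_k\bigr]\le C(R,\|A\|,\rho_0)\,\rho_0\sigma_c/\sqrt{B_k}.
\]
This produces the first-order, non-vanishing $K_3$ term, and explains why it appears with weight $\alpha_k$ rather than $\alpha_k^2$. For the quadratic terms, $\|\hat G_k\|^2\le 3\|G_k\|^2+3\|\zeta_k\|^2+3\|A\|^2\|p_k\|^2$ and $\|\hat d_k\|^2\le2\|d_k\|^2+2\|p_k\|^2$. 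The deterministic parts $\|G_k\|^2,\|d_k\|^2$ are bounded by a constant depending on $R$ via Step~5, which yields the $K_2\sum\alpha_k^2$ term. The noise parts yield the $K_4$ term with $(\sigma_g^2+\rho_0^2\sigma_c^2)/B_k$.

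\textbf{Step 3 (summation).} All increments of the stopped process vanish for $k\ge\tau_R$, and $\mathbf 1_{\{k<\tau_R\}}$ is $\mathcal F_k$-measurable. So we multiply the conditional inequality by this indicator and take total expectations. Since $\mathbf 1_{\{k+1<\tau_R\}}\le\mathbf 1_{\{k<\tau_R\}}$ and $V^\theta$ of the stopped process is nonnegative, we can telescope $\mathbb E[V_{k\wedge\tau_R}^\theta]$ from $0$ to $T$. Using $\min(2\mu,2/\rho_0)\ge c>0$ and dividing by $c\sum\alpha_k$ gives the displayed bound with $K_1=1/c$ and the other constants scaled accordingly; none of them depends on $T$, $B_k$, or the realization. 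The specialization $\alpha_k=T^{-1/2}$, $B_k\equiv B$ is direct substitution: $\sum\alpha_k=\sqrt T$ and $\sum\alpha_k^2=1$. The condition $\theta\alpha_k\le1$ is used only as in Step~5 of the deterministic proof, to absorb $\theta\alpha_k^2\|d_k\|^2$ into $\alpha_k\|d_k\|^2$.
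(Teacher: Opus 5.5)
Your proposal is correct and follows the paper's proof essentially step for step. It uses the same weighted Lyapunov function $V_k^\theta$ and the same decomposition $\hat G_k=G_k+\zeta_k+A^\top p_k$, $\hat d_k=d_k+p_k$, with the projection inequality applied to the exact pair $(\lambda_k,d_k)$ so the cross term still cancels. It also uses the same crude stopped-region Cauchy--Schwarz/Jensen bound to produce the first-order $\rho_0\sigma_c/\sqrt{B_k}$ term, and the same stopped telescoping.

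One small consistency point concerns how you handle the deterministic part of $\theta\alpha_k^2\rho_0^{-1}\|d_k+p_k\|^2$. Since you (like the paper) bound it by an $R$-dependent constant, you do not need $\theta\alpha_k\le1$ for absorption. In fact, absorbing $2\theta\alpha_k^2\rho_0^{-1}\|d_k\|^2$ after the Young split would leave no negative margin when $\theta\alpha_k=1$, so keep the constant-bound route.
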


\begin{proof}
\proofstep{Step 1: stochastic primal expansion}
Let \(\hat G_k = \hat g_k + A^\top\hat\lambda_k\).
Since \(\hat g_k = \nabla f(x_k)+\zeta_k\) and
\(\hat\lambda_k = \lambda_k + p_k\), we have
\(\hat G_k = G_k + \zeta_k + A^\top p_k\) with
\(G_k = \nabla f(x_k) + A^\top\lambda_k\). The primal update gives
\[
\begin{aligned}
\|x_{k+1}-x^\star\|^2
&= \|x_k-x^\star - \alpha_k\hat G_k\|^2  \\
&= \|x_k-x^\star\|^2
- 2\alpha_k\langle x_k-x^\star,G_k\rangle
- 2\alpha_k\langle x_k-x^\star,\zeta_k+A^\top p_k\rangle
+ \alpha_k^2\|\hat G_k\|^2 .
\end{aligned}
\]
The \(G_k\)-term yields the same deterministic descent and cross term as in
Theorem~\ref{thm:finite_gain_convergence}. The gradient noise satisfies
\(\mathbb E[\langle x_k-x^\star,\zeta_k\rangle\mid\mathcal F_k]=0\).
On the stopped region, \(\|x_k-x^\star\|\le R\), so
\(2\alpha_k|\langle x_k-x^\star,A^\top p_k\rangle|
\le 2\alpha_k R\|A\|\|p_k\|\).

\proofstep{Step 2: stochastic multiplier expansion}
The stochastic multiplier update is
\(u_{k+1} = u_k + \theta\alpha_k(d_k + p_k)\).
Expanding the weighted multiplier energy gives
\[
\begin{aligned}
(\theta\rho_0)^{-1}\|u_{k+1}-u^\star\|^2
= (\theta\rho_0)^{-1}\|u_k-u^\star\|^2
+ \frac{2\alpha_k}{\rho_0}\langle u_k-u^\star,d_k\rangle
+ \frac{2\alpha_k}{\rho_0}\langle u_k-u^\star,p_k\rangle
+ \frac{\theta\alpha_k^2}{\rho_0}\|d_k+p_k\|^2 .
\end{aligned}
\]
The \(d_k\)-term is handled by the same projection inequality as in
Theorem~\ref{thm:finite_gain_convergence}, which cancels the primal cross term.
The pressure-noise term is bounded on the stopped region by
\(2\alpha_k\rho_0^{-1}|\langle u_k-u^\star,p_k\rangle|
\le 2\alpha_k R\rho_0^{-1}\|p_k\|\).

\proofstep{Step 3: conditional Lyapunov recursion}
Adding the primal and multiplier expansions, the deterministic cross terms
cancel. The remaining quadratic terms are bounded as follows. Using
\(\hat G_k = G_k + \zeta_k + A^\top p_k\),
\[
\|\hat G_k\|^2 \le 3\|G_k\|^2 + 3\|\zeta_k\|^2 + 3\|A\|^2\|p_k\|^2,
\]
and \(\|d_k+p_k\|^2 \le 2\|d_k\|^2 + 2\|p_k\|^2\). On the stopped region,
\(V_k^\theta\) is bounded by a constant depending on \(R\), and hence
\(\|G_k\|\) and \(\|d_k\|\) are bounded. Taking conditional expectations and using Young's inequality together with
the martingale-difference moment bounds on \(\zeta_k\) and \(p_k\) yields
\[
\mathbb E\bigl[\|\hat G_k\|^2 + \|d_k+p_k\|^2 \mid \mathcal F_k\bigr]
\le C(1 + V_k^\theta) + C(\sigma_g^2+\rho_0^2\sigma_c^2)/B_k
\]
for some constant \(C>0\). Therefore
\[
\begin{aligned}
\mathbb E[V_{k+1}^\theta\mid\mathcal F_k]
\le\;&
(1+C_0\alpha_k^2)V_k^\theta
- C_1\alpha_k(\|x_k-x^\star\|^2 + \|d_k\|^2)  \\
&+ C_2\alpha_k\,\mathbb E[\|p_k\|\mid\mathcal F_k]
+ C_3\alpha_k^2(\sigma_g^2+\rho_0^2\sigma_c^2)/B_k
+ C_4\alpha_k^2 .
\end{aligned}
\]
By firm nonexpansiveness of Euclidean projection
\citep{bauschke2017convex} and Jensen's inequality,
\(\mathbb E[\|p_k\|\mid\mathcal F_k]
\le (\mathbb E[\|p_k\|^2\mid\mathcal F_k])^{1/2}
\le \rho_0\sigma_c/\sqrt{B_k}\).

\proofstep{Step 4: stopped summation}
By the stopped-process construction, the update is applied only on
\(\{k<\tau_R\}\) and the increment is zero on \(\{k\ge\tau_R\}\). Hence the
conditional recursion becomes
\[
\begin{aligned}
\mathbb E[V_{k+1}^\theta \mid \mathcal F_k]
\le\;&
V_k^\theta
+ \ind_{\{k<\tau_R\}}
\Bigl[
C_0\alpha_k^2 V_k^\theta
- C_1\alpha_k(\|x_k-x^\star\|^2+\|d_k\|^2) \\
&~~
+ C_2\alpha_k\rho_0\sigma_c/\sqrt{B_k}
+ C_3\alpha_k^2(\sigma_g^2+\rho_0^2\sigma_c^2)/B_k
+ C_4\alpha_k^2
\Bigr].
\end{aligned}
\]
Taking unconditional expectations,
\[
\begin{aligned}
\mathbb E[V_{k+1}^\theta]
\le\;&
\mathbb E[V_k^\theta]
- C_1\alpha_k\,\mathbb E\!\bigl[\ind_{\{k<\tau_R\}}
(\|x_k-x^\star\|^2+\|d_k\|^2)\bigr] \\
&+ C_0\alpha_k^2\,\mathbb E[\ind_{\{k<\tau_R\}}V_k^\theta]
+ C_2\alpha_k\rho_0\sigma_c/\sqrt{B_k}+ C_3\alpha_k^2(\sigma_g^2+\rho_0^2\sigma_c^2)/B_k
+ C_4\alpha_k^2 .
\end{aligned}
\]
On the stopped region, \(V_k^\theta\) is bounded, so the term
\(C_0\alpha_k^2\,\mathbb E[\ind_{\{k<\tau_R\}}V_k^\theta]\) is absorbed into
\(C\alpha_k^2\). Summing from \(k=0\) to \(T-1\) gives
\[
\begin{aligned}
\sum_{k=0}^{T-1}
\alpha_k\,
\mathbb E\!\bigl[\ind_{\{k<\tau_R\}}
(\|x_k-x^\star\|^2+\|d_k\|^2)\bigr]
\le\;&
K_1 V_0^\theta + K_2\sum_{k=0}^{T-1}\alpha_k^2 + K_3\sum_{k=0}^{T-1}\alpha_k\rho_0\sigma_c/\sqrt{B_k} \\
&+ K_4\sum_{k=0}^{T-1}\alpha_k^2(\sigma_g^2+\rho_0^2\sigma_c^2)/B_k .
\end{aligned}
\]
Dividing by \(\sum_{k=0}^{T-1}\alpha_k\) yields
\eqref{eq:stochastic_weighted_bound}. This is the usual finite-horizon
weighted-average consequence of a one-step almost-supermartingale recursion
\citep{robbins1971convergence,kushner2003stochastic}. The stated rate follows by
substituting \(\alpha_k = T^{-1/2}\) and \(B_k\equiv B\).
\end{proof}

The bound contains a non-vanishing pressure-noise floor
\(O(\rho_0\sigma_c/\sqrt{B})\) when the batch size \(B\) is fixed. Thus, under
only second-moment mini-batch noise assumptions, the stochastic iterates are
controlled in a residual neighborhood rather than guaranteed to converge exactly
to the KKT set. Exact residual convergence can be recovered by increasing
\(B_k\) so that \(\rho_0\sigma_c/\sqrt{B_k}\to0\), or by using a vanishing
effective pressure scale when such a schedule is compatible with the desired
constraint accuracy. In the present algorithm, \(\rho_0\) is kept bounded away
from zero to maintain feasibility pressure, so the practical mechanisms for
reducing this neighborhood are variance reduction, larger batches, and filtering.

Even if \(\hat c_k\) is an unbiased estimate of \(c(x_k)\), the projection
\([\cdot]_+\) is nonlinear:
\[
\mathbb E[\hat c_k\mid\mathcal F_k] = c(x_k)
\;\;\not\Rightarrow\;\;
\mathbb E\bigl[[u_k+\rho_0\hat c_k]_+\mid\mathcal F_k\bigr]
= [u_k+\rho_0 c(x_k)]_+ .
\]
Therefore the main stochastic theorem uses the conservative first-moment pressure
bound. A sharper \(O(1/B)\) pressure-noise floor is available only when the
trajectory is locally away from the nonsmooth projection boundary.

\begin{proposition}[Local second-moment refinement away from projection kinks]
\label{prop:local_second_moment_refinement}
Suppose that, on the stopped region, there exists a margin \(\delta>0\) such that
\(|u_{k,i}+\rho_0 c_i(x_k)| \ge \delta\) for every coordinate \(i\). Assume also
that \(\|\rho_0\varepsilon_k\|_\infty \le \delta/2\) almost surely. Then the
active pattern of the projection is unchanged by the stochastic perturbation,
and there exists an \(\mathcal F_k\)-measurable diagonal matrix \(D_k\) with
diagonal entries in \(\{0,1\}\) such that \(p_k = D_k\rho_0\varepsilon_k\).
Consequently,
\[
\mathbb E[p_k\mid\mathcal F_k] = 0,~~
\mathbb E[\|p_k\|^2\mid\mathcal F_k] \le \rho_0^2\sigma_c^2/B_k .
\]
Under this local active-pattern stability condition, the first-moment pressure
term in \eqref{eq:stochastic_weighted_bound} disappears, and the pressure-noise
contribution is of order \(O(\rho_0^2\sigma_c^2/B)\) for fixed \(B\).
\end{proposition}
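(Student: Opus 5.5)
The argument is a coordinatewise case analysis on the sign of the unperturbed pre-projection quantity \(z_{k,i}:=u_{k,i}+\rho_0 c_i(x_k)\), followed by the moment computation and a substitution into the recursion of Theorem~\ref{thm:stochastic_residual_bound}. The perturbed quantity is \(\hat z_{k,i}=z_{k,i}+\rho_0\varepsilon_{k,i}\), and \(|\rho_0\varepsilon_{k,i}|\le\delta/2\) almost surely. Two cases arise.
\begin{itemize}
\item If \(z_{k,i}\ge\delta\), then \(\hat z_{k,i}\ge\delta/2>0\). Both projections are the identity, so \(p_{k,i}=\rho_0\varepsilon_{k,i}\).
\item If \(z_{k,i}\le-\delta\), then \(\hat z_{k,i}\le-\delta/2<0\). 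Both projections vanish, so \(p_{k,i}=0\).
\end{itemize}
The margin hypothesis says these are the only cases. Hence \(p_k=D_k\rho_0\varepsilon_k\) with \(D_k=\operatorname{diag}(\mathbf 1\{z_{k,i}>0\})\). Since \(z_k\) depends only on \((x_k,u_k)\), \(D_k\) is \(\mathcal F_k\)-measurable. This also gives the claimed invariance of the active pattern.

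For the moments, pull \(D_k\) out of the conditional expectation. This gives \(\mathbb E[p_k\mid\mathcal F_k]=\rho_0D_k\,\mathbb E[\varepsilon_k\mid\mathcal F_k]=0\). Because \(\|D_k\|\le1\), we also get \(\mathbb E[\|p_k\|^2\mid\mathcal F_k]\le\rho_0^2\mathbb E[\|\varepsilon_k\|^2\mid\mathcal F_k]\le\rho_0^2\sigma_c^2/B_k\). On the stopped region these statements hold on \(\{k<\tau_R\}\), which is itself \(\mathcal F_k\)-measurable, so the indicator can be carried through.

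For the consequence, I revisit Steps 1--3 of the proof of Theorem~\ref{thm:stochastic_residual_bound}. There the only first-moment contributions were the linear noise terms \(-2\alpha_k\langle x_k-x^\star,A^\top p_k\rangle\) and \(2\alpha_k\rho_0^{-1}\langle u_k-u^\star,p_k\rangle\). Previously these were bounded in absolute value by \(C\alpha_k\|p_k\|\). Now \(x_k-x^\star\) and \(u_k-u^\star\) are \(\mathcal F_k\)-measurable and \(\mathbb E[p_k\mid\mathcal F_k]=0\), so both terms have zero conditional expectation and can be dropped exactly, just like the \(\zeta_k\) term. The remaining noise enters only through the quadratic terms \(\alpha_k^2\|A\|^2\|p_k\|^2\) and \(\theta\alpha_k^2\rho_0^{-1}\|p_k\|^2\). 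These are bounded by \(C\alpha_k^2\rho_0^2\sigma_c^2/B_k\), which is absorbed into the \(K_4\) term. Summing as in Step 4, the \(K_3\) term disappears. With fixed \(B\), the surviving noise contributions scale as \(\rho_0^2\sigma_c^2/B\) (times \(\alpha_k^2\)-weighted sums), which gives the stated \(O(\rho_0^2\sigma_c^2/B)\) order.

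The main obstacle is bookkeeping rather than a real difficulty. The margin must apply at the current \((x_k,u_k)\), which is \(\mathcal F_k\)-measurable and inside the stopped region, so that \(D_k\) may be treated as deterministic given \(\mathcal F_k\). I will state explicitly that the margin is assumed along the stopped trajectory for every \(k<\tau_R\).
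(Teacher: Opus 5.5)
Your proof is correct and follows the paper's argument: the same coordinatewise sign analysis of \(u_{k,i}+\rho_0 c_i(x_k)\) using the \(\delta\) versus \(\delta/2\) margin, the \(\mathcal F_k\)-measurable selector \(D_k\), the conditional moment computation, and substitution into the proof of Theorem~\ref{thm:stochastic_residual_bound}. The only difference is detail: you check explicitly that the two linear \(p_k\) terms now have zero conditional expectation instead of being bounded by \(C\alpha_k\|p_k\|\), which the paper states in a single sentence.
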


\begin{proof}
For each coordinate \(i\), define \(a_{k,i} = u_{k,i} + \rho_0 c_i(x_k)\).
The margin assumption gives \(|a_{k,i}|\ge\delta\), and the bounded-noise
assumption gives \(|\rho_0\varepsilon_{k,i}|\le\delta/2\). Therefore
\(a_{k,i}\) and \(a_{k,i}+\rho_0\varepsilon_{k,i}\) have the same sign.
If \(a_{k,i}>0\), then
\(p_{k,i} = [a_{k,i}+\rho_0\varepsilon_{k,i}]_+ - [a_{k,i}]_+
= \rho_0\varepsilon_{k,i}\).
If \(a_{k,i}<0\), then \(p_{k,i}=0\).
Thus \(p_k = D_k\rho_0\varepsilon_k\), where \(D_k\) selects the locally active
pressure coordinates. Since \(D_k\) is \(\mathcal F_k\)-measurable and
\(\mathbb E[\varepsilon_k\mid\mathcal F_k]=0\), we obtain
\(\mathbb E[p_k\mid\mathcal F_k]=0\) and
\(\mathbb E[\|p_k\|^2\mid\mathcal F_k]
= \rho_0^2\,\mathbb E[\|D_k\varepsilon_k\|^2\mid\mathcal F_k]
\le \rho_0^2\sigma_c^2/B_k\).
Substituting this into the proof of
Theorem~\ref{thm:stochastic_residual_bound} removes the first-moment
pressure-bias term and leaves only the second-moment pressure term.
\end{proof}

\subsection{Filtering as a Noise--Lag Tradeoff}
\label{subsec:theory_filtering}

The stochastic bound above shows that constraint noise enters mainly through the
pressure channel. Filtering acts before pressure formation and trades variance
reduction for tracking lag. We continue to work under the convex-affine setting
\(c(x)=Ax-b\) with isotropic pressure scale \(\rho_0\).
Using the filtered signal defined in \eqref{eq:constraint_filter}, write $\hat c_k = c(x_k) + \varepsilon_k,~
e_k = \bar c_k - c(x_k).$
Where \(e_k\) is the filtering error, \(\bar c_k\) is the exponentially smoothed constraint signal, and \(\gamma\) is the smoothing coefficient. We also use \(\alpha_{\max}:=\sup_k\alpha_k\) and a stopped-region bound \(M_G\) on the second moment of the stochastic primal direction. Assume \(c\) is \(L_c\)-Lipschitz with \(L_c = \|A\|\) (since \(c(x)=Ax-b\)), $\mathbb E[\varepsilon_k\mid\mathcal F_k] = 0,~\mathbb E[\|\varepsilon_k\|^2\mid\mathcal F_k] \le \sigma_c^2/B .$
For the stopped process inSection~\ref{subsec:theory_stochastic_bound}, the stochastic primal direction satisfies \(\mathbb E[\|\hat G_k\|^2\mid\mathcal F_k] \le M_G^2\). This holds because \(\hat G_k = G_k + \zeta_k + A^\top p_k\), and on the stopped region every component has bounded second moment by the definitions of \(\zeta_k\), \(p_k\), and the boundedness of \(G_k\) (which follows from the boundedness of \(V_k^\theta\) as shown in Theorem~\ref{thm:finite_gain_convergence}). The following bound is a direct EMA bias--variance decomposition under a martingale-difference noise model, as commonly used in stochastic approximation and stochastic-gradient analyses \citep{kushner2003stochastic,borkar2008stochastic,bottou2018optimization}.

\begin{proposition}[Filtering error bound in expectation]
\label{prop:filtering_error_bound}
For constant \(\gamma\in(0,1]\), the filtering error satisfies
\begin{equation}
\label{eq:filter_error_bound}
\mathbb E\|e_k\|^2
\le
C(1-\gamma)^{2k}\|e_0\|^2
+ C L_c^2 M_G^2 \alpha_{\max}^2 / \gamma^2
+ C\gamma\sigma_c^2/B .
\end{equation}
Moreover,
\[
\|[u_k+\rho_0\bar c_k]_+ - [u_k+\rho_0 c(x_k)]_+\|
\le \rho_0\|e_k\|,
\]
so the filtering error enters the stochastic residual bound through the pressure
channel.
\end{proposition}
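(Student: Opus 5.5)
Since $\bar c_k=\tilde c_k$ follows the EMA recursion \eqref{eq:constraint_filter} with constant $\gamma$, the plan is to derive a one-step recursion for the filtering error $e_k=\bar c_k-c(x_k)$ and unroll it. Writing $\hat c_k=c(x_k)+\varepsilon_k$ and adding and subtracting $c(x_{k-1})$ gives
\[
e_k=(1-\gamma)e_{k-1}+\gamma\varepsilon_k-(1-\gamma)\bigl(c(x_k)-c(x_{k-1})\bigr).
\]
This splits the error into three pieces: a contraction of the previous error, fresh martingale-difference noise, and a drift term caused by the moving iterate. Unrolling yields
\[
e_k=(1-\gamma)^k e_0+\gamma\sum_{j=1}^{k}(1-\gamma)^{k-j}\varepsilon_j-\sum_{j=1}^{k}(1-\gamma)^{k-j+1}\bigl(c(x_j)-c(x_{j-1})\bigr).
\]
Applying $\|a+b+c\|^2\le3(\|a\|^2+\|b\|^2+\|c\|^2)$ gives the three terms of \eqref{eq:filter_error_bound}, with $C=3$ or a similar absolute constant.

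The transient term is immediate. For the noise term, I would use that $\varepsilon_j$ is a martingale difference: $\varepsilon_j$ is $\mathcal F_{j+1}$-measurable with $\mathbb E[\varepsilon_j\mid\mathcal F_j]=0$. Hence the cross terms vanish in expectation, and the second moment is at most
\[
\gamma^2\sum_j(1-\gamma)^{2(k-j)}\frac{\sigma_c^2}{B}\le\frac{\gamma^2}{1-(1-\gamma)^2}\,\frac{\sigma_c^2}{B}\le\frac{\gamma\sigma_c^2}{B},
\]
using $1-(1-\gamma)^2=\gamma(2-\gamma)\ge\gamma$.

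For the lag term, since $c$ is affine I would write $\|c(x_j)-c(x_{j-1})\|\le L_c\alpha_{j-1}\|\hat G_{j-1}\|\le L_c\alpha_{\max}\|\hat G_{j-1}\|$. No orthogonality is available here, so I would instead use Cauchy--Schwarz with weights $w_j=(1-\gamma)^{k-j+1}$:
\[
\Bigl\|\sum_j w_j v_j\Bigr\|^2\le\Bigl(\sum_j w_j\Bigr)\sum_j w_j\|v_j\|^2 .
\]
Taking expectations, using the tower property, and applying $\mathbb E[\|\hat G_{j-1}\|^2\mid\mathcal F_{j-1}]\le M_G^2$ gives a bound $(\sum_j w_j)^2L_c^2\alpha_{\max}^2M_G^2\le L_c^2M_G^2\alpha_{\max}^2/\gamma^2$.

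The main obstacle is the stopped-process bookkeeping. The bound $M_G$ holds only on $\{k<\tau_R\}$, so I would interpret $x_j$ as the stopped iterates, whose increments vanish after $\tau_R$. In that case $\|\hat G\|$ is replaced by $\ind_{\{j<\tau_R\}}\|\hat G_j\|$, and the conditional moment bound still applies. I would also need to handle the indexing, namely whether the filter starts at $\bar c_{-1}=0$ or at $e_0$; any resulting discrepancy is absorbed into $C$.

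The second claim follows from 1-Lipschitzness of the projection $[\cdot]_+$, which is firmly nonexpansive, as already used for $p_k$:
\[
\|[u_k+\rho_0\bar c_k]_+-[u_k+\rho_0c(x_k)]_+\|\le\rho_0\|\bar c_k-c(x_k)\|=\rho_0\|e_k\|.
\]
Consequently, replacing $p_k$ in the proof of Theorem~\ref{thm:stochastic_residual_bound} by this pressure error shows that $e_k$ enters only through the pressure channel.
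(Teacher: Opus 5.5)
Your proposal is correct and follows the paper's proof essentially step for step: the same one-step error recursion and unrolling, martingale-difference orthogonality for the noise term (giving $\gamma/(2-\gamma)\le\gamma$), and nonexpansiveness of $[\cdot]_+$ for the pressure claim. The only difference is cosmetic: you bound the lag term with a weighted Cauchy--Schwarz inequality where the paper uses Minkowski's inequality in $L^2$, and both give the same $L_c^2M_G^2\alpha_{\max}^2/\gamma^2$ contribution; your explicit stopped-process bookkeeping is extra care that the paper only asserts.
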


\begin{proof}
From \eqref{eq:constraint_filter},
\(\bar c_k = (1-\gamma)\bar c_{k-1} + \gamma(c(x_k)+\varepsilon_k)\).
Subtracting \(c(x_k)\) and adding and subtracting \(c(x_{k-1})\) gives
\[
e_k = (1-\gamma)e_{k-1} + (1-\gamma)(c(x_{k-1})-c(x_k)) + \gamma\varepsilon_k .
\]
Unrolling the recursion,
\[
e_k = (1-\gamma)^k e_0
+ \sum_{t=1}^k (1-\gamma)^{k-t+1}(c(x_{t-1})-c(x_t))
+ \gamma\sum_{t=1}^k (1-\gamma)^{k-t}\varepsilon_t .
\]
The initial term contributes \((1-\gamma)^{2k}\|e_0\|^2\).
For the lag term, Minkowski's inequality gives
\[
\bigl(\mathbb E\|\text{lag}\|^2\bigr)^{1/2}
\le \sum_{t=1}^k (1-\gamma)^{k-t+1}
\bigl(\mathbb E\|c(x_{t-1})-c(x_t)\|^2\bigr)^{1/2}.
\]
By Lipschitz continuity and the bound \(\|x_t-x_{t-1}\|
\le \alpha_{t-1}\|\hat G_{t-1}\|\) (from the primal update with
nonexpansive projection), we have
\(\bigl(\mathbb E\|x_t-x_{t-1}\|^2\bigr)^{1/2}
\le \alpha_{t-1}M_G \le \alpha_{\max}M_G\).
Hence
\[
\bigl(\mathbb E\|\text{lag}\|^2\bigr)^{1/2}
\le L_c M_G \alpha_{\max} \sum_{j=0}^{k-1}(1-\gamma)^{j+1}
\le L_c M_G \alpha_{\max}/\gamma,
\]
and the squared lag contribution is bounded by
\(L_c^2 M_G^2 \alpha_{\max}^2/\gamma^2\).

For the stochastic noise term, the martingale-difference property gives
\[
\mathbb E\Bigl\|\gamma\sum_{t=1}^k (1-\gamma)^{k-t}\varepsilon_t\Bigr\|^2
= \gamma^2\sum_{t=1}^k (1-\gamma)^{2(k-t)}\mathbb E\|\varepsilon_t\|^2
\le \frac{\gamma}{2-\gamma}\,\frac{\sigma_c^2}{B}.
\]
Combining the three bounds and absorbing numerical constants yields
\eqref{eq:filter_error_bound}. The last inequality follows directly from the
firm nonexpansiveness of the projection \([\cdot]_+\).
\end{proof}

The bound has a direct interpretation. A smaller \(\gamma\) reduces the variance
term \(O(\gamma\sigma_c^2/B)\), but increases the lag term
\(O(\alpha_{\max}^2/\gamma^2)\). Thus filtering is a noise--lag tradeoff rather
than a uniformly monotone feasibility improvement. This matches the algorithmic
role described in Section~\ref{subsec:stochastic_components}: filtering
stabilizes the measurement channel before pressure formation, but excessive
smoothing can delay the response to changing constraints.

\subsection{Adaptive Scaling and Dynamic Correction as Feedback-Loop Perturbations}
\label{subsec:theory_scaling_dynamic}

Adaptive scaling and dynamic correction modify different channels of the
residual-controlled feedback loop. Adaptive scaling changes the map from the
constraint signal to the projected pressure, while residual-\(\nu\)PI correction
changes the memory-feedback signal after the residual has been formed. This
subsection does not claim a separate convergence theorem for these optional
modules. Instead, it gives a perturbation accounting result showing how such
modules enter the Lyapunov recursion of the convex-affine backbone.

\paragraph{Adaptive scaling as a pressure and residual perturbation}
In the implementation, the adaptive scale is computed from the clipped
second-moment rule in \eqref{eq:adaptive_scale}. For the present perturbation
analysis, it is enough to regard
\(\boldsymbol\rho_k = (\rho_{k,1},\ldots,\rho_{k,m})^\top\in\mathbb R_{++}^m\)
as a clipped coordinatewise scale. Let \(\rho_0\mathbf 1\) be the reference
isotropic scale used in the convex-affine backbone. For a filtered constraint
signal \(\bar c_k\), define
\[
\lambda_k^{\rho_0} = [u_k+\rho_0\bar c_k]_+,~~
d_k^{\rho_0} = \lambda_k^{\rho_0}-u_k,~~
\lambda_k^{\boldsymbol\rho_k} = [u_k+\boldsymbol\rho_k\odot\bar c_k]_+,~~
d_k^{\boldsymbol\rho_k} = \lambda_k^{\boldsymbol\rho_k}-u_k .
\]
The scale-induced perturbation is
\(p_k^\rho := \lambda_k^{\boldsymbol\rho_k} - \lambda_k^{\rho_0}\).
Since both residuals are formed by subtracting the same stored multiplier
\(u_k\), the same perturbation also appears in the memory residual:
\(d_k^{\boldsymbol\rho_k} = d_k^{\rho_0} + p_k^\rho\).
Thus adaptive scaling should be understood as a perturbation of both the primal
pressure channel and the memory-residual channel.

\begin{proposition}[Adaptive scaling as a projected-pressure perturbation]
\label{prop:adaptive_scaling_perturbation}
The scale-induced perturbation satisfies
\[
\|p_k^\rho\|
\le \|(\boldsymbol\rho_k - \rho_0\mathbf 1)\odot\bar c_k\|.
\]
Consequently, if \(\boldsymbol\rho_k\) is clipped and \(\bar c_k\) is bounded,
then adaptive scaling contributes a bounded perturbation to both the projected
pressure and the residual. If
\[
\frac{\sum_{k=0}^{T-1}\alpha_k\,
\|(\boldsymbol\rho_k-\rho_0\mathbf 1)\odot\bar c_k\|^2}
{\sum_{k=0}^{T-1}\alpha_k}
\to 0 ~~\text{as } T\to\infty,
\]
then the adaptive-scaling perturbation becomes negligible in the averaged
residual bound.
\end{proposition}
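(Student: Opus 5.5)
The inequality follows from the nonexpansiveness of the projection onto \(\mathbb R_+^m\). By definition,
\(p_k^\rho = [u_k+\boldsymbol\rho_k\odot\bar c_k]_+ - [u_k+\rho_0\bar c_k]_+\), so the two projected arguments differ by
\((\boldsymbol\rho_k-\rho_0\mathbf 1)\odot\bar c_k\). The map \([\cdot]_+\) is the Euclidean projection onto a closed convex set, so it is firmly nonexpansive and hence 1-Lipschitz (as already used in Theorem~\ref{thm:finite_gain_convergence}). This gives
\(\|p_k^\rho\|\le\|(\boldsymbol\rho_k-\rho_0\mathbf 1)\odot\bar c_k\|\) directly. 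One can also argue coordinatewise, since \(t\mapsto[t]_+\) is scalar 1-Lipschitz.

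For the boundedness claim, suppose \(\rho_{k,i}\in[\rho_{\min},\rho_{\max}]\) and \(\|\bar c_k\|\le M_c\). Then
\(\|(\boldsymbol\rho_k-\rho_0\mathbf 1)\odot\bar c_k\|\le \max(|\rho_{\max}-\rho_0|,|\rho_0-\rho_{\min}|)\,M_c\).
The identity \(d_k^{\boldsymbol\rho_k}=d_k^{\rho_0}+p_k^\rho\), noted before the statement, transfers the same bound to the residual channel.

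For the averaged claim, I would reuse the proof of Theorem~\ref{thm:stochastic_residual_bound}, treating \(p_k^\rho\) as one more pressure perturbation added to \(p_k\). The projection inequality of Step~3 of Theorem~\ref{thm:finite_gain_convergence} is still applied to the reference pair \((\lambda_k^{\rho_0},d_k^{\rho_0})\), so the cross term still cancels. What remains from \(p_k^\rho\) are:
\begin{itemize}
\item linear terms \(2\alpha_k\langle x_k-x^\star,A^\top p_k^\rho\rangle\) and \(2\alpha_k\rho_0^{-1}\langle u_k-u^\star,p_k^\rho\rangle\);
\item quadratic terms of order \(\alpha_k^2\|p_k^\rho\|^2\).
\end{itemize}
These are the only places the perturbation enters. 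The quadratic terms are bounded and are absorbed into the \(K_2\sum\alpha_k^2\) term.

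The linear terms are the main obstacle. Since \(p_k^\rho\) is not mean-zero, the crude stopped bound \(2\alpha_k R C\|p_k^\rho\|\) would only give a first-moment average, not the squared average appearing in the hypothesis. I would instead apply Young's inequality to each linear term:
\[
2\alpha_k|\langle x_k-x^\star,A^\top p_k^\rho\rangle|\le \tfrac{C_1}{4}\alpha_k\|x_k-x^\star\|^2+C\alpha_k\|p_k^\rho\|^2 .
\]
The first piece is absorbed by the descent term \(-C_1\alpha_k\|x_k-x^\star\|^2\).

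The multiplier term \(\langle u_k-u^\star,p_k^\rho\rangle\) is harder, because \(u_k-u^\star\) is not directly controlled by the descent terms. I would first try rewriting
\(u_k-u^\star=(\lambda_k^{\rho_0}-u^\star)-d_k^{\rho_0}\) and absorbing the \(d_k^{\rho_0}\) part via Young's inequality. If the \(\lambda_k^{\rho_0}-u^\star\) part cannot be absorbed, I would fall back on the stopped-region bound \(R\) together with Cauchy--Schwarz in the \(\alpha\)-weighted average. That gives a contribution of order \(\bigl(\sum\alpha_k\|p_k^\rho\|^2/\sum\alpha_k\bigr)^{1/2}\), which still tends to zero under the stated hypothesis.

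In either case the extra term in the averaged bound is controlled by the displayed weighted average of \(\|(\boldsymbol\rho_k-\rho_0\mathbf 1)\odot\bar c_k\|^2\). It therefore vanishes as \(T\to\infty\), which is what "negligible" means in the statement.
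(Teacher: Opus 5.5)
Your argument for the inequality and the boundedness claim is the same as the paper's. You apply nonexpansiveness of \([\cdot]_+\) to two arguments that differ by \((\boldsymbol\rho_k-\rho_0\mathbf 1)\odot\bar c_k\), then use clipping plus a bound on \(\bar c_k\), and the identity \(d_k^{\boldsymbol\rho_k}=d_k^{\rho_0}+p_k^\rho\) carries the bound to the residual.

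The difference is in the last claim. The paper settles it in one line: multiply the pointwise inequality by \(\alpha_k\), sum, and normalize. That shows the \(\alpha\)-weighted average of \(\|p_k^\rho\|^2\) vanishes, but leaves implicit how this quantity controls the residual bound. You instead insert \(p_k^\rho\) into the stopped Lyapunov recursion. As in Proposition~\ref{prop:perturbation_accounting}, it enters at first order through \(\langle x_k-x^\star,A^\top p_k^\rho\rangle\) and \(\langle u_k-u^\star,p_k^\rho\rangle\). The second term cannot be absorbed, because the recursion has no negative \(\|u_k-u^\star\|^2\) term. Your fallback (the stopped-region bound \(R\) followed by weighted Cauchy--Schwarz) is exactly what links the squared-average hypothesis to this first-order term. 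It also shows that the extra contribution decays only like the square root of the hypothesized average. The fallback works for the primal term too, so your Young step is optional and only sharpens that piece. The paper's route buys brevity; yours makes the mechanism and the rate explicit.

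Two bookkeeping fixes are needed:
\begin{itemize}
\item Run the cross-term cancellation on the exact backbone pair built from \(Ax_k-b\), folding \(\lambda_k^{\rho_0}-\lambda_k\) (norm at most \(\rho_0\|e_k\|\)) into \(p_k\). The pair \((\lambda_k^{\rho_0},d_k^{\rho_0})\) you wrote is built from \(\bar c_k\), so it does not cancel exactly.
\item The hypothesis is pathwise while the bound is in expectation. Pass the limit through the expectation by bounded convergence, which clipping and bounded \(\bar c_k\) make available.
\end{itemize}
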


\begin{proof}
By firm nonexpansiveness of the projection \([\cdot]_+\),
\[
\|p_k^\rho\|
= \bigl\|[u_k+\boldsymbol\rho_k\odot\bar c_k]_+
- [u_k+\rho_0\bar c_k]_+\bigr\|
\le \|(\boldsymbol\rho_k-\rho_0\mathbf 1)\odot\bar c_k\|.
\]
If \(\boldsymbol\rho_k\in[\rho_{\min},\rho_{\max}]^m\) and \(\bar c_k\) is
bounded, the right-hand side is bounded. The last statement follows directly
from the same inequality after multiplying by \(\alpha_k\), summing, and
normalizing by \(\sum_{k=0}^{T-1}\alpha_k\).
\end{proof}

\paragraph{Dynamic residual correction as a memory-feedback perturbation}
Residual-\(\nu\)PI correction acts after the residual has been computed. Using
\eqref{eq:residual_filter} and \eqref{eq:nupi_update}, the memory signal can be
written as
\[
s_k = \kappa_{\text{I}} d_k + \kappa_{\text{P}}(\xi_k-\xi_{k-1})
= \kappa_{\text{I}}(d_k + q_k),~~
q_k := (\kappa_{\text{P}}/\kappa_{\text{I}})(\xi_k-\xi_{k-1}).
\]
The correction \(q_k\) is a memory-feedback perturbation. It does not change the
target residual equation \(d_k=0\), but the instantaneous condition \(d_k=0\)
alone does not imply \(s_k=0\) unless the auxiliary state has also settled. The
preserved equilibrium is the augmented one:
\(d_k=0,\; \xi_{k-1}=0\), under which \(\xi_k=0\), \(q_k=0\), and \(s_k=0\).
Therefore, the dynamic correction should be interpreted as a transient feedback
modification rather than as a change of the target KKT residual.

\paragraph{Generic inexact residual loop}
To account for stochastic gradients, pressure perturbations, residual
perturbations, and dynamic correction in a unified way, let
\(\lambda_k\) and \(d_k\) denote the reference isotropic backbone signals with
scale \(\rho_0\). Consider the inexact loop
\[
\tilde\lambda_k = \lambda_k + p_k^\lambda,~~
\tilde d_k = d_k + p_k^d + q_k,
\]
\[
x_{k+1} = x_k - \alpha_k\bigl[\nabla f(x_k) + A^\top\tilde\lambda_k + r_k\bigr],~~
u_{k+1} = u_k + \theta\alpha_k\tilde d_k .
\]
Here \(p_k^\lambda\) is the perturbation entering the primal pressure channel,
\(p_k^d\) is the perturbation entering the residual used by the memory update,
\(q_k\) is the additional memory-feedback correction induced by the
residual-\(\nu\)PI term, and \(r_k\) is the primal-gradient perturbation. For
perturbations generated by stochastic projection noise or adaptive scaling, one
typically has \(p_k^d = p_k^\lambda\), because the residual is defined as the
projected pressure minus \(u_k\). We keep the two symbols separate to allow more
general inexact implementations. This separation follows the standard inexact
fixed-point/perturbed stochastic approximation viewpoint
\citep{kushner2003stochastic,borkar2008stochastic,bauschke2017convex}.

The next statement is a quasi-Fej\'er perturbation accounting result: summable
first-order errors and summable second-order errors preserve the residual limit
of the reference recursion \citep{robbins1971convergence,bauschke2017convex}.

\begin{proposition}[Perturbation accounting for the inexact residual loop]
\label{prop:perturbation_accounting}
On a stopped Lyapunov region, the inexact residual loop satisfies
\[
\begin{aligned}
V_{k+1}^\theta
\le\;&
(1+C\alpha_k^2)V_k^\theta
- C_1\alpha_k(\|x_k-x^\star\|^2+\|d_k\|^2)+ C_2\alpha_k\bigl(\|p_k^\lambda\|+\|p_k^d\|+\|q_k\|+\|r_k\|\bigr) \\
&+ C_3\alpha_k^2\bigl(\|p_k^\lambda\|^2+\|p_k^d\|^2+\|q_k\|^2+\|r_k\|^2\bigr)
+ C_4\alpha_k^2 ,
\end{aligned}
\]
where the constants depend on the stopped-region radius and the problem
parameters, but not on \(k\). Consequently, if
\[
\sum_k \alpha_k\bigl(\|p_k^\lambda\|+\|p_k^d\|+\|q_k\|+\|r_k\|\bigr) < \infty
\]
and the corresponding second-order perturbation terms are summable, then the
residuals still satisfy \(x_k\to x^\star\), \(d_k\to0\), and \(G_k\to0\) as in
Theorem~\ref{thm:finite_gain_convergence}.
\end{proposition}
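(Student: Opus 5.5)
The argument repeats the Lyapunov computation of Theorem~\ref{thm:finite_gain_convergence} with $V_k^\theta=\|x_k-x^\star\|^2+(\theta\rho_0)^{-1}\|u_k-u^\star\|^2$, now carrying the four perturbations as additive errors. The stopped region keeps $x_k,u_k$ inside the ball of radius $R$, so the deterministic quantities $G_k=\nabla f(x_k)+A^\top\lambda_k$ and $d_k$ are uniformly bounded there, as in Step~5 of that theorem.

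\emph{Primal expansion.} Write $x_{k+1}-x^\star=x_k-x^\star-\alpha_k(G_k+A^\top p_k^\lambda+r_k)$ and expand the square. The $G_k$ part gives, exactly as before, the terms $-2\alpha_k\mu\|x_k-x^\star\|^2$ and $-2\alpha_k\langle A(x_k-x^\star),\lambda_k-u^\star\rangle$. The new linear terms are $-2\alpha_k\langle x_k-x^\star,A^\top p_k^\lambda+r_k\rangle$. Using $\|x_k-x^\star\|\le R$, they are at most $2\alpha_kR(\|A\|\|p_k^\lambda\|+\|r_k\|)$. For the quadratic term we use $\|a+b+c\|^2\le3(\cdots)$ and $\|G_k\|^2\le C_GV_k^\theta$, which gives $3\alpha_k^2(C_GV_k^\theta+\|A\|^2\|p_k^\lambda\|^2+\|r_k\|^2)$.

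\emph{Multiplier expansion.} From $u_{k+1}-u^\star=u_k-u^\star+\theta\alpha_k(d_k+p_k^d+q_k)$, multiply by $(\theta\rho_0)^{-1}$. The $d_k$ term is handled by the projection inequality $\langle u_k-u^\star,d_k\rangle\le\rho_0\langle\lambda_k-u^\star,A(x_k-x^\star)\rangle-\|d_k\|^2$ from Step~3. This is unchanged, because $\lambda_k,d_k$ are the exact reference signals built from $u_k$ and $x_k$. It yields the cross term $+2\alpha_k\langle\lambda_k-u^\star,A(x_k-x^\star)\rangle$, which cancels the primal one, together with $-2\alpha_k\rho_0^{-1}\|d_k\|^2$. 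The perturbation cross terms are bounded by $2\alpha_k\rho_0^{-1}R(\|p_k^d\|+\|q_k\|)$. The quadratic term is bounded by $3\theta\alpha_k^2\rho_0^{-1}(\|d_k\|^2+\|p_k^d\|^2+\|q_k\|^2)$.

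\emph{Collecting terms.} Since $\theta\alpha_k\le1$ and $\|d_k\|^2\le C V_k^\theta$ on the stopped region, the $\|d_k\|^2$ quadratic term can be handled in either of two ways. It can be absorbed into $C\alpha_k^2V_k^\theta$. Alternatively, since the factor $3$ from the Cauchy--Schwarz split prevents the exact absorption used in Step~5, it can be put into the constant $C_4\alpha_k^2$ via the uniform bound. Either way we retain $-C_1\alpha_k(\|x_k-x^\star\|^2+\|d_k\|^2)$ with $C_1=\min(2\mu,2/\rho_0)$. This gives the displayed recursion with $C_2=2R\max(\|A\|,1,\rho_0^{-1})$ and $C_3=\max(3\|A\|^2,3,3\theta/\rho_0)$.

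\emph{Convergence.} Under the summability hypotheses, together with $\sum\alpha_k^2<\infty$, the recursion has the Robbins--Siegmund form $V_{k+1}\le(1+a_k)V_k-w_k+b_k$ with $\sum a_k,\sum b_k<\infty$. This gives boundedness of $V_k^\theta$ and $\sum_k\alpha_k(\|x_k-x^\star\|^2+\|d_k\|^2)<\infty$. For slow variation, the increments are $\|x_{k+1}-x_k\|\le\alpha_k(\|G_k\|+\|A\|\|p_k^\lambda\|+\|r_k\|)$, and similarly for $u_k$.

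\emph{Main obstacle.} Here the increments are no longer $O(\alpha_k)$ unless the perturbations are bounded. The plan is to use that on the stopped region the perturbations are bounded in the intended applications (clipped scales, a bounded $\xi$-state). If they are not, Lemma~\ref{lem:slow_variation} will be replaced by a variant: the summability $\sum\alpha_k\|\text{pert}_k\|<\infty$ controls the total perturbation mass over each window, so the window argument still bounds $|a_k-a_{k_j}|$ by $C\delta+o(1)$. With that, Lemma~\ref{lem:slow_variation} gives $x_k\to x^\star$ and $d_k\to0$. The $G_k\to0$ step of Theorem~\ref{thm:finite_gain_convergence} repeats verbatim, since $\sum_{k_j}^{\ell_j}\alpha_k(\|A\|\|p_k^\lambda\|+\|r_k\|)\to0$ along windows and so does not affect the $3\varepsilon\Delta$ displacement contradiction.
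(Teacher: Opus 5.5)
Your proposal is correct and follows the paper's proof essentially step for step. You rerun the Lyapunov computation of Theorem~\ref{thm:finite_gain_convergence} with the exact reference signals $\lambda_k,d_k$, so the projection inequality and the cross-term cancellation are unchanged; you bound the new linear terms by the stopped radius $R$, bound the quadratic terms, and close with the Robbins--Siegmund/quasi-Fej\'er argument. The differences are minor: you use the three-term split and absorb $\alpha_k^2\|d_k\|^2$ into $C\alpha_k^2V_k^\theta$ rather than expanding $\|d_k+e_k\|^2-\|d_k\|^2$, and you explicitly repair the slow-variation step of Lemma~\ref{lem:slow_variation} via the summable perturbation mass over each window, which the paper leaves implicit.
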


\begin{proof}
Repeat the proof of Theorem~\ref{thm:finite_gain_convergence} with
\(\lambda_k\) replaced by \(\tilde\lambda_k = \lambda_k + p_k^\lambda\),
\(d_k\) replaced by \(\tilde d_k = d_k + p_k^d + q_k\), and \(G_k\) replaced by
\(G_k + A^\top p_k^\lambda + r_k\), where
\(G_k = \nabla f(x_k) + A^\top\lambda_k\). The deterministic part gives the same
primal descent, multiplier expansion, projection inequality, and cross-term
cancellation as in the reference backbone.

The additional primal perturbation terms are
\[
-2\alpha_k\langle x_k-x^\star, A^\top p_k^\lambda + r_k\rangle
+ \alpha_k^2\|G_k + A^\top p_k^\lambda + r_k\|^2
- \alpha_k^2\|G_k\|^2 .
\]
On the stopped region, \(\|x_k-x^\star\|\le R\), so
\(2\alpha_k|\langle x_k-x^\star, A^\top p_k^\lambda + r_k\rangle|
\le 2\alpha_k R(\|A\|\|p_k^\lambda\| + \|r_k\|)\).
The difference of the quadratic terms is bounded by
\(C\alpha_k^2(V_k^\theta + \|p_k^\lambda\|^2 + \|r_k\|^2)\), where the
\(V_k^\theta\) term is absorbed into the usual \((1+C\alpha_k^2)V_k^\theta\)
factor.

For the multiplier update, set \(e_k := p_k^d + q_k\). Then
\(u_{k+1} = u_k + \theta\alpha_k(d_k + e_k)\).
Compared with the deterministic multiplier expansion, the additional terms in
the weighted multiplier energy are
\[
\frac{2\alpha_k}{\rho_0}\langle u_k-u^\star, e_k\rangle
+ \frac{\theta\alpha_k^2}{\rho_0}
\bigl(\|d_k+e_k\|^2 - \|d_k\|^2\bigr).
\]
The first term cannot be absorbed into the deterministic descent because the
Lyapunov recursion contains no negative term proportional to
\(\|u_k-u^\star\|^2\). Using the stopped-region bound
\(\|u_k-u^\star\|\le R\),
\(2\alpha_k\rho_0^{-1}|\langle u_k-u^\star, e_k\rangle|
\le 2\alpha_k R\rho_0^{-1}(\|p_k^d\|+\|q_k\|)\).
For the second term,
\(\|d_k+e_k\|^2 - \|d_k\|^2 = 2\langle d_k,e_k\rangle + \|e_k\|^2\).
Since \(d_k\) is bounded on the stopped region,
\(\alpha_k^2|\|d_k+e_k\|^2 - \|d_k\|^2|
\le C\alpha_k^2(\|p_k^d\|+\|q_k\|+\|p_k^d\|^2+\|q_k\|^2)\).
Because \(\alpha_k\le\bar\alpha\), the linear
\(\alpha_k^2(\|p_k^d\|+\|q_k\|)\) term is dominated by the first-order
perturbation term \(C\alpha_k(\|p_k^d\|+\|q_k\|)\).

Combining all bounds with the deterministic descent recursion yields the
stated inequality. If the first-order perturbations are \(\alpha_k\)-summable
and the second-order perturbations are summable, the same quasi-Fej\'er argument \citep{bauschke2017convex}
used in Theorem~\ref{thm:finite_gain_convergence} implies that the additional
inexact channels do not alter the asymptotic convergence of the residuals.
\end{proof}

The proposition is an accounting result, not a standalone convergence theorem
for adaptive scaling or residual-\(\nu\)PI correction. It identifies the type of
perturbation control that would be sufficient to preserve the deterministic
finite-gain residual limit. If the perturbations are merely bounded but not
summable, the same inequality should instead be interpreted as predicting a
residual neighborhood whose size depends on the accumulated perturbation level.

\paragraph{Remark on local nonconvex geometry and convergence}
The analysis above focuses on the convex-affine backbone, where the
projected-pressure residual dynamics admit a clean Lyapunov cancellation. For general nonlinear or nonconvex constrained problems, the same residual map
\[
F_{\boldsymbol\rho}(x,u)
=
[
\nabla f(x) + J_c(x)^\top\lambda_{\boldsymbol\rho}(x,u),~~
d_{\boldsymbol\rho}(x,u)
]^\top
\]
still has a meaningful local interpretation. Near a regular KKT point satisfying LICQ, strict complementarity, and the
second-order sufficient condition
\citep{bonnans2000perturbation,facchinei2003finite},
the active set is locally stable and the projected pressure map becomes
piecewise smooth with a fixed local pattern. In such a neighborhood,
\(d_{\boldsymbol\rho}(x,u)\) is locally equivalent to the
feasibility--complementarity residual, and \(F_{\boldsymbol\rho}(x,u)\) is
locally Lipschitz equivalent to the standard KKT residual after active-set
identification. Thus, under a local error bound or metric subregularity condition \citep{rockafellar1998variational,dontchev2014implicit},
small \(\|F_{\boldsymbol\rho}(x,u)\|\) implies a small local KKT residual.

A full nonconvex convergence theory would require additional ingredients beyond the scope of this paper. One possible route is to establish metric subregularity
of the KKT mapping around a regular solution and then prove that the
residual-controlled dynamics enter and remain in the corresponding neighborhood.
Another route is to combine the residual feedback law with a merit function,
trust-region mechanism, or line-search globalization scheme so that descent and
feasibility control can be enforced before local regularity becomes valid. For
stochastic nonconvex problems, one would additionally need to control mini-batch
noise, active-set switching, and the bias introduced by the projected pressure
near the kink set. These requirements are substantially stronger than the
convex-affine assumptions used in
Theorem~\ref{thm:finite_gain_convergence} and
Theorem~\ref{thm:stochastic_residual_bound}.

Therefore, we do not claim global convergence for general nonconvex constrained
learning problems. Instead, the role of the nonconvex experiments is empirical:
they test whether the residual-controlled multiplier dynamics provide stable
feasibility control and useful multiplier-memory behavior in settings where the
convex-affine theory no longer applies directly. The reported residual,
feasibility, and objective trajectories should be interpreted as evidence of
practical stability rather than as a substitute for a full nonconvex convergence
theorem.

\subsection{Summary of the Theoretical Findings}
\label{subsec:theory_summary}

The analysis establishes four conclusions. First,
\(d_{\boldsymbol\rho}(x,u)\) is an exact feasibility--complementarity residual
induced by the inequality augmented Lagrangian \eqref{eq:ineq_aug_lag}, and
together with projected stationarity it characterizes the KKT system. Second, the
convex-affine backbone \eqref{eq:finite_gain_backbone} converges under
finite-gain tracking \(\beta_k=\theta\alpha_k\), with
\[
x_k\to x^\star,~~
d_k\to0,~~
\nabla f(x_k)+A^\top\lambda_k\to0,~~
\operatorname{dist}(u_k,\mathcal U^\star)\to0 .
\]
Third, the stopped stochastic analysis gives the weighted residual bound
\eqref{eq:stochastic_weighted_bound}. For \(\alpha_k=T^{-1/2}\) and fixed batch
size \(B\), the conservative finite-time behavior is
\[
O(T^{-1/2}) + O(\rho_0\sigma_c/\sqrt{B}),
\]
which indicates a residual neighborhood under fixed mini-batch noise. A sharper
\(O(\rho_0^2\sigma_c^2/B)\) pressure-noise floor is obtained only under a local
active-pattern stability condition that keeps the projected pressure away from
its kink set. Fourth, filtering, adaptive pressure scaling, and
residual-\(\nu\)PI correction act as perturbations of the measurement, pressure,
and memory-feedback channels, respectively. In particular, filtering reduces
pressure variance at the cost of lag, adaptive scaling changes the pressure gain,
and dynamic residual correction modifies the transient memory feedback.

\section{Experiments}
\label{sec:experiments}

\subsection{Experimental roadmap}
\label{subsec:exp_questions_protocol}

The experiments are designed to evaluate the residual-controlled multiplier
framework from four complementary perspectives. First, we test whether the
pressure-memory residual provides the intended signal-level behavior and whether
finite-gain memory tracking stabilizes multiplier dynamics. Second, we examine
whether the framework improves tolerance-aware decision quality under stochastic
reserve constraints. Third, we evaluate reliable solution discovery in a
nonconvex stochastic pricing-inventory problem with multiple random
initializations. Fourth, we test whether the same multiplier-control mechanism
can be integrated into neural fair-ranking training under exposure constraints.

Each experiment isolates a
specific role of the framework: projected pressure formation, finite-gain
residual memory, measurement filtering, adaptive pressure scaling, and dynamic
memory-feedback correction. Detailed data generation procedures, hardware,
hyperparameters, validation grids, sample sizes, and evaluation windows are
reported in~\ref{app:experimental_details}.

Unless otherwise stated, all reported values are averaged over random seeds, and
tail metrics are computed over the final evaluation window of each run. All
multiplier-based methods are implemented under the stochastic primal--dual
interface in \eqref{eq:pd_template}. They differ in how they construct the
effective pressure, the pressure-memory residual, and the memory feedback signal.

\subsection{Compared Methods and Metrics}
\label{subsec:exp_methods_metrics}

We compare RCML variants with unconstrained training and fixed penalty baselines
\citep{nocedal2006numerical}, raw violation-based
stochastic primal--dual updates including \textsc{SGDA-Signed} and
\textsc{SGDA-Positive}
\citep{xu2020primaldual,jin2022stochasticpd},
projected augmented-Lagrangian replacement \textsc{Projected-ALM}
\citep{bertsekas1982constrained},
constraint extrapolation \textsc{ConEx} \citep{boob2023stochastic},
perturbed primal--dual ascent \textsc{GDPA}
\citep{lu2022gdpa,alacaoglu2024complexity}, stochastic SQP \textsc{StoSQP}
\citep{curtis2024sequential,fang2024fully}, and stochastic barrier baselines
\textsc{StochBarrier} \citep{dimitrieski2025relaxedbarrier}.
The main residual-controlled methods are \textsc{Residual-I},
\textsc{RCML-Core}, \textsc{RCML-Adaptive}, and \textsc{RCML-Robust}.
Diagnostic variants such as \textsc{Adaptive-Proj.},
\textsc{RCML-Adaptive-}\(\tau\), and
\textsc{Residual-}\(\nu\)\textsc{PI} are used only in ablation studies.

Objective, cost, profit, and NDCG measure primal decision quality, where NDCG is
used for ranking evaluation \citep{jarvelin2002cumulated}. Constraint violation,
tolerance excess, feasibility rate, exposure violation, and reliable rate measure
constraint control, where exposure violation is used for exposure-based
fair-ranking constraints \citep{singh2018fairness}. Dual total variation measures
multiplier-memory movement, and residual total variation measures fluctuation of
the pressure-memory residual. Runtime measures the computational overhead of the
residual-controlled update.

\subsection{Experiment 1: Signal Diagnostics and Module Ablation}
\label{subsec:exp_core_diagnostics}

Experiment~1 verifies the residual-controlled multiplier interface through
signal-level diagnostics and module-level ablations. The signal-level diagnostic
uses a scalar four-phase constraint sequence to test whether different multiplier
signals distinguish violation activation, stale-memory release, and inactive
dead-zone behavior. The module-level ablation then evaluates the contribution of
projected pressure formation, finite-gain residual memory, constraint filtering,
and adaptive coordinatewise pressure scaling.

The ablation benchmark contains stochastic LP, convex QP, and mildly nonconvex QP
tasks. We evaluate stationary feedback, high-noise feedback, and
heterogeneous-scale constraints in the main text. The active-set switching
diagnostic is reported in ~\ref{app:additional_ablation_tables} because it
is used only to evaluate optional transient extensions.

\paragraph{Signal-level validation}
We first use a four-phase scalar constraint sequence with inactive, violated,
release, and inactive phases. The compared signals are
\[
s_k^{\rm signed}=\hat c_k,~~
s_k^{\rm positive}=[\hat c_k]_+,~~
s_k^{\rm residual}=[u_k+\rho_0\hat c_k]_+-u_k,
\]
and a filtered residual signal $\bar c_k=(1-\gamma)\bar c_{k-1}+\gamma\hat c_k,~s_k^{\rm filtered}=[u_k+\rho_0\bar c_k]_+-u_k.$
All signals are inserted into the same multiplier recursion $u_{k+1}=[u_k+\eta s_k]_+ .$
Figure~\ref{fig:basic_signal} shows that the signed signal can decrease obsolete
multiplier mass but also leaks negative control in inactive regions. The positive
signal avoids negative leakage but does not actively release obsolete multiplier
memory. The residual signal releases stored multiplier mass when \(u_k>0\) and
becomes zero after the multiplier reaches the inactive dead zone. The filtered
residual further suppresses near-boundary noise, with a visible response delay.

\begin{figure}[H]
    \centering
    \includegraphics[width=0.7\columnwidth]{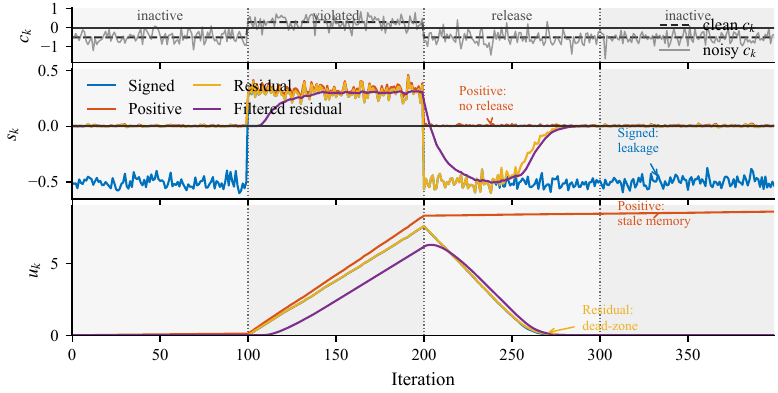}
    \caption{
    Signal-level validation of the residual multiplier interface.
    The top panel shows the scalar constraint state \(c_k\), the middle panel
    shows the control signal \(s_k\), and the bottom panel shows the multiplier
    state \(u_k\). The residual signal distinguishes feasible-but-releasing
    states from genuinely inactive dead-zone states.
    }
    \label{fig:basic_signal}
\end{figure}

\paragraph{Core ablation}
The core ablation compares raw violation-based updates, projected-pressure
replacement, finite-gain residual memory, and adaptive pressure scaling.

\begin{table}[H]
\centering
\caption{
Core signal and memory ablation on stationary stochastic LP, QP, and mildly nonconvex QP tasks.
RelRate denotes the fraction of tail iterates with violation below \(5\times10^{-2}\).
Lower is better for ObjTail, ViolTail, DualTV, ResidualTV, and runtime; higher is better for RelRate.
}
\label{tab:exp1_core_ablation}
\footnotesize
\setlength{\tabcolsep}{\exptabcolsep}
\renewcommand{\arraystretch}{\exptabstretch}
\begin{adjustbox}{max width=\textwidth,center}
\begin{tabular}{cccccccc}
\toprule
Problem & Method & ObjTail & ViolTail & RelRate & DualTV & ResidualTV & Runtime \\
\midrule
LP & \textsc{SGDA-Signed}
& \(-1.040{\times}10^{1}\)
& \(5.989{\times}10^{-1}\)
& \(0.000\)
& \(2.261{\times}10^{-3}\)
& \(2.188{\times}10^{0}\)
& \(6.39{\times}10^{-2}\) \\
LP & \textsc{SGDA-Positive}
& \(-7.166{\times}10^{0}\)
& \(2.201{\times}10^{0}\)
& \(0.000\)
& \(2.994{\times}10^{-3}\)
& \(9.535{\times}10^{-1}\)
& \(6.04{\times}10^{-2}\) \\
LP & \textsc{Projected-ALM}
& \(-1.045{\times}10^{1}\)
& \(3.715{\times}10^{-2}\)
& \(0.644\)
& \(6.909{\times}10^{-1}\)
& \(1.027{\times}10^{0}\)
& \(6.00{\times}10^{-2}\) \\
LP & \textsc{Residual-I}
& \(-1.052{\times}10^{1}\)
& \(4.339{\times}10^{-2}\)
& \(0.610\)
& \(3.533{\times}10^{-3}\)
& \(9.712{\times}10^{-1}\)
& \(6.21{\times}10^{-2}\) \\
LP & \textsc{RCML-Adaptive}
& \(-1.035{\times}10^{1}\)
& \(3.489{\times}10^{-2}\)
& \(0.704\)
& \(6.469{\times}10^{-3}\)
& \(1.265{\times}10^{0}\)
& \(6.89{\times}10^{-2}\) \\
\midrule
QP & \textsc{SGDA-Signed}
& \(7.161{\times}10^{-1}\)
& \(4.844{\times}10^{-1}\)
& \(0.000\)
& \(1.299{\times}10^{-3}\)
& \(6.999{\times}10^{-1}\)
& \(7.24{\times}10^{-2}\) \\
QP & \textsc{SGDA-Positive}
& \(1.578{\times}10^{1}\)
& \(1.315{\times}10^{-1}\)
& \(0.166\)
& \(6.881{\times}10^{-4}\)
& \(3.642{\times}10^{-1}\)
& \(7.23{\times}10^{-2}\) \\
QP & \textsc{Projected-ALM}
& \(7.893{\times}10^{-1}\)
& \(2.330{\times}10^{-2}\)
& \(0.832\)
& \(1.862{\times}10^{-1}\)
& \(2.717{\times}10^{-1}\)
& \(7.25{\times}10^{-2}\) \\
QP & \textsc{Residual-I}
& \(7.734{\times}10^{-1}\)
& \(1.547{\times}10^{-2}\)
& \(0.982\)
& \(1.717{\times}10^{-3}\)
& \(2.715{\times}10^{-1}\)
& \(7.39{\times}10^{-2}\) \\
QP & \textsc{RCML-Adaptive}
& \(7.824{\times}10^{-1}\)
& \(2.094{\times}10^{-2}\)
& \(0.886\)
& \(3.515{\times}10^{-3}\)
& \(4.134{\times}10^{-1}\)
& \(8.01{\times}10^{-2}\) \\
\midrule
NCVQP & \textsc{SGDA-Signed}
& \(-8.062{\times}10^{-2}\)
& \(2.447{\times}10^{-1}\)
& \(0.000\)
& \(4.765{\times}10^{-4}\)
& \(5.588{\times}10^{-1}\)
& \(7.15{\times}10^{-2}\) \\
NCVQP & \textsc{SGDA-Positive}
& \(3.105{\times}10^{0}\)
& \(9.024{\times}10^{-1}\)
& \(0.000\)
& \(1.375{\times}10^{-3}\)
& \(4.746{\times}10^{-1}\)
& \(7.26{\times}10^{-2}\) \\
NCVQP & \textsc{Projected-ALM}
& \(-8.840{\times}10^{-2}\)
& \(1.442{\times}10^{-3}\)
& \(1.000\)
& \(1.042{\times}10^{-1}\)
& \(1.561{\times}10^{-1}\)
& \(7.44{\times}10^{-2}\) \\
NCVQP & \textsc{Residual-I}
& \(-4.810{\times}10^{-2}\)
& \(2.269{\times}10^{-3}\)
& \(1.000\)
& \(5.898{\times}10^{-4}\)
& \(1.447{\times}10^{-1}\)
& \(7.34{\times}10^{-2}\) \\
NCVQP & \textsc{RCML-Adaptive}
& \(-6.371{\times}10^{-2}\)
& \(1.835{\times}10^{-3}\)
& \(1.000\)
& \(1.077{\times}10^{-3}\)
& \(1.938{\times}10^{-1}\)
& \(7.87{\times}10^{-2}\) \\
\bottomrule
\end{tabular}
\end{adjustbox}
\end{table}

Table~\ref{tab:exp1_core_ablation} gives three observations. First, raw
violation-based updates have poor reliability in all three tasks. Second,
projected replacement improves feasibility but produces much larger multiplier
variation. Third, replacing full replacement with finite-gain residual memory
keeps feasibility in the same range while substantially reducing DualTV. The
adaptive variant improves reliability in LP, remains competitive in QP, and
preserves full reliability in NCVQP.

\paragraph{High-noise feedback}
The high-noise setting evaluates how filtering changes the measurement channel.

\begin{table}[H]
\centering
\caption{
High-noise module ablation.
High-noise uses sparse constraint mini-batches with additive observation noise.
ViolP95 is the tail \(95\)-th percentile violation.
}
\label{tab:exp1_high_noise_ablation}
\footnotesize
\setlength{\tabcolsep}{\exptabcolsep}
\renewcommand{\arraystretch}{\exptabstretch}
\begin{adjustbox}{max width=\textwidth,center}
\begin{tabular}{cccccccc}
\toprule
Problem & Method & ViolTail & ViolP95 & RelRate & DualTV & ResidualTV & Runtime \\
\midrule
LP & \textsc{Projected-ALM}
& \(0.000\)
& \(0.000\)
& \(1.000\)
& \(1.909{\times}10^{0}\)
& \(3.022{\times}10^{0}\)
& \(1.39{\times}10^{-1}\) \\
LP & \textsc{Adaptive-Proj.}
& \(1.517{\times}10^{-2}\)
& \(1.236{\times}10^{-1}\)
& \(0.903\)
& \(1.077{\times}10^{0}\)
& \(1.138{\times}10^{0}\)
& \(1.58{\times}10^{-1}\) \\
LP & \textsc{Residual-I}
& \(0.000\)
& \(0.000\)
& \(1.000\)
& \(4.724{\times}10^{-3}\)
& \(2.422{\times}10^{0}\)
& \(1.43{\times}10^{-1}\) \\
LP & \textsc{RCML-Core}
& \(5.802{\times}10^{-2}\)
& \(1.967{\times}10^{-1}\)
& \(0.539\)
& \(3.039{\times}10^{-3}\)
& \(9.806{\times}10^{-1}\)
& \(1.46{\times}10^{-1}\) \\
LP & \textsc{RCML-Adaptive}
& \(1.863{\times}10^{-2}\)
& \(1.143{\times}10^{-1}\)
& \(0.855\)
& \(4.242{\times}10^{-3}\)
& \(1.438{\times}10^{0}\)
& \(1.60{\times}10^{-1}\) \\
\midrule
QP & \textsc{Projected-ALM}
& \(1.231{\times}10^{-4}\)
& \(0.000\)
& \(1.000\)
& \(5.098{\times}10^{-1}\)
& \(8.075{\times}10^{-1}\)
& \(1.94{\times}10^{-1}\) \\
QP & \textsc{Adaptive-Proj.}
& \(4.949{\times}10^{-3}\)
& \(3.179{\times}10^{-2}\)
& \(0.965\)
& \(4.004{\times}10^{-1}\)
& \(4.825{\times}10^{-1}\)
& \(2.20{\times}10^{-1}\) \\
QP & \textsc{Residual-I}
& \(1.044{\times}10^{-4}\)
& \(0.000\)
& \(1.000\)
& \(2.331{\times}10^{-3}\)
& \(6.703{\times}10^{-1}\)
& \(2.02{\times}10^{-1}\) \\
QP & \textsc{RCML-Core}
& \(2.982{\times}10^{-2}\)
& \(1.011{\times}10^{-1}\)
& \(0.740\)
& \(1.380{\times}10^{-3}\)
& \(2.394{\times}10^{-1}\)
& \(1.93{\times}10^{-1}\) \\
QP & \textsc{RCML-Adaptive}
& \(5.812{\times}10^{-3}\)
& \(4.555{\times}10^{-2}\)
& \(0.959\)
& \(2.893{\times}10^{-3}\)
& \(5.785{\times}10^{-1}\)
& \(2.15{\times}10^{-1}\) \\
\midrule
NCVQP & \textsc{Projected-ALM}
& \(0.000\)
& \(0.000\)
& \(1.000\)
& \(3.739{\times}10^{-1}\)
& \(6.179{\times}10^{-1}\)
& \(2.67{\times}10^{-1}\) \\
NCVQP & \textsc{Adaptive-Proj.}
& \(0.000\)
& \(0.000\)
& \(1.000\)
& \(3.017{\times}10^{-1}\)
& \(3.982{\times}10^{-1}\)
& \(2.86{\times}10^{-1}\) \\
NCVQP & \textsc{Residual-I}
& \(0.000\)
& \(0.000\)
& \(1.000\)
& \(9.441{\times}10^{-4}\)
& \(4.416{\times}10^{-1}\)
& \(2.51{\times}10^{-1}\) \\
NCVQP & \textsc{RCML-Core}
& \(1.229{\times}10^{-5}\)
& \(0.000\)
& \(1.000\)
& \(5.771{\times}10^{-4}\)
& \(1.725{\times}10^{-1}\)
& \(2.50{\times}10^{-1}\) \\
NCVQP & \textsc{RCML-Adaptive}
& \(0.000\)
& \(0.000\)
& \(1.000\)
& \(1.026{\times}10^{-3}\)
& \(3.495{\times}10^{-1}\)
& \(2.72{\times}10^{-1}\) \\
\bottomrule
\end{tabular}
\end{adjustbox}
\end{table}

Table~\ref{tab:exp1_high_noise_ablation} shows that filtering reduces residual
variation but may increase violation because of response lag. For instance, in
high-noise QP, \textsc{RCML-Core} has lower ResidualTV than \textsc{Residual-I},
while its violation is larger. In NCVQP, all projected-pressure and residual
variants satisfy the reliability criterion, and the main difference is variation
in multiplier and residual signals.

\paragraph{Heterogeneous-scale constraints}
The heterogeneous-scale setting evaluates adaptive coordinatewise pressure
scaling.

\begin{table}[H]
\centering
\caption{
Heterogeneous-scale module ablation.
Constraint channels are multiplied by unequal positive scales.
ViolP95 is the tail \(95\)-th percentile violation.
}
\label{tab:exp1_heterogeneous_ablation}
\footnotesize
\setlength{\tabcolsep}{\exptabcolsep}
\renewcommand{\arraystretch}{\exptabstretch}
\begin{adjustbox}{max width=\textwidth,center}
\begin{tabular}{cccccccc}
\toprule
Problem & Method & ViolTail & ViolP95 & RelRate & DualTV & ResidualTV & Runtime \\
\midrule
LP & \textsc{Adaptive-Proj.}
& \(2.524{\times}10^{-2}\)
& \(1.335{\times}10^{-1}\)
& \(0.884\)
& \(6.307{\times}10^{-1}\)
& \(7.454{\times}10^{-1}\)
& \(1.20{\times}10^{-1}\) \\
LP & \textsc{Residual-I}
& \(4.158{\times}10^{-2}\)
& \(6.722{\times}10^{-2}\)
& \(0.653\)
& \(3.817{\times}10^{-3}\)
& \(1.264{\times}10^{0}\)
& \(1.15{\times}10^{-1}\) \\
LP & \textsc{RCML-Core}
& \(1.233{\times}10^{-1}\)
& \(2.556{\times}10^{-1}\)
& \(0.006\)
& \(2.693{\times}10^{-3}\)
& \(5.692{\times}10^{-1}\)
& \(1.09{\times}10^{-1}\) \\
LP & \textsc{RCML-Adaptive}
& \(3.183{\times}10^{-2}\)
& \(1.436{\times}10^{-1}\)
& \(0.810\)
& \(4.393{\times}10^{-3}\)
& \(9.805{\times}10^{-1}\)
& \(1.20{\times}10^{-1}\) \\
\midrule
QP & \textsc{Adaptive-Proj.}
& \(1.239{\times}10^{-2}\)
& \(4.622{\times}10^{-2}\)
& \(0.953\)
& \(3.203{\times}10^{-1}\)
& \(3.390{\times}10^{-1}\)
& \(1.68{\times}10^{-1}\) \\
QP & \textsc{Residual-I}
& \(2.817{\times}10^{-2}\)
& \(7.310{\times}10^{-2}\)
& \(0.910\)
& \(1.554{\times}10^{-3}\)
& \(2.864{\times}10^{-1}\)
& \(1.58{\times}10^{-1}\) \\
QP & \textsc{RCML-Core}
& \(4.139{\times}10^{-2}\)
& \(8.928{\times}10^{-2}\)
& \(0.703\)
& \(9.301{\times}10^{-4}\)
& \(1.085{\times}10^{-1}\)
& \(1.59{\times}10^{-1}\) \\
QP & \textsc{RCML-Adaptive}
& \(1.613{\times}10^{-2}\)
& \(5.855{\times}10^{-2}\)
& \(0.937\)
& \(3.283{\times}10^{-3}\)
& \(4.106{\times}10^{-1}\)
& \(1.59{\times}10^{-1}\) \\
\midrule
NCVQP & \textsc{Adaptive-Proj.}
& \(1.039{\times}10^{-3}\)
& \(7.364{\times}10^{-3}\)
& \(1.000\)
& \(2.331{\times}10^{-1}\)
& \(2.454{\times}10^{-1}\)
& \(1.51{\times}10^{-1}\) \\
NCVQP & \textsc{Residual-I}
& \(4.244{\times}10^{-3}\)
& \(2.325{\times}10^{-2}\)
& \(1.000\)
& \(1.026{\times}10^{-3}\)
& \(1.722{\times}10^{-1}\)
& \(1.51{\times}10^{-1}\) \\
NCVQP & \textsc{RCML-Core}
& \(5.913{\times}10^{-3}\)
& \(2.918{\times}10^{-2}\)
& \(1.000\)
& \(8.341{\times}10^{-4}\)
& \(5.835{\times}10^{-2}\)
& \(1.55{\times}10^{-1}\) \\
NCVQP & \textsc{RCML-Adaptive}
& \(2.988{\times}10^{-3}\)
& \(1.887{\times}10^{-2}\)
& \(1.000\)
& \(1.689{\times}10^{-3}\)
& \(3.051{\times}10^{-1}\)
& \(1.59{\times}10^{-1}\) \\
\bottomrule
\end{tabular}
\end{adjustbox}
\end{table}

Table~\ref{tab:exp1_heterogeneous_ablation} shows that adaptive scaling improves
constraint handling when channels have unequal scales. On LP and QP,
\textsc{RCML-Adaptive} improves feasibility relative to \textsc{RCML-Core}. On
NCVQP, all residual and projected variants reach full reliability, and the
differences are mainly in violation magnitude and signal variation. These results
indicate that adaptive scaling is useful under heterogeneous constraints, but it
is not uniformly dominant across all task geometries.

\FloatBarrier

\subsection{Experiment 2: Tolerance-Aware Stochastic Energy-Reserve Allocation}
\label{subsec:exp_energy_reserve}

Experiment~2 evaluates RCML on a tolerance-aware stochastic energy-reserve
allocation task. The decision variable represents reserve allocation, and the
objective is to reduce operating cost while satisfying stochastic reserve
adequacy constraints. Unlike strict feasibility benchmarks, this task allows
small reserve inadequacy within a prescribed engineering tolerance. The relevant
goal is therefore to balance cost, tolerance feasibility, reserve
over-allocation, and multiplier stability.

For each method, we report tail-averaged operating cost, maximum reserve
inadequacy violation, tolerance excess, tolerance-feasible rate, tolerance-aware
score, reserve over-allocation, and runtime. The tolerance-aware score is defined
as
\[
\mathrm{TA}
=
\mathrm{Cost}
+
\omega_{\rm tol}[\mathrm{Viol}-\delta_{\rm tol}]_+ .
\]
The \textsc{RCML-Adaptive} configuration is selected by a small validation sweep
and then fixed for the main comparison. The sweep is reported in
Appendix~\ref{app:additional_ablation_tables}.

\begin{table}[H]
\centering
\caption{
Tolerance-aware stochastic energy-reserve allocation results.
Normal uses \(B=256\) and \(\delta_{\rm tol}=10^{-2}\); Stress uses \(B=96\) and \(\delta_{\rm tol}=2\times10^{-2}\).
}
\label{tab:exp2_energy_results}
\footnotesize
\setlength{\tabcolsep}{2.6pt}
\renewcommand{\arraystretch}{\exptabstretch}
\begin{adjustbox}{max width=\textwidth,center}
\begin{tabular}{ccccccccc}
\toprule
Setting
& Method
& Cost
& Viol.
& TolExc
& TolFeas
& TA
& OverRes.
& Time \\
\midrule
Normal
& \textsc{Primal-only}
& \(0.000\)
& \(1.805\)
& \(1.795\)
& \(0.000\)
& \(179.466\)
& \(0.000\)
& \(0.11\) \\
Normal
& \textsc{SGDA-Signed}
& \(5.454\)
& \(3.018{\times}10^{-2}\)
& \(2.018{\times}10^{-2}\)
& \(0.000\)
& \(7.472\)
& \(1.133{\times}10^{-1}\)
& \(0.10\) \\
Normal
& \textsc{SGDA-Positive}
& \(17.808\)
& \(0.000\)
& \(0.000\)
& \(1.000\)
& \(17.808\)
& \(2.176\)
& \(0.11\) \\
Normal
& \textsc{Projected-ALM}
& \(5.445\)
& \(1.684{\times}10^{-2}\)
& \(7.400{\times}10^{-3}\)
& \(0.187\)
& \(6.185\)
& \(1.073{\times}10^{-1}\)
& \(0.11\) \\
Normal
& \textsc{Residual-I}
& \(5.453\)
& \(8.182{\times}10^{-3}\)
& \(6.659{\times}10^{-4}\)
& \(0.727\)
& \(5.519\)
& \(1.111{\times}10^{-1}\)
& \(0.11\) \\
Normal
& \textsc{StoSQP}
& \(5.669\)
& \(8.523{\times}10^{-3}\)
& \(4.321{\times}10^{-3}\)
& \(0.680\)
& \(6.102\)
& \(1.640{\times}10^{-1}\)
& \(0.40\) \\
Normal
& \textsc{GDPA}
& \(\mathbf{5.294}\)
& \(4.680{\times}10^{-2}\)
& \(3.680{\times}10^{-2}\)
& \(0.000\)
& \(8.975\)
& \(8.270{\times}10^{-2}\)
& \(0.11\) \\
Normal
& \textsc{ConEx}
& \(5.457\)
& \(1.833{\times}10^{-2}\)
& \(8.350{\times}10^{-3}\)
& \(0.013\)
& \(6.292\)
& \(1.130{\times}10^{-1}\)
& \(0.11\) \\
Normal
& \textsc{StochBarrier}
& \(5.853\)
& \(1.126{\times}10^{-3}\)
& \(2.246{\times}10^{-4}\)
& \(0.967\)
& \(5.875\)
& \(2.097{\times}10^{-1}\)
& \(0.11\) \\
Normal
& \textsc{RCML-Adaptive}
& \(5.444\)
& \(7.335{\times}10^{-3}\)
& \(\mathbf{6.496{\times}10^{-4}}\)
& \(0.760\)
& \(\mathbf{5.509}\)
& \(1.064{\times}10^{-1}\)
& \(0.13\) \\
\midrule
Stress
& \textsc{Primal-only}
& \(0.000\)
& \(1.856\)
& \(1.836\)
& \(0.000\)
& \(183.608\)
& \(0.000\)
& \(0.08\) \\
Stress
& \textsc{SGDA-Signed}
& \(5.701\)
& \(3.254{\times}10^{-2}\)
& \(1.423{\times}10^{-2}\)
& \(0.227\)
& \(7.124\)
& \(1.167{\times}10^{-1}\)
& \(0.08\) \\
Stress
& \textsc{SGDA-Positive}
& \(17.808\)
& \(0.000\)
& \(0.000\)
& \(1.000\)
& \(17.808\)
& \(2.132\)
& \(0.08\) \\
Stress
& \textsc{Projected-ALM}
& \(5.713\)
& \(3.090{\times}10^{-2}\)
& \(1.323{\times}10^{-2}\)
& \(0.280\)
& \(7.035\)
& \(1.189{\times}10^{-1}\)
& \(0.08\) \\
Stress
& \textsc{Residual-I}
& \(5.692\)
& \(\mathbf{1.335{\times}10^{-2}}\)
& \(\mathbf{3.887{\times}10^{-4}}\)
& \(0.867\)
& \(\mathbf{5.731}\)
& \(1.137{\times}10^{-1}\)
& \(0.09\) \\
Stress
& \textsc{StoSQP}
& \(6.107\)
& \(1.839{\times}10^{-2}\)
& \(8.186{\times}10^{-3}\)
& \(0.607\)
& \(6.926\)
& \(2.175{\times}10^{-1}\)
& \(0.38\) \\
Stress
& \textsc{GDPA}
& \(\mathbf{5.539}\)
& \(4.883{\times}10^{-2}\)
& \(2.883{\times}10^{-2}\)
& \(0.000\)
& \(8.422\)
& \(8.731{\times}10^{-2}\)
& \(0.09\) \\
Stress
& \textsc{ConEx}
& \(5.701\)
& \(2.765{\times}10^{-2}\)
& \(1.013{\times}10^{-2}\)
& \(0.367\)
& \(6.714\)
& \(1.166{\times}10^{-1}\)
& \(0.09\) \\
Stress
& \textsc{StochBarrier}
& \(6.702\)
& \(0.000\)
& \(0.000\)
& \(1.000\)
& \(6.702\)
& \(3.477{\times}10^{-1}\)
& \(0.09\) \\
Stress
& \textsc{RCML-Adaptive}
& \(5.695\)
& \(1.447{\times}10^{-2}\)
& \(1.430{\times}10^{-3}\)
& \(0.727\)
& \(5.838\)
& \(1.138{\times}10^{-1}\)
& \(0.11\) \\
\bottomrule
\end{tabular}
\end{adjustbox}
\end{table}

Table~\ref{tab:exp2_energy_results} shows distinct tolerance-aware behaviors.
\textsc{Primal-only} has the lowest cost but violates reserve constraints.
\textsc{SGDA-Positive} and \textsc{StochBarrier} achieve high feasibility but
with larger over-reserve and higher cost. \textsc{GDPA} obtains the lowest cost
in both regimes, but its tolerance-feasible rate is zero. In the normal regime,
\textsc{RCML-Adaptive} gives the lowest tolerance-aware score among the
nontrivial methods. In the stress regime, \textsc{Residual-I} gives the lowest
tolerance-aware score, while \textsc{RCML-Adaptive} remains close with lower cost
than conservative feasible baselines.

\FloatBarrier

\subsection{Experiment 3: Reliable Solution Discovery in Nonconvex Pricing-Inventory Allocation}
\label{subsec:exp_nonconvex_pricing_inventory}

Experiment~3 evaluates reliable solution discovery in a nonconvex stochastic
pricing-inventory allocation problem. The decision variables are the inventory
allocation \(x_i\) and price \(r_i\) for each item. Demand depends nonlinearly on
price and stochastic demand shocks, and the constraint requires expected shortage
to remain below a prescribed tolerance level.

The purpose of this experiment is to test whether each method can reliably
discover admissible high-profit solutions under stochastic feedback and random
initialization. For each method and scenario, we run multiple seed--initialization
pairs. A run is called reliable if its tail tolerance-feasible rate is at least
\(0.9\). We report reliable rate, number of reliable runs, mean and best profit
over reliable runs, reliable violation, reliable overstock, and all-run average
profit.

\begin{table}[H]
\centering
\caption{
Reliable solution discovery in nonconvex stochastic pricing-inventory allocation.
ReliableRate is the fraction of seed--initialization runs whose tail tolerance-feasible rate is at least \(0.9\).
MeanRelProfit, BestRelProfit, RelViol., and RelOverstock are computed only over reliable runs.
AllProfit is averaged over all runs, including unreliable ones.
}
\label{tab:exp3_nonconvex_reliable_discovery}
\footnotesize
\setlength{\tabcolsep}{2.6pt}
\renewcommand{\arraystretch}{\exptabstretch}
\begin{adjustbox}{max width=\textwidth,center}
\begin{tabular}{ccccccccc}
\toprule
Setting
& Method
& ReliableRate
& Runs
& MeanRelProfit
& BestRelProfit
& RelViol.
& RelOverstock
& AllProfit \\
\midrule
Normal
& \textsc{Primal-only}
& \(0.000\)
& \(0/25\)
& --
& --
& --
& --
& \(50.0297\) \\
Normal
& \textsc{SGDA-Signed}
& \(0.920\)
& \(23/25\)
& \(50.0897\)
& \(52.2016\)
& \(7.865{\times}10^{-3}\)
& \(8.619{\times}10^{-3}\)
& \(50.0144\) \\
Normal
& \textsc{SGDA-Positive}
& \(0.920\)
& \(23/25\)
& \(50.0817\)
& \(52.1975\)
& \(7.480{\times}10^{-3}\)
& \(9.900{\times}10^{-3}\)
& \(50.0067\) \\
Normal
& \textsc{Projected-ALM}
& \(1.000\)
& \(25/25\)
& \(50.0060\)
& \(52.1763\)
& \(3.886{\times}10^{-3}\)
& \(8.155{\times}10^{-3}\)
& \(50.0060\) \\
Normal
& \textsc{Residual-I}
& \(1.000\)
& \(25/25\)
& \(50.0049\)
& \(52.1782\)
& \(2.011{\times}10^{-3}\)
& \(8.102{\times}10^{-3}\)
& \(50.0049\) \\
Normal
& \textsc{RCML-Adaptive}
& \(1.000\)
& \(25/25\)
& \(50.0058\)
& \(52.1790\)
& \(2.337{\times}10^{-3}\)
& \(\mathbf{7.976{\times}10^{-3}}\)
& \(50.0058\) \\
Normal
& \textsc{StoSQP}
& \(0.200\)
& \(5/25\)
& \(49.7684\)
& \(50.1769\)
& \(7.085{\times}10^{-3}\)
& \(1.017{\times}10^{-2}\)
& \(49.9650\) \\
Normal
& \textsc{GDPA}
& \(0.600\)
& \(15/25\)
& \(50.1948\)
& \(52.1967\)
& \(9.589{\times}10^{-3}\)
& \(8.144{\times}10^{-3}\)
& \(50.0210\) \\
Normal
& \textsc{ConEx}
& \(0.920\)
& \(23/25\)
& \(50.0908\)
& \(52.2007\)
& \(7.899{\times}10^{-3}\)
& \(8.444{\times}10^{-3}\)
& \(50.0155\) \\
Normal
& \textsc{StochBarrier}
& \(1.000\)
& \(25/25\)
& \(46.7575\)
& \(50.2614\)
& \(2.228{\times}10^{-3}\)
& \(2.341{\times}10^{-2}\)
& \(46.7575\) \\
\midrule
Stress
& \textsc{Primal-only}
& \(0.000\)
& \(0/25\)
& --
& --
& --
& --
& \(50.6661\) \\
Stress
& \textsc{SGDA-Signed}
& \(0.000\)
& \(0/25\)
& --
& --
& --
& --
& \(50.4946\) \\
Stress
& \textsc{SGDA-Positive}
& \(0.000\)
& \(0/25\)
& --
& --
& --
& --
& \(50.4849\) \\
Stress
& \textsc{Projected-ALM}
& \(1.000\)
& \(25/25\)
& \(48.3833\)
& \(50.7371\)
& \(8.883{\times}10^{-3}\)
& \(1.246{\times}10^{-3}\)
& \(48.3833\) \\
Stress
& \textsc{Residual-I}
& \(1.000\)
& \(25/25\)
& \(49.4066\)
& \(51.7168\)
& \(1.621{\times}10^{-2}\)
& \(5.550{\times}10^{-5}\)
& \(49.4066\) \\
Stress
& \textsc{RCML-Adaptive}
& \(1.000\)
& \(25/25\)
& \(\mathbf{49.6109}\)
& \(\mathbf{51.8196}\)
& \(1.833{\times}10^{-2}\)
& \(\mathbf{2.222{\times}10^{-5}}\)
& \(49.6109\) \\
Stress
& \textsc{StoSQP}
& \(0.560\)
& \(14/25\)
& \(47.5305\)
& \(49.4934\)
& \(1.355{\times}10^{-2}\)
& \(5.530{\times}10^{-3}\)
& \(47.3098\) \\
Stress
& \textsc{GDPA}
& \(0.000\)
& \(0/25\)
& --
& --
& --
& --
& \(50.5476\) \\
Stress
& \textsc{ConEx}
& \(0.000\)
& \(0/25\)
& --
& --
& --
& --
& \(50.5015\) \\
Stress
& \textsc{StochBarrier}
& \(1.000\)
& \(25/25\)
& \(44.9962\)
& \(47.5309\)
& \(3.066{\times}10^{-3}\)
& \(1.187{\times}10^{-2}\)
& \(44.9962\) \\
\bottomrule
\end{tabular}
\end{adjustbox}
\end{table}

Table~\ref{tab:exp3_nonconvex_reliable_discovery} shows that the normal regime is
less discriminative: several methods obtain high reliable rates. In the stress
regime, raw violation-based methods, \textsc{GDPA}, and \textsc{ConEx} have zero
reliable runs, whereas projected-pressure and residual-controlled methods remain
reliable. Among the fully reliable methods, \textsc{RCML-Adaptive} has the
highest mean reliable profit under stress, while \textsc{StochBarrier} is more
conservative. These observations are empirical and do not imply global nonconvex
convergence.

\FloatBarrier

\subsection{Experiment 4: Large-Scale Fair Ranking under Exposure Constraints}
\label{subsec:exp4_fair_ranking}

Experiment~4 evaluates whether residual-controlled multiplier updates can be used
as an in-processing constrained training method for neural fair ranking. Each
query contains candidate items with features and binary group attributes. A
neural ranker assigns a score to each item and is trained with a ListNet-type
ranking loss. The constraint controls the soft exposure gap between the two
groups.

Each query contains a set of candidate items with feature vectors and group
attributes. A neural ranker \(s_\theta(q,i)\) assigns a score to each item. The
ranking loss is denoted by $\mathcal L_{\rm rank}(\theta).$
Let \(E_0(\theta)\) and \(E_1(\theta)\) denote the average soft exposure assigned
to the two groups under a differentiable softmax ranking distribution. The soft
exposure gap is $g(\theta)=|E_0(\theta)-E_1(\theta)|.$
The fairness constraint is $c(\theta)=g(\theta)-\epsilon \le 0.$
In the reported experiment, evaluation uses an admissible tolerance band $g(\theta)\le \epsilon_{\rm eval}+\delta_{\rm tol}=0.015.$
For residual-controlled methods, a stricter internal training threshold is used
to absorb mini-batch noise, while all reported metrics are evaluated using the
common admissible threshold.

\begin{figure}[H]
    \centering
    \includegraphics[width=0.8\textwidth]{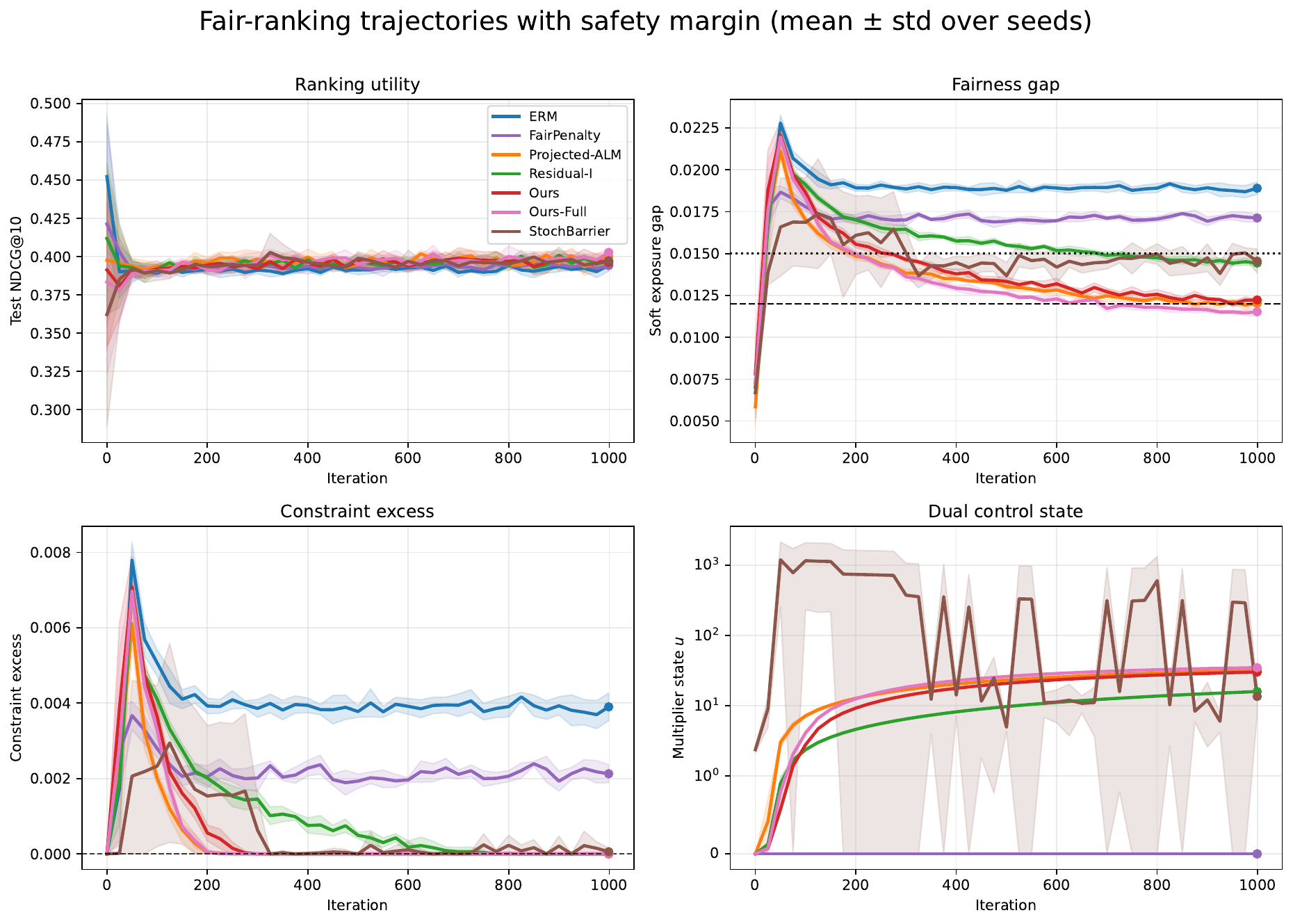}
    \caption{
    Large-scale fair-ranking training trajectories.
    Curves show mean \(\pm\) standard deviation over random seeds.
    The dashed line in the fairness-gap panel denotes the target exposure
    threshold, and the dotted line denotes the admissible evaluation threshold
    \(0.015\). SGDA-Signed and SGDA-Positive are omitted from the trajectory plot
    for readability but are included in Table~\ref{tab:exp4_fair_ranking}.
    }
    \label{fig:exp4_fair_ranking_traj}
\end{figure}

\begin{table}[H]
\centering
\caption{
Large-scale fair ranking under exposure constraints.
Evaluation uses admissible soft exposure gap \(0.015\).
All statistics are computed over random seeds at the final evaluation point.
}
\label{tab:exp4_fair_ranking}
\footnotesize
\setlength{\tabcolsep}{3.0pt}
\renewcommand{\arraystretch}{\exptabstretch}
\begin{adjustbox}{max width=\textwidth,center}
\begin{tabular}{cccccc}
\toprule
Method
& NDCG@10
& Soft gap
& TolExcess
& Dual state \(u\)
& Runtime (s) \\
\midrule
\textsc{ERM}
& \(0.3944 \pm 0.0039\)
& \(0.0189 \pm 0.0004\)
& \(3.901{\times}10^{-3} \pm 3.672{\times}10^{-4}\)
& \(0.0000 \pm 0.0000\)
& \(7.64 \pm 2.17\) \\
\textsc{FairPenalty}
& \(0.3942 \pm 0.0020\)
& \(0.0171 \pm 0.0002\)
& \(2.126{\times}10^{-3} \pm 2.398{\times}10^{-4}\)
& \(0.0000 \pm 0.0000\)
& \(8.67 \pm 1.81\) \\
\textsc{SGDA-Signed}
& \(0.3948 \pm 0.0037\)
& \(0.0183 \pm 0.0001\)
& \(3.304{\times}10^{-3} \pm 1.263{\times}10^{-4}\)
& \(1.3558 \pm 0.0100\)
& \(7.98 \pm 1.54\) \\
\textsc{SGDA-Positive}
& \(0.3944 \pm 0.0036\)
& \(0.0182 \pm 0.0001\)
& \(3.211{\times}10^{-3} \pm 1.308{\times}10^{-4}\)
& \(1.3435 \pm 0.0076\)
& \(7.96 \pm 1.61\) \\
\textsc{Projected-ALM}
& \(0.4002 \pm 0.0025\)
& \(0.0120 \pm 0.0001\)
& \(0.000{\times}10^{0} \pm 0.000{\times}10^{0}\)
& \(31.5923 \pm 0.1286\)
& \(8.29 \pm 1.76\) \\
\textsc{Residual-I}
& \(0.3971 \pm 0.0020\)
& \(0.0144 \pm 0.0001\)
& \(0.000{\times}10^{0} \pm 0.000{\times}10^{0}\)
& \(15.8298 \pm 0.1247\)
& \(8.66 \pm 2.27\) \\
\textsc{RCML-Adaptive}
& \(0.3962 \pm 0.0043\)
& \(0.0122 \pm 0.0001\)
& \(0.000{\times}10^{0} \pm 0.000{\times}10^{0}\)
& \(19.9738 \pm 0.1614\)
& \(8.25 \pm 2.72\) \\
\textsc{RCML-Robust}
& \(\mathbf{0.4028 \pm 0.0024}\)
& \(0.0115 \pm 0.0003\)
& \(0.000{\times}10^{0} \pm 0.000{\times}10^{0}\)
& \(14.6091 \pm 0.1905\)
& \(8.29 \pm 2.20\) \\
\textsc{StochBarrier}
& \(0.3973 \pm 0.0030\)
& \(0.0145 \pm 0.0007\)
& \(5.660{\times}10^{-5} \pm 9.761{\times}10^{-5}\)
& \(13.4958 \pm 6.2623\)
& \(8.94 \pm 2.29\) \\
\bottomrule
\end{tabular}
\end{adjustbox}
\end{table}

Figure~\ref{fig:exp4_fair_ranking_traj} and
Table~\ref{tab:exp4_fair_ranking} show that unconstrained and penalty-based
training retain positive tolerance excess. The SGDA variants also remain outside
the admissible exposure band. Projected-pressure and residual-controlled methods
satisfy the admissible exposure tolerance. Among feasible methods,
\textsc{RCML-Robust} gives the highest NDCG@10, while \textsc{Residual-I} uses a
smaller dual state than \textsc{Projected-ALM}.

\FloatBarrier

\subsection{Summary of Empirical Findings}
\label{subsec:exp_summary}

The experiments give four main observations. First, projected pressure formation
improves feasibility relative to raw violation-based multiplier updates. Second,
finite-gain residual memory substantially reduces multiplier variation compared
with full projected replacement. Third, filtering and adaptive scaling act on
different channels: filtering reduces measurement noise at the cost of lag,
whereas adaptive scaling is useful under heterogeneous constraint magnitudes.
Fourth, in allocation, nonconvex discovery, and fair-ranking tasks, the
residual-controlled variants avoid either infeasible low-cost solutions or overly
conservative feasible solutions. These findings are empirical and should be read
together with the theoretical scope in Section~\ref{sec:theory}: the strict
convergence result applies to the convex-affine backbone, while the optional
stabilization modules are analyzed as perturbations of the backbone feedback
loop.

\section{Conclusion}
\label{sec:conclusion}

This paper addresses the multiplier updating bottleneck in stochastic constrained optimization by proposing a residual control signal that separates instantaneous projected pressure from internal memory updates. Based on this signal, we construct the RCML algorithmic framework, establish its finite-gain convergence and stochastic bounds through theoretical analysis, and empirically validate its effectiveness. However, the proposed methodology has some limitations. First, the stochastic stabilization modules introduce additional hyperparameters, and the constraint filtering mechanism fundamentally trades tracking lag for variance reduction, which may lead to transient constraint breaches under rapid distribution shifts. Second, the algorithm lacks an active mechanism to avoid nonsmooth projection boundaries (kinks); thus, achieving the sharper \(O(1/B)\) noise floor passively relies on the local stability of the active set. Future work could improve upon this by developing hyperparameter-free adaptive residual controllers or incorporating margin-aware mechanisms that actively stabilize the trajectory near nonsmooth boundaries without sacrificing response speed.

\bibliography{VM}
\bibliographystyle{tmlr}
\newpage

\appendix
\section{Hyperparameter Roles and Tuning Guidelines for RCML}
\label{subsec:hyperparameter_guidelines}

Table
\ref{tab:hyperparameter_guidelines} summarizes the practical hyperparameter tuning
guidelines.
Some are standard in stochastic primal--dual
optimization, some belong to the projected-pressure residual mechanism, and some
are only used when optional stabilization modules are activated. 

\begingroup
\footnotesize
\setlength{\tabcolsep}{3.0pt}
\renewcommand{\arraystretch}{1.08}

\begin{longtable}{
		C{0.11\textwidth}
		C{0.16\textwidth}
		C{0.29\textwidth}
		C{0.12\textwidth}
		C{0.25\textwidth}
	}
	\caption{
		Roles and tuning guidelines of the main hyperparameters.
		``Origin'' indicates whether a parameter belongs to the stochastic primal--dual
		interface, the projected-pressure residual mechanism, or an optional RCML
		stabilization module.
	}
	\label{tab:hyperparameter_guidelines}
	\\
	\toprule
	\textbf{Parameter}
	&
	\textbf{Origin}
	&
	\textbf{Role}
	&
	\textbf{Tune?}
	&
	\textbf{Practical guideline}
	\\
	\midrule
	\endfirsthead
	
	\caption[]{Roles and tuning guidelines of the main hyperparameters. Continued.}
	\\
	\toprule
	\textbf{Parameter}
	&
	\textbf{Origin}
	&
	\textbf{Role}
	&
	\textbf{Tune?}
	&
	\textbf{Practical guideline}
	\\
	\midrule
	\endhead
	
	\midrule
	\multicolumn{5}{r}{Continued on next page}
	\\
	\endfoot
	
	\bottomrule
	\endlastfoot
	
	\(\alpha_k\)
	&
	Primal--dual interface
	&
	Primal stepsize for updating \(x_k\). It controls objective descent and the
	sensitivity of the primal step to multiplier pressure.
	&
	Yes
	&
	Tune as in standard stochastic optimization. Use smaller values when constraint
	gradients are noisy or when the projected pressure \(\lambda_k\) is large.
	\\
	\midrule
	
	\(\eta_k\)
	&
	Primal--dual interface
	&
	Multiplier-memory stepsize in \eqref{eq:pd_template}. It controls how strongly
	the feedback signal \(s_k\) changes \(u_k\).
	&
	Yes
	&
	Tune jointly with \(\kappa_{\text{I}}\). For residual-integral variants, the effective
	gain \(\beta_k=\eta_k\kappa_{\text{I}}\) should satisfy \(0\le\beta_k\le1\).
	\\
	\midrule
	
	\(u_0\)
	&
	Primal--dual interface
	&
	Initial multiplier memory.
	&
	Usually no
	&
	Set \(u_0=0\) unless prior knowledge indicates that some constraints should be
	active at initialization.
	\\
	\midrule
	
	\(\boldsymbol\rho_k\)
	&
	Projected-pressure mechanism
	&
	Pressure-scale vector used in \eqref{eq:pressure_residual}. It controls how
	strongly each constraint signal is amplified before projection.
	&
	Yes
	&
	Use either a fixed isotropic scale \(\boldsymbol\rho_k=\rho_0\mathbf 1\) or an
	adaptive coordinatewise scale. Larger values enforce constraints more aggressively
	but may amplify noisy mini-batch estimates.
	\\
	\midrule
	
	\(\rho_0\)
	&
	Fixed pressure scale
	&
	Scalar scale used when the pressure scale is isotropic,
	\(\boldsymbol\rho_k=\rho_0\mathbf 1\).
	&
	Yes
	&
	Tune together with \(\alpha_k\). Larger \(\rho_0\) increases feasibility pressure
	but also increases sensitivity to constraint noise.
	\\
	\midrule
	
	\(\rho_{k,i}\)
	&
	Adaptive pressure scaling
	&
	The \(i\)-th coordinate of the adaptive pressure-scale vector
	\(\boldsymbol\rho_k\).
	&
	Usually no after rule is fixed
	&
	Computed by \eqref{eq:adaptive_scale}. Tune the base scale and clipping bounds
	rather than each coordinate separately.
	\\
	\midrule
	
	\(\kappa_{\text{I}}\)
	&
	Residual memory tracking
	&
	Integral gain from the residual \(d_k\) to the memory signal \(s_k\). It controls
	finite-gain tracking speed.
	&
	Yes
	&
	Core RCML parameter. Small values produce smoother memory; large values approach
	projected ALM replacement. Tune through \(\beta_k=\eta_k\kappa_{\text{I}}\).
	\\
	\midrule
	
	\(\gamma_k\)
	&
	Constraint filtering
	&
	Smoothing gain in \eqref{eq:constraint_filter}.
	&
	Yes
	&
	Use \(\gamma_k=1\) for unfiltered feedback. Decrease it under high mini-batch
	noise, but avoid excessive lag.
	\\
	\midrule
	
	\(v_0\)
	&
	Adaptive scaling state
	&
	Initial second-moment state for constraint magnitudes.
	&
	No
	&
	Set \(v_0=0\). Bias correction in \(\widehat v_{k+1}\) reduces initialization
	effects.
	\\
	\midrule
	
	\(\eta_v\)
	&
	Adaptive pressure scaling
	&
	Second-moment update rate for the adaptive scale.
	&
	Low priority
	&
	Use a small to moderate value. Larger values adapt faster but may make
	\(\boldsymbol\rho_k\) fluctuate.
	\\
	\midrule
	
	\(\kappa_\rho\)
	&
	Adaptive pressure scaling
	&
	Base scale in \eqref{eq:adaptive_scale}.
	&
	Yes for adaptive variants
	&
	Controls the average level of \(\rho_{k,i}\). Tune together with
	\(\rho_{\min}\) and \(\rho_{\max}\).
	\\
	\midrule
	
	\(\rho_{\min},\rho_{\max}\)
	&
	Adaptive pressure scaling
	&
	Lower and upper clipping bounds for coordinatewise pressure scales.
	&
	Yes for adaptive variants
	&
	Use \(\rho_{\min}>0\) to avoid vanishing pressure scaling and \(\rho_{\max}\) to
	limit noise amplification.
	\\
	\midrule
	
	\(\epsilon\)
	&
	Adaptive pressure scaling
	&
	Numerical stabilizer in \(\sqrt{\widehat v_{k+1,i}+\epsilon}\).
	&
	Usually no
	&
	Use a small constant, such as \(10^{-8}\) or \(10^{-6}\), depending on numerical
	precision.
	\\
	\midrule
	
	\(\nu\)
	&
	Residual-\(\nu\)PI correction
	&
	Residual smoothing factor in \eqref{eq:residual_filter}.
	&
	Yes for robust variant
	&
	Larger values produce smoother but slower dynamic correction. Smaller values
	respond faster to changing residual trends.
	\\
	\midrule
	
	\(\kappa_{\text{P}}\)
	&
	Residual-\(\nu\)PI correction
	&
	Gain multiplying \(\xi_k-\xi_{k-1}\) in \eqref{eq:nupi_update}.
	&
	Yes for robust variant
	&
	Start from \(\kappa_{\text{P}}=0\) and increase gradually. Large values may accelerate
	response but can also amplify residual oscillations.
	\\
	\midrule
	
	\(\bar c_{-1},\xi_{-1}\)
	&
	Auxiliary states
	&
	Initial states for constraint filtering and residual smoothing.
	&
	No
	&
	Set to zero unless warm-start information is available.
	\\
	\midrule
	
	\(B_k\)
	&
	Stochastic estimation
	&
	Mini-batch size for estimating \(\hat g_k\), \(\hat c_k\), and
	\(\widehat J_{c,k}\).
	&
	Task-dependent
	&
	Larger batches reduce feedback noise but increase computation. RCML is designed
	for settings where full-batch constraints are expensive.
	\\
	
\end{longtable}
\endgroup

In practice, we tune RCML hierarchically. First, choose the primal stepsize
\(\alpha_k\) and a pressure scale. For the basic projected-pressure update, this
is usually a fixed isotropic scale \(\boldsymbol\rho_k=\rho_0\mathbf 1\).
Second, tune the effective residual tracking gain
\(\beta_k=\eta_k\kappa_{\text{I}}\), which determines how close the method is to full
projected ALM replacement. Third, enable constraint filtering by decreasing
\(\gamma_k\) when mini-batch constraint estimates are visibly noisy. Finally,
activate adaptive pressure scaling when constraints have heterogeneous magnitudes,
and activate residual-\(\nu\)PI correction when multiplier oscillations remain
after filtering.

This hierarchy also clarifies which parameters are intrinsic to RCML. The primal
stepsize \(\alpha_k\), multiplier-memory stepsize \(\eta_k\), initial multiplier
\(u_0\), and batch size \(B_k\) are standard components of stochastic primal--dual
learning. The projected-pressure scale \(\boldsymbol\rho_k\), residual signal
\(d_k\), and tracking gain \(\kappa_{\text{I}}\) define the RCML backbone. The parameters
\(\gamma_k\), \(\eta_v\), \(\kappa_\rho\), \(\rho_{\min}\), \(\rho_{\max}\),
\(\epsilon\), \(\nu\), and \(\kappa_{\text{P}}\) belong to optional stabilization modules.
Therefore, the minimal RCML implementation only requires tuning
\(\alpha_k\), \(\rho_0\), and \(\beta_k=\eta_k\kappa_{\text{I}}\), while the additional
parameters are introduced only when filtering, adaptive scaling, or dynamic
correction is used.

% ============================================================
% Appendix material: place after \appendix in the final manuscript.
% If your manuscript has not yet called \appendix, uncomment the next line.
% \appendix
% ============================================================

\section{Additional Experimental Details}
\label{app:experimental_details}

This appendix summarizes the implementation details needed to reproduce the
experiments. All experiments were conducted on a workstation equipped with an
NVIDIA RTX 4060 Ti GPU and an Intel Core i5-13490F CPU. The fair-ranking
experiment uses PyTorch with CUDA when available, while the other experiments are
implemented using NumPy. The public repository contains one main script for each
experiment.

\subsection{Experiment 1 Details}
\label{app:exp1_details}

Experiment~1 uses synthetic stochastic LP, convex QP, and mildly nonconvex QP
tasks with \(d=30\) decision variables and \(m=30\) inequality constraints. For
each task, we generate \(2048\) objective samples and \(2048\) constraint samples.
The gradient mini-batch size is \(32\), and each method is evaluated over \(10\)
random seeds. The reliability tolerance is \(5\times10^{-2}\).

The stationary regime uses \(500\) iterations, a tail window of \(50\), constraint
batch size \(32\), and no additional observation noise. The high-noise regime
uses \(1500\) iterations, a tail window of \(150\), constraint batch size \(4\),
and additive constraint noise with standard deviation \(0.25\). The
heterogeneous-scale regime uses \(700\) iterations, a tail window of \(70\),
constraint batch size \(16\), additive noise with standard deviation \(0.05\),
and log-uniform positive rescaling of constraint channels. The active-set
switching regime uses \(500\) iterations, a switch at iteration \(250\), a tail
window of \(50\), constraint batch size \(16\), and additive constraint noise with
standard deviation \(0.07\). Before the switch, constraints are tightened to
accumulate multiplier memory; after the switch, they are relaxed to test
stale-memory release. The optional residual-\(\nu\)PI and \(\tau\)-gated variants
are used only in the active-set switching ablation and are not treated as default
RCML methods.

\subsection{Experiment 2 Details}
\label{app:exp2_details}

Experiment~2 uses a tolerance-aware stochastic energy-reserve allocation problem
with \(d=20\) decision variables and \(m=30\) constraints. The training and
testing sample sizes are \(30000\) and \(60000\), respectively. Each method is
evaluated over \(5\) random seeds for \(1600\) iterations. Tail metrics are
computed over the last \(300\) iterations, and evaluation is performed every
\(10\) iterations.

The normal-noise regime uses batch size \(B=256\), lognormal noise scale \(0.25\),
and tolerance \(\delta_{\rm tol}=10^{-2}\). The moderate-noise stress regime uses
batch size \(B=96\), lognormal noise scale \(0.30\), spike probability \(0.05\),
spike scale \(2.2\), spike fraction \(0.20\), and tolerance
\(\delta_{\rm tol}=2\times10^{-2}\). The tolerance-aware score uses
\(\omega_{\rm tol}=100\).

The common optimization parameters are primal learning rate \(5\times10^{-2}\),
multiplier-memory learning rate \(\eta_u=4\times10^{-2}\), fixed isotropic
pressure scale \(\rho_0=1.0\), residual smoothing factor \(\nu=0.65\), integral
gain \(\kappa_{\text{I}}=1.0\), and residual-\(\nu\)PI gain \(\kappa_{\text{P}}=0.05\). For the
selected \textsc{RCML-Adaptive} configuration, we use \(\gamma=0.70\),
\(\eta_v=0.05\), \(\kappa_\rho=0.80\), \(\rho_{\min}=0.15\), and
\(\rho_{\max}=4.0\). The configuration sweep in
Table~\ref{tab:exp2_config_selection} varies \(\gamma\), \(\kappa_\rho\),
\(\rho_{\min}\), and the multiplier-memory learning rate \(\eta_u\).

\subsection{Experiment 3 Details}
\label{app:exp3_details}

Experiment~3 uses a nonconvex stochastic pricing-inventory allocation problem
with \(d=30\) items. For item \(i\), the decision variables are inventory \(x_i\)
and price \(r_i\). Demand follows an exponential price-response model with
stochastic shocks,
\[
D_i(r_i,\xi_i)
=
\bar D_i \exp[-a_i(r_i-r_i^0)]\xi_i + \mathrm{spike}_i .
\]
The objective is to maximize expected profit, equivalently to minimize negative
profit. The constraint requires expected shortage to remain below an
item-dependent tolerance.

The training and testing sample sizes are \(20000\) and \(25000\), respectively.
Each method is evaluated with \(5\) random seeds and \(5\) random initializations,
resulting in \(25\) seed--initialization runs per scenario. Each run uses \(650\)
iterations, with tail metrics computed over the last \(120\) iterations. A run is
considered reliable if its tail tolerance-feasible rate is at least \(0.9\).

The normal-noise regime uses batch size \(B=256\) and tolerance
\(\delta_{\rm tol}=1.5\times10^{-2}\). The moderate-noise stress regime uses
batch size \(B=128\) and tolerance \(\delta_{\rm tol}=3.0\times10^{-2}\). The
primal learning rates are \(4.0\times10^{-2}\) for inventory and
\(2.5\times10^{-2}\) for price. The multiplier-memory learning rate is
\(\eta_u=4.0\times10^{-2}\), and the fixed isotropic pressure scale is
\(\rho_0=10.0\). For \textsc{RCML-Adaptive}, we use \(\gamma=0.70\),
\(\eta_v=0.05\), \(\kappa_\rho=0.80\), \(\rho_{\min}=0.15\), and
\(\rho_{\max}=10.0\).

\subsection{Experiment 4 Details}
\label{app:exp4_details}

Experiment~4 uses a large-scale learning-to-rank dataset. The training
set contains \(10000\) queries and the test set contains \(3000\) queries. Each query contains \(30\) candidate items with \(24\)-dimensional features and a binary group attribute. The neural ranker is a two-hidden-layer multilayer perceptron with hidden dimension \(64\) and ReLU activations. The model is trained with AdamW for \(1000\) iterations using batch size \(128\), learning rate \(10^{-3}\), weight decay \(10^{-5}\), and gradient clipping threshold \(5.0\). Each method is evaluated over \(5\) random seeds.

The ranking loss is a ListNet-type loss. Soft exposure is computed using softmax temperature \(0.75\), while the target distribution uses temperature \(1.0\). The evaluation fairness threshold is \(\epsilon_{\rm eval}=0.012\), and the tolerance band is \(\delta_{\rm tol}=0.003\), giving the admissible exposure-gap threshold \(0.015\). For residual-controlled methods, the internal training threshold is
\(\epsilon_{\rm train}=0.010\). The reported utility metric is NDCG@10.

The multiplier-control parameters are multiplier-memory learning rate
\(\eta_u=0.16\), fixed isotropic pressure scale \(\rho_0=18.0\), projected-ALM
fixed isotropic pressure scale \(\rho_{0,\rm ALM}=10.0\), filtering parameter
\(\gamma=0.85\), adaptive scaling rate \(\eta_v=0.05\), adaptive base scale
\(\kappa_\rho=4.0\), clipping bounds \(\rho_{\min}=2.0\), \(\rho_{\max}=60.0\),
residual smoothing factor \(\nu=0.70\), integral gain \(\kappa_{\text{I}}=1.50\), and
residual-\(\nu\)PI gain \(\kappa_{\text{P}}=0.30\). The fixed fairness penalty baseline
uses penalty weight \(300.0\), and the stochastic barrier baseline uses barrier
parameter \(0.020\), barrier shift \(0.003\), and numerical barrier constant
\(10^{-5}\). The synthetic ranking data are generated with group-dependent
relevance bias, group-dependent feature shift, query-level group skew, and
additive noise, so that the unconstrained ERM solution violates the admissible
exposure band and activates multiplier-based constrained training.

\subsection{Additional Ablation Tables}
\label{app:additional_ablation_tables}

\begin{table}[H]
\centering
\caption{
Configuration selection for \textsc{RCML-Adaptive}.
The screening score is averaged over the two stochastic feedback regimes; lower is better.
}
\label{tab:exp2_config_selection}
\footnotesize
\setlength{\tabcolsep}{\exptabcolsep}
\renewcommand{\arraystretch}{\exptabstretch}
\begin{adjustbox}{max width=\textwidth,center}
\begin{tabular}{ccccccccc}
\toprule
Variant
& \(\gamma\)
& \(\kappa_\rho\)
& \(\rho_{\min}\)
& \(\eta_u\)
& Score
& TolFeas
& TA
& Cost \\
\midrule
\textsc{RCML-Adaptive-lowlag}
& \(0.70\)
& \(0.80\)
& \(0.15\)
& \(0.04\)
& \(\mathbf{6.098}\)
& \(\mathbf{0.743}\)
& \(\mathbf{5.673}\)
& \(5.569\) \\
\textsc{RCML-Adaptive-strong}
& \(0.50\)
& \(1.00\)
& \(0.15\)
& \(0.04\)
& \(6.298\)
& \(0.707\)
& \(5.707\)
& \(5.567\) \\
\textsc{RCML-Adaptive-mild}
& \(0.50\)
& \(0.80\)
& \(0.10\)
& \(0.04\)
& \(6.298\)
& \(0.707\)
& \(5.707\)
& \(5.567\) \\
\textsc{RCML-Adaptive-fastdual}
& \(0.60\)
& \(0.80\)
& \(0.20\)
& \(0.05\)
& \(6.511\)
& \(0.683\)
& \(5.719\)
& \(5.568\) \\
\textsc{RCML-Adaptive-default}
& \(0.35\)
& \(0.65\)
& \(0.10\)
& \(0.04\)
& \(6.561\)
& \(0.677\)
& \(5.736\)
& \(5.566\) \\
\bottomrule
\end{tabular}
\end{adjustbox}
\end{table}

\begin{table}[H]
\centering
\caption{
Optional dynamic extensions under active-set switching.
Constraints are tightened before the switch and relaxed after the switch.
PostDualTV is the post-switch average of \(\|u_{k+1}-u_k\|_2\), and ExcessMem is the post-switch average of \(\|u_k\|_2/(1+\|u_{k_s}\|_2)\).
}
\label{tab:exp1_optional_extensions}
\footnotesize
\setlength{\tabcolsep}{\exptabcolsep}
\renewcommand{\arraystretch}{\exptabstretch}
\begin{adjustbox}{max width=\textwidth,center}
\begin{tabular}{ccccccc}
\toprule
Problem & Method & PostObj & PostViol & PostDualTV & ExcessMem & Runtime \\
\midrule
LP & \textsc{RCML-Adaptive}
& \(-1.061{\times}10^{1}\)
& \(2.430{\times}10^{-2}\)
& \(4.232{\times}10^{-3}\)
& \(2.785{\times}10^{-1}\)
& \(1.09{\times}10^{-1}\) \\
LP & \textsc{RCML-Adaptive-}\(\tau\)
& \(-1.049{\times}10^{1}\)
& \(2.437{\times}10^{-2}\)
& \(4.104{\times}10^{-3}\)
& \(2.661{\times}10^{-1}\)
& \(1.04{\times}10^{-1}\) \\
LP & \textsc{Residual-}\(\nu\)\textsc{PI}
& \(-1.062{\times}10^{1}\)
& \(2.423{\times}10^{-2}\)
& \(5.099{\times}10^{-3}\)
& \(2.786{\times}10^{-1}\)
& \(1.10{\times}10^{-1}\) \\
\midrule
QP & \textsc{RCML-Adaptive}
& \(5.662{\times}10^{-1}\)
& \(3.745{\times}10^{-3}\)
& \(1.793{\times}10^{-3}\)
& \(9.574{\times}10^{-2}\)
& \(1.18{\times}10^{-1}\) \\
QP & \textsc{RCML-Adaptive-}\(\tau\)
& \(6.238{\times}10^{-1}\)
& \(3.236{\times}10^{-3}\)
& \(1.684{\times}10^{-3}\)
& \(9.237{\times}10^{-2}\)
& \(1.17{\times}10^{-1}\) \\
QP & \textsc{Residual-}\(\nu\)\textsc{PI}
& \(5.664{\times}10^{-1}\)
& \(3.731{\times}10^{-3}\)
& \(2.090{\times}10^{-3}\)
& \(9.570{\times}10^{-2}\)
& \(1.18{\times}10^{-1}\) \\
\midrule
NCVQP & \textsc{RCML-Adaptive}
& \(-1.110{\times}10^{-1}\)
& \(0.000\)
& \(5.149{\times}10^{-4}\)
& \(6.328{\times}10^{-2}\)
& \(8.53{\times}10^{-2}\) \\
NCVQP & \textsc{RCML-Adaptive-}\(\tau\)
& \(-1.454{\times}10^{-1}\)
& \(0.000\)
& \(5.244{\times}10^{-4}\)
& \(6.428{\times}10^{-2}\)
& \(8.39{\times}10^{-2}\) \\
NCVQP & \textsc{Residual-}\(\nu\)\textsc{PI}
& \(-1.283{\times}10^{-1}\)
& \(0.000\)
& \(5.202{\times}10^{-4}\)
& \(6.362{\times}10^{-2}\)
& \(8.73{\times}10^{-2}\) \\
\bottomrule
\end{tabular}
\end{adjustbox}
\end{table}

\end{document}